\newcommand{\norm}[1]{\left\lVert#1\right\rVert}
\newcommand{\br}[1]{\left(#1\right)}
\newcommand{\ones}{\mathbf{1}}
\newcommand{\zeros}{\mathbf{0}}
\newcommand{\diag}{\textnormal{diag}}
\long\def\comment#1{}
\newtheorem{observation}{Observation}
\def\argmin{\textnormal{arg} \min}
\newcommand{\inner}[2]{\langle#1, #2\rangle}
\newcommand{\one}{\textbf{1}}
\def\RR{\mathbb{R}}
\def\xX{\mathbf{x}}
\newcommand{\argmax}{\mathop{\arg \max}}
\newcommand{\ba}{\begin{array}}
\newcommand{\ea}{\end{array}}
\newcommand{\defeq}{\vcentcolon=}
\newcommand{\va}{\mathbf{a}}
\newcommand{\vb}{\mathbf{b}}
\newcommand{\vc}{\mathbf{c}}
\newcommand{\vd}{\mathbf{d}}
\newcommand{\ve}{\mathbf{e}}
\newcommand{\vg}{\mathbf{g}}
\newcommand{\vh}{\mathbf{h}}
\newcommand{\vp}{\mathbf{p}}
\newcommand{\vq}{\mathbf{q}}
\newcommand{\vr}{\mathbf{r}}
\newcommand{\vs}{\mathbf{s}}
\newcommand{\vt}{\mathbf{t}}
\newcommand{\vu}{\mathbf{u}}
\newcommand{\vv}{\mathbf{v}}
\newcommand{\vw}{\mathbf{w}}
\newcommand{\vx}{\mathbf{x}}
\newcommand{\vy}{\mathbf{y}}
\newcommand{\vz}{\mathbf{z}}
\newcommand{\vA}{\mathbf{A}}
\newcommand{\vC}{\mathbf{C}}
\newcommand{\vI}{\mathbf{I}}
\newcommand{\vR}{\mathbf{R}}
\newcommand{\vS}{\mathbf{S}}
\newcommand{\vT}{\mathbf{T}}
\newcommand{\vX}{\mathbf{X}}
\newcommand{\vY}{\mathbf{Y}}
\newcommand{\vecflatten}{\text{vec}}
\theoremstyle{plain}
\newtheorem{theorem}{Theorem}
\newtheorem{lemma}[theorem]{Lemma}
\newtheorem{corollary}[theorem]{Corollary}
\theoremstyle{definition}
\newtheorem{definition}[theorem]{Definition}
\newtheorem{assumption}[theorem]{Assumption}
\theoremstyle{remark}
\newtheorem{remark}[theorem]{Remark}
\title{On Partial Optimal Transport: Revising the Infeasibility of Sinkhorn and Efficient Gradient Methods}
\author{
    Anh Duc Nguyen\textsuperscript{\rm 1},
    Tuan Dung Nguyen\textsuperscript{\rm 2},
     Quang Minh Nguyen\textsuperscript{\rm 3},
    Hoang H. Nguyen\textsuperscript{\rm 4}, \\
    Lam M. Nguyen\textsuperscript{\rm 5},
    Kim-Chuan Toh\textsuperscript{\rm 1, 6}
}
\begin{document}
\maketitle
\begin{abstract}
This paper studies the Partial Optimal Transport (POT) problem between two unbalanced measures with at most $n$ supports and its applications in various AI tasks such as color transfer or domain adaptation. There is hence the need for fast approximations of POT with increasingly large problem sizes in arising applications. We first theoretically and experimentally investigate the infeasibility of the state-of-the-art Sinkhorn algorithm for POT due to its incompatible rounding procedure, which consequently degrades its qualitative performance in real world applications like point-cloud registration. To this end, we propose a novel rounding algorithm for POT, and then provide a feasible Sinkhorn procedure with a revised computation complexity of $\mathcal{\widetilde O}(n^2/\varepsilon^4)$. Our rounding algorithm also permits the development of two first-order methods to approximate the POT problem. The first algorithm, Adaptive Primal-Dual Accelerated Gradient Descent (APDAGD), finds an $\varepsilon$-approximate solution to the POT problem in $\mathcal{\widetilde O}(n^{2.5}/\varepsilon)$, which is better in $\varepsilon$ than revised Sinkhorn. The second method, Dual Extrapolation, achieves the computation complexity of $\mathcal{\widetilde O}(n^2/\varepsilon)$, thereby being the best in the literature. We further demonstrate the flexibility of POT compared to standard OT as well as the practicality of our algorithms on real applications where two marginal distributions are unbalanced.
\end{abstract}

\section{Introduction}
Optimal Transport (OT) \cite{Villani-09-Optimal, Kantorovich-1942-Translocation}, which seeks a minimum-cost coupling between two balanced measures, is a well-studied topic in mathematics and operations research. With the introduction of entropic regularization  \cite{Cuturi-2013-Sinkhorn}, the scalability and speed of OT computation have been significantly improved, facilitating its widespread applications in machine learning such as domain adaptation \citep{Courty-2017-Optimal}, and dictionary learning \citep{pmlr-v51-rolet16}. 
However, OT has a stringent requirement that the input measures must have equal total masses  \citep{Chizat_2015}, hindering its practicality in various other machine learning applications, which require an optimal matching between two measures with unbalanced masses, such as averaging of neuroimaging data \citep{Gramfort_2015} and image classification \citep{Pele2008ALT, rubner2000earth}.

In response to such limitations, Partial Optimal Transport (POT), which explicitly constrains the mass to be transported between two unbalanced measures, was proposed. It has been studied from the perspective of partial differential equations by theorists \cite{figalli2010optimal, Caffarelli_2010}. Practically, the relaxation of the marginal constraints, which are strictly imposed by the standard OT, and the control over the total mass transported grant POT immense flexibility compared to OT \citep{Chapel-nips2020} and more robustness to outliers \citep{nhatho-mmpot}. POT has been deployed in various recent AI applications such as color transfer \citep{Bonneel_2019}, graph neural networks \citep{Sarlin_2019}, graph matching \cite{Liu_2020},  partial covering \citep{Kawano_2021}, point set registration \citep{Wang_2022}, and robust estimation \citep{nietert2023robust}.   


Despite its potential applicability, POT still suffers from the computation bottleneck, whereby the more intricate structural constraints imposed on admissible couplings have hindered the direct adaptation of any efficient OT solver in the literature. Currently, the literature \citep{Chapel-nips2020, nhatho-mmpot} relies on reformulating POT into an extended OT problem under additional assumptions on the input masses, which can then be solved via existing OT methods, and finally retrieves an admissible POT coupling from the solution to the extended OT problem. This approach has two fundamental drawbacks. First, in the reformulated OT problem, the maximum entry of the extended cost matrix is increased \citep[Proposition 1]{Chapel-nips2020}, which will always worsen the computational complexity since most efficient algorithms for standard OT \citep{Dvurechensky-2018-Computational, lin2019efficient, guminov2021accelerated} depend on this maximum entry in their complexities. 
Second, we discover, more details in the Revisiting Sinkhorn section, that although Sinkhorn for POT proposed by \citep{nhatho-mmpot} achieves the best known complexity of $\mathcal{\widetilde O}(n^2/\varepsilon^2)$, it in fact always outputs a \emph{strictly infeasible} solution to the POT problem. In brief, by discarding the last row and column of the reformulated OT solution obtained by Sinkhorn, the POT solution, transportation matrix $\vX$, will violate the equality constraint $\ones^\top \vX \ones = s$, which controls the total transported mass. Violating this equality constraint can degrade the results of practical applications in robust regimes such as point cloud registration \citep{qin2022rigid} and mini-batch OT \citep{nguyen2022improving} (refer to Remark \ref{remark:violation} and our Point Cloud Registration experiment). We theoretically justify the ungroundedness of Sinkhorn for POT in the Revisiting Sinkhorn section and empirically verify this claim in several applications in the Numerical Experiment section. Here, in Figure \ref{fig:Infeasible_Sinkhorn}, we specifically investigate Sinkhorn infeasibility in a color transfer example (detailed experimental setup in the later Numerical Experiment section). We show that the optimality gap produced by Sinkhorn is unable to reduce lower than $\varepsilon$, the tolerance of the problem (red line). In other words, Sinkhorn fails to produce an adequate $\varepsilon$-approximate POT solution. 

\begin{table*}[t]
    \centering
    \renewcommand\arraystretch{1.12}
    \begin{tabular}{c c c c}
        \toprule
        Algorithm & Regularizer & Cost per iteration & Iteration complexity \\
        \hline
        Iterative Bregman projections \cite{benamou2015iterative} & Entropic & Unspecified & Unspecified\\ 
        (Infeasible) Sinkhorn \citep{nhatho-mmpot} & Entropic & $\mathcal{O}(n^2)$ & $\widetilde{\mathcal{O}}(1/\varepsilon^2)$\\
        (Feasible) Sinkhorn (\textbf{This paper}) & Entropic & $\mathcal{O}\br{n^2}$ & $\widetilde{\mathcal{O}}\br{1/\varepsilon^4}$ \\
        APDAGD (\textbf{This paper}) & Entropic & $\mathcal{O}(n^2)$ & $\widetilde{\mathcal{O}}\br{\sqrt{n}/\varepsilon}$ \\
        Dual Extrapolation (\textbf{This paper}) & Area-convex & $\mathcal{O}(n^2)$ & $\widetilde{\mathcal{O}}\br{1/\varepsilon}$\\
        \bottomrule
        \end{tabular}
    \caption{Type of regularizers and orders of complexity for four algorithms for POT approximation.} \label{Order_of_complexities}
\end{table*}

\begin{figure}
    \centering
    \includegraphics[width=0.8\linewidth]{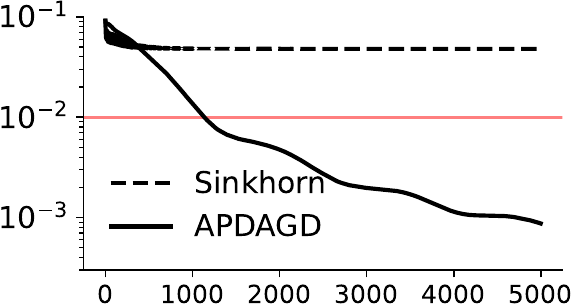}
    \caption{Primal optimality gap against optimization rounds achieved by Sinkhorn \cite{nhatho-mmpot} and APDAGD for POT. The marginal distributions are taken from a color transfer application described in our Numerical Experiments section, and the red horizontal line depicts the pre-defined tolerance ($\varepsilon$) for both algorithms.}
    \label{fig:Infeasible_Sinkhorn}
\end{figure}

To the best of our knowledge, the invalidity of Sinkhorn means there is currently no efficient method for solving POT in the literature. We attribute this to the fact that while the equivalence between POT and extended OT holds at optimality, all efficient OT solvers instead only output an approximation of the optimum value before projecting it back to the feasible set. However, the well-known rounding algorithm by \citep{altschuler2017near}, which is specifically designed for OT, does not guarantee to respect the more intricate structural constraints of POT, resulting in the invalidity of Sinkhorn. Motivated by this challenge and the success of optimization literature for OT, we raise the following central question of this paper: 
\begin{center}
\emph{Can we design a rounding algorithm for POT and then utilize it to develop efficient algorithms for POT that even match the best-known complexities of those for OT?} 
\end{center}
We affirmatively answer this question and formally summarize our contributions as follows.

\begin{itemize}
    \item We theoretically and experimentally show the infeasibility of the state-of-the-art Sinkhorn algorithm for POT  due to its incompatible rounding algorithm. We propose a novel POT rounding procedure \textsc{Round-POT} (Rounding Algorithm Section), which projects an approximate solution onto the feasible POT set in $\mathcal{O}(n^2)$ time. 
    \item From our theoretical bounds of the Sinkhorn constraint violations and the newly introduced \textsc{Round-POT}, we provide a revised procedure for Sinkhorn which will return a feasible POT solution. We also establish the revised complexity of Sinkhorn for POT (Table \ref{Order_of_complexities}).
    \item Predicated on our novel dual formulation for entropic regularized POT objective, our proposed Adaptive Primal-Dual Accelerated Gradient Descent (APDAGD) algorithm for POT finds an $\varepsilon$-approximate solution in  $\mathcal{\widetilde O}(n^{2.5}/\varepsilon)$, which is better in $\varepsilon$ than the revised Sinkhorn.  
    Various experiments on synthetic and real datasets and with applications such as point cloud registration, color transfer, and domain adaptation illustrate not only our algorithms' favorable performance against the pre-revised Sinkhorn but also the versatility of POT compared to OT. 
    \item Motivated by our novel rounding algorithm, we further reformulate the POT problem with $\ell_1$ penalization as a minimax problem and propose Dual Extrapolation (DE) framework for POT. We prove that DE algorithm can theoretically achieve $\mathcal{\widetilde O}(n^{2}/\varepsilon)$ computational complexity, thereby being the best in the POT literature to the best of our knowledge (Table \ref{Order_of_complexities}). 
\end{itemize}
\section{Preliminaries}

\subsection{Notation} 

The set of non-negative real numbers is $\RR_+$. We use bold capital font for matrices (e.g., $\vA$) and bold lowercase font for vectors (e.g., $\vx$). For an $m \times n$ matrix $\vX$, $\vecflatten(\vX)$ denotes the $(m n)$-dimensional vector obtained by concatenating the rows of $\vX$ and transposing the result. Entrywise multiplication and division for matrices and vectors are respectively denoted by $\odot$ and $\oslash$. For $1 \leq p \leq \infty$, let $\norm{\cdot}_p$ be the $\ell_p$-norm of matrix or vector. For matrices, $\norm{\cdot}_{p \rightarrow q}$ is the operator norm: $\norm{\vA}_{p \rightarrow q} = \sup_{\norm{\vx}_p = 1} \norm{\vA \vx}_q$. Three specific cases are considered in this paper: for $q \in \{1, 2, \infty\}$, $\norm{\vA}_{1 \rightarrow q}$ is the largest $\ell_q$ norm of any column of $\vA$. We use $\norm{\vA}_{\max}$ and $\norm{\vA}_{\min}$ to denote the maximum and minimum entries in absolute value of a matrix $\vA$, respectively. The $n$-vectors of zeros and of ones are respectively denoted by $\zeros_n$ and $\ones_n$. The $(n-1)$-dimensional probability simplex is $\Delta_n = \left\{ \vv \in \RR_+^n : \vv^\top \ones_n = 1 \right\}$. 

\subsection{Partial Optimal Transport} 




Consider two discrete distributions $\vr, \vc \in \RR_+^{n}$ with possibly different masses. POT seeks a transport plan $\vX \in \RR_{+}^{n \times n}$ which maps $\vr$ to $\vc$ at the lowest cost. Since the masses at two marginals may differ, only a total mass $s$ such that $0 \leq s \leq \min\{\norm{\vr}_1, \norm{\vc}_1\}$ is allowed to be transported \citep{Chapel-nips2020, nhatho-mmpot}. Formally, the POT problem is written as
\begin{align} \label{eq:pot_formulation}
  \mathbf{POT}(\vr, \vc, s) = \min \inner{\vC}{\vX} ~~\text{s.t.}~~ \vX \in \mathcal{U}(\vr, \vc, s),
\end{align}
where $\mathcal{U}(\vr, \vc, s)$ is defined as $$ \left\{ \mathbf{X} \in \RR_{+}^{n \times n} : \mathbf{X} \ones_n \leq \vr, \mathbf{X}^\top \ones_n \leq \vc, \ones_n^\top \mathbf{X} \ones_n = s \right\},$$ 
i.e. the feasible set for the transport map $\vX$ is and $\vC \in \RR^{n \times n}_+$ is a cost matrix.
The goal of this paper is to derive efficient algorithms to find an $\varepsilon$-approximate solution to $\mathbf{POT}(\vr, \vc, s)$, pursuant to the following definition.

\begin{definition}[$\varepsilon$-approximation]
For $\varepsilon \geq 0$, the matrix $\vX \in \RR_{+}^{n \times n} $ is an $\varepsilon$-approximate solution to $ \mathbf{POT}(\vr, \vc, s)$ if $\vX \in \mathcal{U}(\vr, \vc, s)$ and
\begin{equation*}
    \inner{\vC}{{\vX}} \leq \min \inner{\vC}{\vX'} + \varepsilon ~~ \text{s.t.} ~~ \vX' \in \mathcal{U}(\vr, \vc, s).
\end{equation*}
\end{definition}

To aid the algorithmic design in the following sections, we introduce two new slack variables $\vp, \vq \in \RR_+^{n}$ and equivalently express problem \eqref{eq:pot_formulation} as
\begin{align}
    \label{prob:pot_with_pq}
        &\min_{\vX \geq 0, \vp \geq 0, \vq \geq 0} \inner{\vC}{\vX} \\
        &\text{s.t. } \: \mathbf{X} \ones_n + \vp = \vr, \mathbf{X}^\top \ones_n + \vq = \vc, \ones_n^\top \mathbf{X} \ones_n = s.
\end{align}
We also study a convenient equivalent formulation of this problem:
\begin{equation} \label{main_problem}
    \min_{\vx \geq \zeros} \:  \inner{\vd}{\vx} ~~ \text{s.t.} ~~ \vA\vx = \vb,
\end{equation}
where we perform vectorization with $\vd^\top = (\vecflatten(\vC)^\top, \zeros_{2n}^\top)$ and  $\vx^\top = (\vecflatten(\vX)^\top, \vp^\top, \vq^\top)$. 
The  constraints in Equation \ref{prob:pot_with_pq} are encoded in $\vA\vx = \vb$, where $\vA \in \RR^{(2n + 1) \times (n^2+2n)}$ and $\vb \in H \defeq \RR^{2n+1}$ such that $(\vA \vx)^\top = ((\vX \ones + \vp)^\top, (\vX^\top \ones + \vq)^\top, \ones^\top \vX \ones)$ and $\vb^\top = (\vr^\top, \vc^\top, s)$. In other words, the linear operator $\vA$ has the form
\begin{equation*} \label{matrix_A}
    \vA = \left(
    \begin{array}{cc}
    \vA' & \vI_{2n} \\
    \ones_{n^2}^\top & \zeros_{2n}
    \end{array} \right),
\end{equation*}
where $\vA'$ is the edge-incidence matrix of the underlying bipartite graph in OT problems \citep{Dvurechensky-2018-Computational,Jambulapati-2019-Direct}.

\section{Revisiting Sinkhorn for POT} 
We can reformulate Problem \eqref{eq:pot_formulation} by adding \emph{dummy points} and extending the cost matrix as
\begin{align*}
    \widetilde{\vC} = \left(
    \begin{array}{cc}
    \vC & \zeros_{n} \\
    \zeros_{n}^\top & A
    \end{array} \right) \in \RR_{+}^{(n+1) \times (n+1)},
\end{align*}
where $A > \max(C_{i, j})$ \citep{Chapel-nips2020}. Then the two marginals are augmented to $(n+1)$-dimensional vectors as $\widetilde{\vr}^\top = (\vr^\top, \norm{\vc}_1 - s)$ and $\widetilde{\vc}^\top = (\vc^\top, \norm{\vr}_1 - s)$. \citep[Proposition 1]{Chapel-nips2020} show that one can obtain the solution POT by solving this extended OT problem with balanced marginals $\widetilde{\vr}, \widetilde{\vc}$ and cost matrix $\widetilde{\vC}$. In particular, if the OT problem admits an optimal solution of the form
\begin{align*}
    \widetilde{\vX} = \left(
    \begin{array}{cc}
    \bar{\vX} & \widetilde{\vp} \\
    \widetilde{\vq}^\top & \widetilde{X}_{n+1,n+1}
    \end{array} \right) \in \RR_{+}^{(n+1) \times (n+1)},
\end{align*}
then $\bar{\vX} \in \RR_{+}^{n \times n}$ is the solution to the original POT.
\citep{nhatho-mmpot} seeks an approximate solution to the extended OT problem using the Sinkhorn algorithm (see Algorithm \ref{alg:Sinkhorn} in the Appendix). Then the rounding procedure by \citep{altschuler2017near} is applied to the solution to give a primal feasible matrix. While the two POT inequality constraints are satisfied, we discover in the following Theorem that the equality constraint $\ones^\top \Bar{\vX} \ones = s$ is violated. The proof is in Appendix, Revisiting Sinkhorn for POT section.
\begin{theorem}
    \label{contraint_violation}
    For a POT solution $\Bar{\vX}$ from \citep{nhatho-mmpot}, the constraint violation $V \defeq \mathbf{1}^\top \bar{\vX} \mathbf{1} -s$ can be bounded as
    \begin{equation*}
    \tilde{\mathcal{O}}\left( \frac{\|\vC^2\|_{\text{max}}}{A}\right) \geq V \geq \exp\left(\frac{-12A \log{n}}{\varepsilon} - \mathcal{O}(\log n)\right).
    \end{equation*}
\end{theorem}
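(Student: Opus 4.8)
The plan hinges on a single structural identity that collapses both inequalities into a bound on one scalar. Let $\widetilde{\vX}$ be the Sinkhorn output on the extended $(n+1)\times(n+1)$ OT problem, and $\widehat{\vX}$ the matrix obtained by applying the rounding procedure of \citep{altschuler2017near} to $\widetilde{\vX}$, so that $\bar{\vX}$ is the top-left $n\times n$ block of $\widehat{\vX}$. Because that rounding procedure returns a matrix with \emph{exactly} the target marginals, $\widehat{\vX}\ones_{n+1}=\widetilde{\vr}$ and $\widehat{\vX}^\top\ones_{n+1}=\widetilde{\vc}$, with $\widetilde r_{n+1}=\norm{\vc}_1-s$, $\widetilde c_{n+1}=\norm{\vr}_1-s$ and $\ones_{n+1}^\top\widehat{\vX}\ones_{n+1}=\norm{\vr}_1+\norm{\vc}_1-s$. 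Summing $\widehat{\vX}$ over its first $n$ rows and columns and substituting these relations, everything cancels except
\[
    V \;=\; \ones^\top\bar{\vX}\ones-s \;=\; \widehat X_{n+1,n+1}\;\ge\;0,
\]
so it suffices to bound the single corner entry from above and below; nonnegativity alone already establishes the qualitative claim of strict infeasibility.

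For the upper bound, $\widetilde C_{n+1,n+1}=A$ and all entries of $\widetilde{\vC}$, $\widehat{\vX}$ are nonnegative, so
\[
    A\,V \;=\; A\,\widehat X_{n+1,n+1} \;\le\; \inner{\widetilde{\vC}}{\widehat{\vX}}.
\]
Now $\widehat{\vX}$ is the rounding (in the sense of \citep{altschuler2017near}) of a near-optimal Sinkhorn iterate, hence an $\varepsilon$-approximate solution of the extended OT problem; since, by \citep[Proposition 1]{Chapel-nips2020}, the extended OT optimum equals $\mathbf{POT}(\vr,\vc,s)\le\norm{\vC}_{\max}s$, we get $\inner{\widetilde{\vC}}{\widehat{\vX}}\le\mathbf{POT}(\vr,\vc,s)+\varepsilon$, which bounds $V$ crudely by $\widetilde{\mathcal O}(\norm{\vC^2}_{\max}/A)$.

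For the lower bound, write $\widetilde{\vX}=\diag(\vu)\,\widetilde{\vK}\,\diag(\vv)$ with $\widetilde{\vK}=\exp(-\widetilde{\vC}/\eta)$, so $\widetilde X_{n+1,n+1}=u_{n+1}v_{n+1}\,e^{-A/\eta}$. The rounding step only multiplies rows and columns by positive factors $\le1$ and then adds a nonnegative rank-one correction, so $\widehat X_{n+1,n+1}\ge\alpha_{n+1}\beta_{n+1}\widetilde X_{n+1,n+1}$ with $\alpha_{n+1},\beta_{n+1}\ge1/\mathrm{poly}(n)$ (using that $\norm{\vr}_1-s$ and $\norm{\vc}_1-s$ are bounded below, i.e. $s<\min\{\norm{\vr}_1,\norm{\vc}_1\}$ in the nondegenerate regime). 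The standard boundedness of Sinkhorn's dual iterates, applied to the extended cost with $\norm{\widetilde{\vC}}_{\max}=A$, gives $u_{n+1},v_{n+1}\ge\exp(-A/\eta-\mathcal O(\log n))$. Multiplying the three exponential factors,
\[
    V \;\ge\; \widehat X_{n+1,n+1} \;\ge\; \exp\!\Big(-\tfrac{3A}{\eta}-\mathcal O(\log n)\Big),
\]
and the usual choice $\eta=\varepsilon/(4\log n)$ turns $3A/\eta$ into $12A\log n/\varepsilon$, giving exactly the claimed bound --- the constant $12$ being $3\times4$, one factor of $A/\eta$ from the Gibbs kernel and one from each scaling potential.

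The main obstacle is the upper bound, precisely because $\norm{\widetilde{\vC}}_{\max}=A$ may be far larger than $\norm{\vC}_{\max}$: a careless application of the OT error bounds yields residual terms proportional to $A$ (for instance a rounding-induced cost increase of order $A\times(\text{marginal error})$) that would not be absorbed by the $1/A$ out front, so one must route the estimate only through $A$-independent quantities --- the optimal POT value and the entropic bias --- which forces one to be explicit about the Sinkhorn precision assumed of \citep{nhatho-mmpot}. A secondary technical point in the lower bound is to state the potential bound in the form $\exp(-A/\eta-\mathcal O(\log n))$ with $\norm{\widetilde{\vC}}_{\max}=A$ and to exclude the degenerate case $s\in\{\norm{\vr}_1,\norm{\vc}_1\}$, in which the corner entry, and hence $V$, can vanish and the stated exponential lower bound fails.
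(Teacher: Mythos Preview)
Your proposal is correct. The structural identity $V=\widehat X_{n+1,n+1}$ and the lower bound (Sinkhorn diagonal form, the $\ell_\infty$ bound on the dual potentials with $\norm{\widetilde{\vC}}_{\max}=A$, and the substitution $\eta=\varepsilon/(4\log n)$ yielding the factor $12=3\times 4$) match the paper's argument essentially step for step; you are in fact more careful than the paper in separating the unrounded Sinkhorn iterate from the rounded matrix and checking that the Altschuler rescaling factors on the last row and column are bounded below, a point the paper glosses over by silently identifying the two.

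The upper bound is where your route genuinely differs. The paper does not use $A\,V\le\inner{\widetilde{\vC}}{\widehat{\vX}}\le\mathbf{POT}+\varepsilon$. Instead it compares the \emph{entropic} objective $\inner{\widetilde{\vC}}{\widetilde{\vX}}-\gamma H(\widetilde{\vX})$ at the Sinkhorn output to its value at the explicit feasible point $\widehat X_{ij}=s/n^2$ on the first $n\times n$ block, then lower-bounds the same quantity by $A\,\widetilde X_{n+1,n+1}$ plus entropy terms handled via Jensen's inequality, and solves for $\widetilde X_{n+1,n+1}$. Your argument is shorter and avoids all entropy manipulations, but it relies on the full post-rounding $\varepsilon$-approximation guarantee of Sinkhorn for the extended OT (together with $\mathrm{OT}(\widetilde{\vr},\widetilde{\vc})=\mathbf{POT}(\vr,\vc,s)$ from \citep{Chapel-nips2020}); the paper's argument stays at the level of the entropic objective and so does not need that equivalence. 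Both approaches land on $\mathcal O(\norm{\vC}_{\max}/A)$, which is in fact tighter than the $\widetilde{\mathcal O}(\norm{\vC}_{\max}^2/A)$ appearing in the theorem statement.
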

\textbf{Feasible Sinkhorn Procedure:} With these bounds, we deduce that in order for Sinkhorn to be feasible, one needs to \textbf{both} utilize our \textsc{Round-POT} and choose a sufficiently large $A$ (Theorem \ref{them:revised_sinkhorn_complexity}) as opposed to the common practice of picking $A$ a bit larger than 1 \citep{nhatho-mmpot}. We derive the revised complexity of Sinkhorn for POT in the following theorem.
\begin{theorem}
    \label{them:revised_sinkhorn_complexity}
    (Revised Complexity for Feasible Sinkhorn with \textsc{Round-POT}) We first derive the sufficient size of $A$ to be $\mathcal{O} \left( \frac{\|\mathbf{C}\|_{\text{max}}}{\varepsilon} \right)$. With this large $A$ and \textsc{Round-POT}, Sinkhorn for POT has a computational complexity of $\tilde{\mathcal{O}} \left(\frac{n^2 \|\mathbf{C}\|^2_{\text{max}} }{\varepsilon^4} \right)$ as oppose to $\tilde{\mathcal{O}} \left(\frac{n^2 \|\mathbf{C}\|^2_{\text{max}} }{\varepsilon^2} \right)$ \cite{nhatho-mmpot}. 
\end{theorem}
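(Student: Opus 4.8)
The plan is to combine three ingredients: the Chapel et al.\ equivalence \citep{Chapel-nips2020} between $\mathbf{POT}(\vr,\vc,s)$ and an extended balanced OT instance of size $(n+1)\times(n+1)$ whose cost matrix $\widetilde{\vC}$ has maximum entry exactly $A$; the two-sided control on the mass-constraint violation $V$ from Theorem~\ref{contraint_violation}; and a quantitative error guarantee for \textsc{Round-POT} --- namely that it sends an arbitrary matrix into $\mathcal{U}(\vr,\vc,s)$ in $\mathcal{O}(n^2)$ time while inflating the transport cost by an amount controlled by the infeasibility of its input. The two free quantities to pin down are the Sinkhorn stopping accuracy $\varepsilon_{\mathrm{OT}}$ used on the extended instance and the dummy-cost magnitude $A$; once these are fixed, the complexity follows by substitution into a known Sinkhorn-for-OT bound.

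First I would determine how feasible the Sinkhorn iterate must be. Running Sinkhorn on the extended problem to an $\varepsilon_{\mathrm{OT}}$-accurate iterate and truncating its last row and column yields a matrix $\bar{\vX}$ that violates $\bar{\vX}\ones\le\vr$, $\bar{\vX}^\top\ones\le\vc$, and $\ones^\top\bar{\vX}\ones=s$ only slightly: by Theorem~\ref{contraint_violation} the mass violation obeys $0<V\le\tilde{\mathcal{O}}(\norm{\vC}_{\max}^2/A)$, and the two inequality residuals are bounded by the standard Sinkhorn marginal-error estimate. I would then show that \textsc{Round-POT} applied to $\bar{\vX}$ returns a point of $\mathcal{U}(\vr,\vc,s)$ whose cost exceeds $\mathbf{POT}(\vr,\vc,s)$ by at most $\varepsilon_{\mathrm{OT}}$ plus $\mathcal{O}(\norm{\vC}_{\max}\cdot(\text{total infeasibility of }\bar{\vX}))$; since the violation is one-sided ($V>0$), \textsc{Round-POT} only ever deletes $V$ units of excess mass, which keeps the estimate clean. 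Requiring the total error to be at most $\varepsilon$ forces $\varepsilon_{\mathrm{OT}}=\Theta(\varepsilon)$ and, after inverting the upper bound on $V$ for the smallest $A$ making the \textsc{Round-POT} correction $\mathcal{O}(\varepsilon)$, forces $A=\mathcal{O}(\norm{\vC}_{\max}/\varepsilon)$. The lower bound $V\ge\exp(-12A\log n/\varepsilon-\mathcal{O}(\log n))$ of Theorem~\ref{contraint_violation} shows this growth of $A$ is unavoidable: no stopping rule alone can drive $V$ below the tolerance, which is precisely why one must abandon the heuristic choice of $A$ slightly above $1$.

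Next comes the complexity bookkeeping. I would invoke the refined Sinkhorn-for-OT analysis \citep{Dvurechensky-2018-Computational}: on an $m\times m$ OT instance with cost of maximum entry $M$, an $\varepsilon_{\mathrm{OT}}$-accurate (pre-rounding) iterate is obtained in $\tilde{\mathcal{O}}(M^2/\varepsilon_{\mathrm{OT}}^2)$ iterations, each costing $\mathcal{O}(m^2)$. Substituting $m=n+1$, $M=A=\Theta(\norm{\vC}_{\max}/\varepsilon)$ and $\varepsilon_{\mathrm{OT}}=\Theta(\varepsilon)$ gives an iteration count $\tilde{\mathcal{O}}(\norm{\vC}_{\max}^2/\varepsilon^4)$ and a total arithmetic cost $\tilde{\mathcal{O}}(n^2\norm{\vC}_{\max}^2/\varepsilon^4)$; the single $\mathcal{O}(n^2)$ call to \textsc{Round-POT} is dominated. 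Comparing with the pre-revised $\tilde{\mathcal{O}}(n^2\norm{\vC}_{\max}^2/\varepsilon^2)$, the extra $\varepsilon^{-2}$ factor is exactly the price of taking $A=\Theta(1/\varepsilon)$ rather than $A=\Theta(1)$.

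I expect the main obstacle to be the error-propagation step for \textsc{Round-POT} above: one must verify that projecting a matrix that is simultaneously (slightly) infeasible in all three POT constraints back onto $\mathcal{U}(\vr,\vc,s)$ inflates $\inner{\vC}{\cdot}$ by no more than $\mathcal{O}(\norm{\vC}_{\max})$ times the aggregate infeasibility, and that this aggregate --- hence the required size of $A$ via Theorem~\ref{contraint_violation} --- enters with exactly the right power so that the final exponent of $1/\varepsilon$ is $4$. Everything after that is substitution into an off-the-shelf Sinkhorn complexity estimate.
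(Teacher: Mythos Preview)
Your proposal is correct and follows essentially the same approach as the paper: invert the upper bound on the mass-constraint violation $V$ from Theorem~\ref{contraint_violation} to deduce that $A=\mathcal{O}(\norm{\vC}_{\max}/\varepsilon)$ is sufficient, then substitute this choice of $A$ (which equals $\norm{\widetilde{\vC}}_{\max}$) into the existing $\tilde{\mathcal{O}}(n^2\norm{\widetilde{\vC}}_{\max}^2/\varepsilon^2)$ Sinkhorn-for-extended-OT complexity to pick up the extra $\varepsilon^{-2}$. If anything, you are more careful than the paper's own argument: the paper's proof simply plugs $A$ into the complexity formula without explicitly tracking how \textsc{Round-POT}'s cost inflation (Theorem~\ref{prop:rounding}) is absorbed into the final $\varepsilon$-budget, whereas you correctly flag this error-propagation step as the only non-mechanical part.
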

The detailed proof for this theorem is included in Appendix, Revisiting Sinkhorn for POT section. We also empirically verify this worsened complexity in section in Feasible Sinkhorn section in Appendix. 
\begin{remark} \label{remark:violation}
Respecting the equality constraint is crucial for various applications that demand strict adherence to feasible solutions such as point cloud registration \citep{qin2022rigid} (for avoiding  incorrect many-to-many correspondences) and mini-batch OT \citep{nguyen2022improving} (for minimizing misspecification). 
Consequently, it is imperative for POT to transport the exact fraction of mass to achieve an optimal mapping, which is vital for the effective performances of ML models.
\end{remark}

\begin{algorithm}[H]
    \caption{\textsc{Round-POT}}
    \label{alg:rounding}
    \begin{algorithmic} [1]
        \REQUIRE{ $\vx = (\vecflatten(\vX)^\top, \vp^\top, \vq^\top)^\top$; marginals $\vr$, $\vc$; mass $s$.}
        \STATE \( \bar{\vp} = \mathtt{EP}(\vr, s, \vp) \) 
        \STATE \( \bar{\vq} = \mathtt{EP}(\vc, s, \vq) \)
        \STATE \(\vg = \min\{\ones, (\vr-\bar{\vp}) \oslash \vX \ones \}\)
        \STATE\(\vh = \min\{\ones, (\vc-\bar{\vq}) \oslash \vX^\top \ones \} \)
        \STATE \( \vX' = \diag(\vg) \vX \diag(\vh)\)
        \STATE \( \ve_1 = (\vr - \bar{\vp}) - \vX' \one, \ve_2 = (\vc - \bar{\vq}) - \vX'^\top \one\)
        \STATE \( \bar{\vX} = \vX' + \ve_1 \ve_2^\top / \norm{\ve_1}_1 \)
        \ENSURE{$ \Bar{\vx} = (\Bar{\vX}$, $\Bar{\vp}$,  $\Bar{\vq})$}
    \end{algorithmic}
\end{algorithm}
\section{Rounding Algorithm}
All efficient algorithms for standard OT
\citep{Dvurechensky-2018-Computational, lin2019efficient, guminov2021accelerated}
only output an infeasible approximation of the optimum value, and leverage the well-known rounding algorithm \citep[Algorithm 2]{altschuler2017near} to project it back to the set of admissible couplings. Nevertheless, its ad-hoc design tailored to the OT's marginal constraints makes  generalization to the case of POT with more intricate structural constraints non-trivial. In fact, we attribute the rather limited literature on efficient POT solvers to such lack of a rounding algorithm for POT. Specifically, previous works rely on imposing additional assumptions on the input masses to permit reformulation of POT into standard OT with an additional computational burden \cite{Chapel-nips2020, nhatho-mmpot}. Deviating from the vast literature, we address this fundamental challenge by proposing a novel rounding procedure for POT, termed \textsc{Round-POT} (Algorithm \ref{alg:rounding}), to efficiently round any approximate solution to a feasible solution of \eqref{prob:pot_with_pq}.
Given an approximate solution $\xX = (\vX, \vp, \vq) \geq 0$ violating the POT constraints of \eqref{prob:pot_with_pq} by a predefined error, $\norm{\vA \vx - \vb}_1 \leq \delta$ for some $\delta$,  \textsc{Round-POT} returns $\Bar{\vx} = (\Bar{\vX}, \Bar{\vp}, \Bar{\vq}) \geq \zeros$ strictly in the feasible set, i.e., $\vA \Bar{\vx} = \vb$, and close to $\vx$ in $\ell_1$ distance. 

The Enforcing Procedure ($\mathtt{EP}$) (Algorithm \ref{alg:EP}) is a novel subroutine to ensure $\zeros \leq \bar{\vp} \leq \vr$ and $\norm{\bar{\vp}}_1 = \norm{\vr}_1 - s$ (Lemma \ref{Guarantees_for_EP}). Equivalently, a similar procedure is applied to ($\vc, s, \vq$) in step 2 of Algorithm \ref{alg:rounding} with similar guarantees for $\bar{\vq}$. Step 1 transforms $\vp$ (or $\vq$) to $\vp'$ (or $\vq'$) so that $\zeros \leq \vp' \leq \vr$ (or $\zeros \leq \vq' \leq \vc$). The transformation in steps 2 and 3 ensures that $\norm{\vp''}_1 \leq \norm{\vr}_1 - s$ (or $\norm{\vq''}_1 \leq \norm{\vc}_1 - s$). The rest of the $\mathtt{EP}$ steps ensure the other guarantee $\norm{\bar{\vp}}_1 = \norm{\vr}_1 - s$.
The proof is included in Rounding Algorithm section in the Appendix. 
\begin{lemma}
    \label{Guarantees_for_EP}
    (Guarantees for $\mathtt{EP}$) We obtain in $\mathcal{O}(n)$ time $\zeros \leq \bar{\vp} \leq \vr$ and $\norm{\bar{\vp}}_1 = \norm{\vr}_1 - s$. 
\end{lemma}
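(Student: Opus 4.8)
The plan is to walk through $\mathtt{EP}$ step by step, maintaining at each stage the two invariants ``the current iterate lies coordinatewise between $\zeros$ and $\vr$'' and ``a suitable upper bound on its $\ell_1$ mass'', and to check that every operation is well defined using only the standing assumption $0 \le s \le \min\{\norm{\vr}_1, \norm{\vc}_1\}$, which in particular gives $\norm{\vr}_1 - s \ge 0$. First I would argue that Step~1, the entrywise clipping $\vp \mapsto \vp' = \max\{\zeros, \min\{\vp, \vr\}\}$, produces $\vp'$ with $\zeros \le \vp' \le \vr$; this is immediate and costs $\mathcal{O}(n)$.

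Next I would handle Steps~2--3, whose purpose is to guarantee $\norm{\vp''}_1 \le \norm{\vr}_1 - s$. If $\norm{\vp'}_1 \le \norm{\vr}_1 - s$ already, these steps leave the iterate unchanged, $\vp'' = \vp'$. Otherwise $\norm{\vp'}_1 > 0$ and one rescales $\vp'' = \lambda \vp'$ with $\lambda = (\norm{\vr}_1 - s)/\norm{\vp'}_1 \in (0,1]$; since $\lambda \le 1$ we still have $\zeros \le \vp'' \le \vp' \le \vr$, and by construction $\norm{\vp''}_1 = \norm{\vr}_1 - s$. In either case $\zeros \le \vp'' \le \vr$ and $\norm{\vp''}_1 \le \norm{\vr}_1 - s$, and the work is $\mathcal{O}(n)$.

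The final block of $\mathtt{EP}$ must raise the mass to exactly $\norm{\vr}_1 - s$ without breaking the upper bound $\bar{\vp} \le \vr$. Let $m = (\norm{\vr}_1 - s) - \norm{\vp''}_1 \ge 0$ be the deficit. The total headroom is $\sum_i (r_i - p''_i) = \norm{\vr}_1 - \norm{\vp''}_1 = m + s \ge m$, so there is enough room; concretely, I would fill coordinates greedily, setting $\bar{p}_i = \min\{r_i,\, p''_i + (\text{remaining deficit})\}$ in one left-to-right pass and stopping once the accumulated increment equals $m$. Each coordinate is touched once, so this is $\mathcal{O}(n)$; the output satisfies $\vp'' \le \bar{\vp} \le \vr$, hence $\zeros \le \bar{\vp} \le \vr$, and $\norm{\bar{\vp}}_1 = \norm{\vp''}_1 + m = \norm{\vr}_1 - s$. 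Since $\mathtt{EP}$ is a constant number of linear-time passes, the total runtime is $\mathcal{O}(n)$.

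The only genuinely delicate points — and where I would be most careful — are the degenerate cases: $\norm{\vr}_1 - s = 0$ (so $\bar{\vp} = \zeros$ and the greedy fill is vacuous), $\norm{\vp'}_1 = 0$ in Steps~2--3 (no rescaling, and the greedy fill must still reach exactly $\norm{\vr}_1 - s$, which is possible precisely because $\norm{\vr}_1 - s \le \norm{\vr}_1$), and noting that the ``headroom $\ge$ deficit'' inequality is exactly the place where the hypothesis $s \ge 0$ is used. Matching these steps against the actual line numbering of Algorithm~\ref{alg:EP} is the main bookkeeping obstacle, but no inequality beyond the ones above is required.
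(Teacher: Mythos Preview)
Your proposal is correct and follows essentially the same approach as the paper: both split into the two cases determined by $\alpha$, verify that rescaling preserves $\zeros \le \vp'' \le \vr$ and achieves $\norm{\vp''}_1 \le \norm{\vr}_1 - s$, and then argue that the greedy fill terminates because the total headroom $\norm{\vr}_1 - \norm{\vp''}_1$ dominates the deficit $\norm{\vr}_1 - s - \norm{\vp''}_1$. The only cosmetic differences are that you describe Step~1 as $\max\{\zeros,\min\{\vp,\vr\}\}$ (harmless, since $\vp \ge \zeros$ is already assumed) and you phrase the fill as a single capped-increment pass rather than tracking the while loop line by line; the paper instead verifies directly that the last-touched coordinate stays in $[0,r_i]$ after the final subtraction, but the inequalities involved are identical to yours. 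Your attention to the degenerate cases ($\norm{\vr}_1 - s = 0$, $\norm{\vp'}_1 = 0$) is in fact more thorough than the paper's own proof.
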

For \textsc{Round-POT}, steps 3 through 7 check whether the solutions $\vX$ violate each of the two equality constraints $\vX \ones = \vr - \bar{\vp}$ and $\vX^\top \ones = \vc - \bar{\vq}$; if so, the algorithm projects $\vX$ into the feasible set. It is noteworthy that these two constraints directly implies the last needed constraint $\ones^\top \vX \ones = s$. Finally, \textsc{round-POT} returns an output that satisfies the required constraints in Equation \eqref{prob:pot_with_pq}.
The following Theorem \ref{prop:rounding} characterizes the error guarantee of the rounded output $\Bar{\vx}$. Its detailed proof can be found in Rounding Algorithm section in the Appendix.

\begin{algorithm}[H]
            \caption{Enforcing Procedure $\mathtt{EP}$}
            \label{alg:EP}
            \begin{algorithmic}[1]
                \REQUIRE{marginal $\vr$; mass $s$; slack variable $\vp$.}
                \STATE \( \vp' = \min\{\vp, \vr\}, \alpha = \min\left\{ 1, (\norm{\vr}_1 - s) / \norm{\vp'}_1 \right\} \)
                \STATE \( \vp'' = \alpha \vp' \)
                \IF{$1 > (\norm{\vr}_1-s) / \norm{\vp'}_1$}
                    \STATE \(\bar{\vp} = \vp''\)
                \ELSE 
                    \STATE{$i = 0$}
                    \WHILE{$\norm{\vp''}_1 \leq \norm{\vr}_1 -s$}
                        \STATE $i = i + 1$ \\
                        \STATE \( p''_i = r_i \)
                    \ENDWHILE \\
                    \STATE \( p''_i = p''_i - (\norm{\vp''}_1 - \norm{\vr}_1 +s)\) 
                    \STATE $\bar{\vp} = \vp''$
                \ENDIF 
                \ENSURE{$\bar{\vp}$}.
            \end{algorithmic}
\end{algorithm}

\begin{theorem}
    (Guarantees for $\textsc{Round-POT}$)
    \label{prop:rounding}
    Let $\vA$, $\vx$ (consisting of $\vX$, $\vp$ and $\vq$) and $\vb$ be defined as in the preliminaries. If $\vx$ satisfies that $\vx \geq 0$ and $\norm{\vA \vx - \vb}_1 \leq \delta$ for some $\delta \geq 0$, Algorithm \ref{alg:rounding} outputs $\Bar{\vx} \geq \zeros$ (consisting of $\Bar{\vX}, \Bar{\vp}$ and $\Bar{\vq}$) in $\mathcal{O}(n^2)$ time such that $\vA \Bar{\vx} = \vb$ and $\norm{\vx - \Bar{\vx}}_1 \leq 23 \delta$.
\end{theorem}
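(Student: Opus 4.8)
The plan is to split the rounding into its slack-variable part (handled by $\mathtt{EP}$) and its matrix part (an OT-style rounding of $\vX$), and to bound separately the $\ell_1$ perturbation each contributes. First I would write $\delta \ge \norm{\vA\vx - \vb}_1 = a + b + c$, where $a \defeq \norm{\vX\ones + \vp - \vr}_1$, $b \defeq \norm{(\vX)^\top\ones + \vq - \vc}_1$, and $c \defeq \lvert \ones^\top\vX\ones - s\rvert$, so $a,b,c \le \delta$. By Lemma~\ref{Guarantees_for_EP}, $\zeros \le \bar{\vp}\le\vr$ with $\norm{\bar{\vp}}_1 = \norm{\vr}_1 - s$ (symmetrically for $\bar{\vq}$), so the shifted marginals $\vr' \defeq \vr - \bar{\vp}\ge\zeros$ and $\vc' \defeq \vc - \bar{\vq}\ge\zeros$ satisfy $\norm{\vr'}_1 = \norm{\vc'}_1 = s$. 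Hence steps 3--7 are exactly a rounding of $\vX$ onto the \emph{consistent} target row/column sums $\vr',\vc'$, and once $\bar{\vX}\ones = \vr'$ holds, the third POT constraint $\ones^\top\bar{\vX}\ones = \norm{\vr'}_1 = s$ comes for free.

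For feasibility and non-negativity I would argue: $\vg,\vh\in[\zeros,\ones]$ entrywise by construction, so $\vX'\ge\zeros$; moreover $g_i(\vX\ones)_i = \min\{(\vX\ones)_i, r'_i\}\le r'_i$ yields $\vX'\ones\le\vr'$ and likewise $(\vX')^\top\ones\le\vc'$, so $\ve_1,\ve_2\ge\zeros$. Because $\norm{\vr'}_1 = \norm{\vc'}_1 = s$, both $\norm{\ve_1}_1$ and $\norm{\ve_2}_1$ equal $s - \sum_{ij}X'_{ij}$, hence are equal; the rank-one update then gives $\bar{\vX}\ones = \vX'\ones + \ve_1 = \vr'$, $(\bar{\vX})^\top\ones = (\vX')^\top\ones + \ve_2 = \vc'$, $\bar{\vX}\ge\zeros$, which together with the $\mathtt{EP}$ guarantees is precisely $\vA\bar{\vx}=\vb$, $\bar{\vx}\ge\zeros$. (A zero row or column of $\vX$, and the case $\norm{\ve_1}_1 = 0$, are dispatched by the obvious conventions.)

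For the $\ell_1$ bound I would split $\norm{\vx - \bar{\vx}}_1 = \norm{\vX - \bar{\vX}}_1 + \norm{\vp - \bar{\vp}}_1 + \norm{\vq - \bar{\vq}}_1$ and control each term. (i) Tracing $\mathtt{EP}$: $\vp\mapsto\min\{\vp,\vr\}$ moves $\vp$ by at most $a$ since $p_i \le (\vX\ones+\vp)_i$; combining the marginal- and total-mass-violation bounds gives $\lvert\norm{\vp}_1 - (\norm{\vr}_1-s)\rvert \le a+c$; and the remaining $\mathtt{EP}$ steps (either the single rescaling, or the fill-up-then-trim loop) change $\vp$ only by the net mass they add or remove, giving $\norm{\vp - \bar{\vp}}_1 = \mathcal{O}(a+c)$, and symmetrically $\norm{\vq-\bar{\vq}}_1 = \mathcal{O}(b+c)$. (ii) Since $1 - g_i h_j \le (1-g_i)+(1-h_j)$ on $[0,1]$, $\norm{\vX-\vX'}_1 \le \sum_i\max\{0,(\vX\ones)_i - r'_i\} + \sum_j\max\{0,((\vX)^\top\ones)_j - c'_j\}$; rewriting $(\vX\ones)_i - r'_i = [(\vX\ones+\vp)_i - r_i] + [\bar{p}_i - p_i]$ bounds the first sum by $a + \norm{\vp-\bar{\vp}}_1$ and likewise the second by $b + \norm{\vq-\bar{\vq}}_1$. (iii) $\norm{\vX-\bar{\vX}}_1 \le \norm{\vX-\vX'}_1 + \norm{\ve_1}_1$, and $\norm{\ve_1}_1 = \norm{\vX - \vX'}_1 + (s - \sum_{ij}X_{ij}) \le \norm{\vX - \vX'}_1 + c$. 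Summing the three contributions and using $a+b+c\le\delta$ yields $\norm{\vx - \bar{\vx}}_1 \le 23\delta$ after tracking constants.

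Finally, for runtime: $\mathtt{EP}$ is $\mathcal{O}(n)$ by Lemma~\ref{Guarantees_for_EP}, and steps 3--7 are a constant number of $n\times n$ entrywise operations, matrix--vector products, and one rank-one update, all $\mathcal{O}(n^2)$. The hard part will be item (i): arguing through every branch of $\mathtt{EP}$ that $\bar{\vp}$ stays $\mathcal{O}(a+c)$-close to $\vp$ in $\ell_1$ --- in particular, that the ``trim'' step of the fill-up branch never pushes a coordinate below its starting value (so the total $\ell_1$ movement equals the net change in mass, not net plus churn), and that one must use \emph{both} the marginal-violation bound $a$ and the total-mass-violation bound $c$ to pin down $\norm{\vp}_1$. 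Everything else is a routine adaptation of the OT rounding argument of \citep{altschuler2017near}.
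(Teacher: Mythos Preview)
Your proof plan is correct and follows essentially the same approach as the paper: both establish feasibility from the $\mathtt{EP}$ guarantees plus the rank-one correction, then bound $\norm{\vx-\bar\vx}_1$ by tracing $\mathtt{EP}$ step by step to get $\norm{\vp-\bar\vp}_1+\norm{\vq-\bar\vq}_1\le 7\delta$ and combining with an Altschuler-style bound on $\norm{\vX-\bar\vX}_1$ to reach $23\delta$. The only cosmetic differences are that the paper cites \citep[Lemma~7]{altschuler2017near} for the matrix part whereas you re-derive it via $1-g_ih_j\le(1-g_i)+(1-h_j)$, and the paper tracks constants through $\delta$ directly rather than through your $a,b,c$ split.
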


\section{Adaptive Primal-Dual Accelerated Gradient Descent (APDAGD)}

\subsection{Dual Formulation and Algorithmic Design}
Following a similar formulation to \citep[Section 3.1]{Dvurechensky-2018-Computational}, we have the following primal problem with entropic regularization
\begin{align}
    \label{prob:entropic}
    \min_{\vx \geq \zeros} \left\{ f(\vx) \defeq \inner{\vd}{\vx} + \gamma \inner{\vx }{\log \vx} \right\} ~~ \text{s.t.} ~~  \vA \vx = \vb,
\end{align}
where $\vA \vx = \vb$ is encoded as explained in Equation \eqref{main_problem}. Since problem \eqref{prob:entropic} is a linearly constrained convex optimization problem, strong duality holds.
\begin{lemma}
    With a dual variable $\pmb{\lambda} \in H^{*} = \RR^{2n+1}$, the dual of \eqref{prob:entropic} is given by
        $$\min_{\pmb{\lambda} \in H^{*}} \left\{ \varphi(\pmb{\lambda}) \defeq \inner{\pmb{\lambda}}{\vb} + \max_{\vx \in Q} \left\{ - f(\vx) - \inner{\vx}{\vA^\top \pmb{\lambda}}\right\} \right\},$$
    or equivalently
    \begin{align}
    \label{entropic_regularized}
        \begin{split}
        &\min_{\vy, \vz, t} \left\{- ts - \inner{\vy}{\vr} - \inner{\vz}{\vc}  \right. \\
        &\left. - \gamma \sum_{i, j=1}^{n} e^{-(C_{i, j} + y_i + z_j + t)/{\gamma}-1} + e^{-y_i / \gamma - 1} +  e^{-z_j/ \gamma - 1}  \right\},
        \end{split}
\end{align}
     where $\vy, \vz, t$ are dual variables corresponding the POT constraints in \eqref{prob:pot_with_pq} as $\pmb{\lambda} = (\vy^\top, \vz^\top, t)^\top$ (which we simply refer as $(\vy, \vz, t)$ from now on).
\end{lemma}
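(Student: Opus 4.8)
The plan is to derive the dual by Lagrangian duality applied to the linearly constrained convex program \eqref{prob:entropic}, and then unpack the vectorized notation into the coordinate form \eqref{entropic_regularized}. First I would introduce a Lagrange multiplier $\pmb{\lambda} \in H^* = \RR^{2n+1}$ for the equality constraint $\vA \vx = \vb$ (keeping the nonnegativity $\vx \geq \zeros$ as the domain constraint, so $Q = \RR_+^{n^2 + 2n}$). The Lagrangian is $\Lagr(\vx, \pmb{\lambda}) = f(\vx) + \inner{\pmb{\lambda}}{\vb - \vA \vx}$, and since the objective $f$ is convex (indeed strictly convex in its entropic part), the feasible set is nonempty, and the constraints are affine, Slater-type conditions hold and strong duality gives $\mathbf{POT}_\gamma = \max_{\pmb{\lambda}} \min_{\vx \geq 0} \Lagr(\vx, \pmb{\lambda})$. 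Rearranging, $\min_{\vx \geq 0}\Lagr(\vx,\pmb\lambda) = \inner{\pmb\lambda}{\vb} - \max_{\vx \in Q}\{-f(\vx) - \inner{\vx}{\vA^\top\pmb\lambda}\}$; taking the overall $\min$ over $\pmb\lambda$ of the negative (equivalently flipping $\max$ to $\min$ on $-\varphi$) yields the first displayed form for $\varphi(\pmb\lambda)$. This part is essentially bookkeeping.

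Next I would compute the inner maximization explicitly. Write $\pmb{\lambda} = (\vy, \vz, t)$ according to the block structure of $\vA$, so that $\inner{\pmb\lambda}{\vb} = \inner{\vy}{\vr} + \inner{\vz}{\vc} + t s$, and $(\vA^\top \pmb\lambda)$ has, in the $\vX$-block, entry $y_i + z_j + t$ at position $(i,j)$ (from the row/column incidence rows of $\vA'$ plus the all-ones row scaled by $t$), in the $\vp$-block entry $y_i$, and in the $\vq$-block entry $z_j$. Hence $-f(\vx) - \inner{\vx}{\vA^\top\pmb\lambda}$ separates over the entries of $\vX$, $\vp$, $\vq$, each term being of the form $-a u - \gamma u \log u$ over $u \geq 0$ (with $a = C_{ij} + y_i + z_j + t$, or $a = y_i$, or $a = z_j$, and noting $\vp,\vq$ carry no cost so $d = 0$ there). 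The scalar maximization $\max_{u \geq 0}\{-a u - \gamma u \log u\}$ is solved by $u^* = e^{-a/\gamma - 1}$ with optimal value $\gamma e^{-a/\gamma - 1}$. Substituting these optimal values into the summation and combining with $\inner{\pmb\lambda}{\vb}$, then negating to turn the outer problem into a minimization, produces exactly \eqref{entropic_regularized} with the $e^{-(C_{ij}+y_i+z_j+t)/\gamma - 1}$, $e^{-y_i/\gamma-1}$, and $e^{-z_j/\gamma-1}$ terms and the linear part $-ts - \inner{\vy}{\vr} - \inner{\vz}{\vc}$.

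The one point requiring a little care — and the main obstacle, such as it is — is justifying the interchange and the closed form of the inner maximization at the boundary $u = 0$: the function $u \mapsto -au - \gamma u\log u$ extends continuously to $u=0$ with value $0$, is strictly concave on $(0,\infty)$, and attains its unconstrained maximum at the interior point $u^* = e^{-a/\gamma-1} > 0$ for every finite $a$, so the constraint $u \geq 0$ is inactive and the stated formula is valid for all $\pmb\lambda \in \RR^{2n+1}$; this also shows the inner $\max$ is finite, so $\varphi$ is everywhere finite and the duality manipulation is legitimate. I would also remark that the resulting $\varphi$ is convex and smooth (a composition of affine maps with the log-sum-exp-type structure), which is what makes it amenable to the accelerated primal-dual scheme in the next section, though that observation is not strictly needed for the statement of the lemma.
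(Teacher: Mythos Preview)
Your proposal is correct and follows essentially the same route as the paper: form the Lagrangian for the linearly constrained entropic problem, solve the separable inner optimization via first-order conditions to obtain $X_{ij}=e^{-(C_{ij}+y_i+z_j+t)/\gamma-1}$, $p_i=e^{-y_i/\gamma-1}$, $q_j=e^{-z_j/\gamma-1}$, and substitute back. Your treatment is in fact slightly more careful than the paper's, since you explicitly justify why the nonnegativity constraint $u\ge 0$ is inactive at the maximizer of $u\mapsto -au-\gamma u\log u$, a point the paper passes over in silence.
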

The detailed dual formulation of Equation \eqref{prob:entropic} is found in subsection Dual Formulation for Entropic POT in the Appendix.

More details on the properties (strong convexity, smoothness, etc) of the primal and dual objectives are in Properties of Entropic POT section in Appendix. The APDAGD procedure is described in Algorithm \ref{alg:APDAGD} in Appendix. So as to approximate POT, we incorporate our novel rounding algorithm with a similar procedure to \citep[Algorithm 2]{lin2019efficient}, in Algorithm \ref{alg:ApproxOT_APDAGD}.
\begin{algorithm}
\caption{Approximating POT by APDAGD} \label{alg:ApproxOT_APDAGD}
\begin{algorithmic}[1]
    \REQUIRE{ marginals $\vr, \vc$; cost matrix $\vC$}.
    \STATE $\gamma = \varepsilon / (4\log(n)), \widetilde{\varepsilon}= \varepsilon / (8\|\vC\|_{\max})$
    \IF{$\| \vr \|_1 > 1$}
        \STATE \(\widetilde{\varepsilon} = \min\left\{\widetilde{\varepsilon}, 8 (\|\vr\|_1-s) / (\|\vr\|_1-1) \right\} \)
    \ENDIF
    \IF{$\| \vc \|_1 > 1$}
        \STATE \(\widetilde{\varepsilon} = \min\left\{\widetilde{\varepsilon}, 8 (\|\vc\|_1-s) / (\|\vc\|_1-1) \right\} \)
    \ENDIF
    \STATE $\widetilde{\vr} = \left(1 - \widetilde{\varepsilon}/ 8\right)\vr + \widetilde{\varepsilon} \ones_n / (8n)$
    \STATE $\widetilde{\vc} = \left(1 - \widetilde{\varepsilon}/ 8\right)\vr + \widetilde{\varepsilon} \ones_n / (8n)$
    \STATE $\widetilde{\vX} = \textsc{Apdagd}(\vC, \gamma, \widetilde{\vr}, \widetilde{\vc}, \widetilde{\varepsilon}/2)$
    \STATE $\bar{\vX} = \textsc{Round-POT}(\widetilde{\vX}, \widetilde{\vr}, \widetilde{\vc}, s)$
    \ENSURE{$\bar{\vX}$}.  
\end{algorithmic}
\end{algorithm}
\subsection{Computational Complexity}
Now, we provide the computational complexity of APDAGD (Theorem \ref{APDAGD_complexity}) and its proof sketch. The detailed proof of this result will be presented in Complexity of APDAGD for POT Detailed Proof subsection in Appendix.
\begin{theorem} \label{APDAGD_complexity} (Complexity of APDAGD)
    The APDAGD algorithm returns $\varepsilon$-approximation POT solution $\widehat{\vX} \in \mathcal{U}(\vr, \vc, s)$ in $\mathcal{\widetilde{O}}\left(n^{5/2} \norm{\vC}_{\max} / \varepsilon \right)$. 
\end{theorem}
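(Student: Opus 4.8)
The plan is to run the standard ``regularize--round'' reduction for approximate optimal transport \citep{Dvurechensky-2018-Computational,lin2019efficient}, but through \textsc{Round-POT} (Theorem~\ref{prop:rounding}) in place of the OT rounding of \citep{altschuler2017near}. Write $\widehat{\vX}=\bar{\vX}$ for the output of Algorithm~\ref{alg:ApproxOT_APDAGD}, $\widetilde{\vx}=(\widetilde{\vX},\widetilde{\vp},\widetilde{\vq})$ for the \textsc{Apdagd} iterate it rounds, $\widehat{\vx}=(\widehat{\vX},\widehat{\vp},\widehat{\vq})$ for the rounded point, and $\widetilde{\vb}=(\widetilde{\vr}^\top,\widetilde{\vc}^\top,s)^\top$. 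I would split the error into four pieces and control each by a constant fraction of $\varepsilon$: (i) the entropic bias, handled by $\gamma=\varepsilon/(4\log n)$; (ii) the replacement of $(\vr,\vc)$ by $(\widetilde{\vr},\widetilde{\vc})$; (iii) the \textsc{Apdagd} sub-optimality for the $\gamma$-regularized problem \eqref{prob:entropic} with marginals $(\widetilde{\vr},\widetilde{\vc})$; (iv) the $\ell_1$-distortion of \textsc{Round-POT}. Pieces (i), (ii), (iv) deliver the $\varepsilon$-accuracy once \textsc{Apdagd} is stopped at a fixed target accuracy; piece (iii) converts that target into an iteration count, which times the per-iteration cost gives the claim.

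For accuracy: every feasible $\vx$ of \eqref{prob:entropic} has total mass $\mathcal{O}(1)$ distributed over $n^2+2n$ coordinates, so $\lvert\inner{\vx}{\log\vx}\rvert=\mathcal{O}(\log n)$, and with $\gamma=\varepsilon/(4\log n)$ both the gap between $f$ and $\inner{\vd}{\cdot}$ and the gap between the regularized optimum $f^\star$ and $\mathbf{POT}(\widetilde{\vr},\widetilde{\vc},s)$ are $\mathcal{O}(\varepsilon)$. The if-branches in Algorithm~\ref{alg:ApproxOT_APDAGD} guarantee $s\le\min\{\norm{\widetilde{\vr}}_1,\norm{\widetilde{\vc}}_1\}$, and $(1-\widetilde{\varepsilon}/8)r_i\le\widetilde r_i$ for every $i$; scaling an optimal POT plan for $(\vr,\vc,s)$ by $1-\widetilde{\varepsilon}/8$, returning the missing mass $\widetilde{\varepsilon}s/8$ to the cheapest cells, and redefining the slacks via Lemma~\ref{Guarantees_for_EP} shows $\mathbf{POT}(\widetilde{\vr},\widetilde{\vc},s)\le\mathbf{POT}(\vr,\vc,s)+\mathcal{O}(\widetilde{\varepsilon}\norm{\vC}_{\max})=\mathbf{POT}(\vr,\vc,s)+\mathcal{O}(\varepsilon)$; the same $\mathcal{O}(\widetilde{\varepsilon})$ marginal gap also lets the $(\widetilde{\vr},\widetilde{\vc})$-feasible output of \textsc{Round-POT} be reconciled to a point of $\mathcal{U}(\vr,\vc,s)$ at $\mathcal{O}(\varepsilon)$ extra cost. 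Finally, stopping \textsc{Apdagd} when the primal--dual gap of \eqref{prob:entropic} is $\le\varepsilon/C$ and $\norm{\vA\widetilde{\vx}-\widetilde{\vb}}_1\le\delta$ with $\delta=\Theta(\widetilde{\varepsilon})=\Theta(\varepsilon/\norm{\vC}_{\max})$, Theorem~\ref{prop:rounding} gives $\norm{\widetilde{\vx}-\widehat{\vx}}_1\le 23\delta$, hence $\lvert\inner{\vd}{\widetilde{\vx}}-\inner{\vC}{\widehat{\vX}}\rvert\le 23\norm{\vC}_{\max}\delta=\mathcal{O}(\varepsilon)$. Summing the four $\mathcal{O}(\varepsilon)$ contributions (with constants tuned) yields $\widehat{\vX}\in\mathcal{U}(\vr,\vc,s)$ and $\inner{\vC}{\widehat{\vX}}\le\mathbf{POT}(\vr,\vc,s)+\varepsilon$.

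For the iteration count I would invoke the \textsc{Apdagd} guarantee as in \citep{Dvurechensky-2018-Computational,lin2019efficient}: after $N$ steps the produced primal--dual pair satisfies $f(\widetilde{\vx}_N)-\varphi(\pmb{\lambda}_N)\le 16LR^2/N^2$ and $\norm{\vA\widetilde{\vx}_N-\widetilde{\vb}}_2\le 16LR/N^2$, where $L$ is the $\ell_2$-Lipschitz constant of $\nabla\varphi$ and $R\ge\norm{\pmb{\lambda}^\star}_2$ bounds an optimal dual solution. From the smoothness analysis of the dual \eqref{entropic_regularized} (Properties of Entropic POT, Appendix) one has $L=\mathcal{O}(1/\gamma)=\mathcal{O}(\log n/\varepsilon)$, since each column of $\vA$ carries at most three unit entries; and the optimal dual $(\vy^\star,\vz^\star,t^\star)$ obeys $\max\{\norm{\vy^\star}_\infty,\norm{\vz^\star}_\infty,\lvert t^\star\rvert\}=\widetilde{\mathcal{O}}(\norm{\vC}_{\max})$, the logarithm coming from the floor $\norm{\widetilde{\vr}}_{\min},\norm{\widetilde{\vc}}_{\min}\ge\widetilde{\varepsilon}/(8n)$, so, as $\pmb{\lambda}^\star\in\RR^{2n+1}$, $R=\widetilde{\mathcal{O}}(\sqrt{n}\,\norm{\vC}_{\max})$. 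Plugging in $\gamma$, the gap target $\varepsilon/C$ and $\delta=\Theta(\varepsilon/\norm{\vC}_{\max})$,
\[
N \;=\; \mathcal{O}\!\left(\max\!\left\{\tfrac{R}{\sqrt{\gamma\varepsilon}},\;\sqrt{\tfrac{R}{\gamma\delta}}\right\}\right) \;=\; \widetilde{\mathcal{O}}\!\left(\max\!\left\{\tfrac{\sqrt{n}\,\norm{\vC}_{\max}}{\varepsilon},\;\tfrac{n^{1/4}\norm{\vC}_{\max}}{\varepsilon}\right\}\right) \;=\; \widetilde{\mathcal{O}}\!\left(\tfrac{\sqrt{n}\,\norm{\vC}_{\max}}{\varepsilon}\right),
\]
the first term dominating. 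Each \textsc{Apdagd} step costs $\mathcal{O}(n^2)$: one softmax over the $n^2$ entries of the $\vX$-block to form the primal point $\vx(\pmb{\lambda})$, and one multiplication by $\vA$ (which has $\mathcal{O}(n^2)$ nonzeros); the line search contributes only a logarithmic factor, and the single \textsc{Round-POT} call is $\mathcal{O}(n^2)$ by Theorem~\ref{prop:rounding}. Hence the total arithmetic cost is $\widetilde{\mathcal{O}}(n^{5/2}\norm{\vC}_{\max}/\varepsilon)$.

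The main obstacle is pieces (ii) and (iv): POT carries the extra equality $\ones_n^\top\vX\ones_n=s$ and the slack blocks $\vp,\vq$ that OT lacks, so the marginal-perturbation estimate, and especially the step that turns the $(\widetilde{\vr},\widetilde{\vc})$-feasible $\widehat{\vX}$ into a point of $\mathcal{U}(\vr,\vc,s)$, must track how mass is redistributed among $\vX,\vp,\vq$ while preserving $s\le\min\{\norm{\widetilde{\vr}}_1,\norm{\widetilde{\vc}}_1\}$, and must chain through the constant $23$ of Theorem~\ref{prop:rounding} without loss; pinning down the $\widetilde{\mathcal{O}}$ logarithmic factors in $R$ for the three-block dual $(\vy,\vz,t)$ is the other delicate point. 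By contrast, the \textsc{Apdagd} convergence rate and the $L=\mathcal{O}(1/\gamma)$ estimate transfer essentially verbatim from the OT analyses once the dual \eqref{entropic_regularized} and its smoothness are in hand.
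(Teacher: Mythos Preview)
Your proposal is correct and follows essentially the same route as the paper: the four-piece error decomposition, the choice $\gamma=\varepsilon/(4\log n)$, the marginal perturbation to $(\widetilde{\vr},\widetilde{\vc})$, the \textsc{Apdagd} rate from Theorem~\ref{theorem:APDAGD_guarantees} combined with \textsc{Round-POT} (Theorem~\ref{prop:rounding}), and the iteration count $N=\widetilde{\mathcal{O}}(\sqrt{n}\,\norm{\vC}_{\max}/\varepsilon)$ via $L=\mathcal{O}(1/\gamma)$ and $R=\widetilde{\mathcal{O}}(\sqrt{n}\,\norm{\vC}_{\max})$ all match the paper's Step~3. The one substantive point where the paper invests its effort is exactly what you flag as ``the other delicate point'': the bound $\norm{(\vy^\star,\vz^\star,t^\star)}_\infty=\widetilde{\mathcal{O}}(\norm{\vC}_{\max})$ is \emph{not} a direct carry-over from OT, because the third dual variable $t$ (equivalently $w$) couples nontrivially with $\vu,\vv$; the paper's Steps~1--2 introduce the change of variables $(\vu,\vv,w)$ and prove Lemma~\ref{lem:inf_norm_dual} by first upper- and lower-bounding $w^\star$ (using the primal feasibility relations $\sum e^{u_i^\star}=\norm{\vr}_1-s$, strong duality, and the constraint $u_i^\star+v_j^\star+w^\star\le C_{ij}/\gamma$), and only then deducing the $\ell_\infty$ bound on $\vu^\star,\vv^\star$---so your proposal is structurally right but leaves the novel lemma unproved.
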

\begin{proof}[Proof sketch] \textbf{Step 1}: we present the reparameterization $\vu = - \vy / \gamma - \ones, \vv = - \vz / \gamma - \ones$ and $w = -t / \gamma + 1$ for the dual (\ref{entropic_regularized}), leading to the equivalent dual form 
\begin{align*} 
        \min_{\vu, \vv, w} &\sum_{i, j=1}^{n} \exp \left(- C_{i, j} / \gamma + u_i + v_j + w \right) + \sum_{i=1}^{n} \exp(u_i) \\
        &+ \sum_{j=1}^{n} \exp(v_j) - \inner{\vu}{\vr} - \inner{\vv}{\vc} - ws.
\end{align*}
This transformation will facilitate the bounding of the dual variables in later steps. 

\textbf{Step 2}: we proceed to bound the $\ell_\infty$-norm of the transformed optimal dual variables $\|(\vu^\ast, \vv^\ast, w^\ast)\|_\infty$.  Conventional analyses for OT such as \citep{lin2019efficient} are inapplicable to the case of POT due to the addition of the third dual variable $w$ and more intricate dependencies of the dual variables $\vu, \vv$. To this end, our novel proof technique establishes the tight bound of $\|(\vu^\ast, \vv^\ast, w^\ast)\|_\infty = \mathcal{\widetilde{O}}\left( \norm{\vC}_{\max} \right)$, which consequently translates to the final bound for original dual variables $\norm{(\vy^\ast, \vz^\ast, t^\ast)}_2 = \mathcal{\widetilde{O}}\left(\sqrt{n} \norm{\vC}_{\max} \right)$ in Lemma \ref{Bounds_for_(u,v)}. Bounding the $\ell_2$-norm (i.e. bounding $\bar{R}$) is crucial because it contributes to the APDAGD guarantees (Theorem \ref{theorem:APDAGD_guarantees}) and the final complexity.

\textbf{Step 3}: 
Combining the $\bar{R}$ bound from \textbf{Step 2} in view of \citep[Proposition 4.10]{lin2019efficient} and the theoretical guarantees of \textsc{Round-POT} (Theorem \ref{prop:rounding}), we conclude the final computational complexity of APDAGD of $\widetilde{\mathcal{O}}(n^{2.5} / \varepsilon)$.
\end{proof}
\section{Dual Extrapolation (DE)}
Our novel POT rounding algorithm permits the development of Dual Extrapolation (DE) for POT. From our analysis, DE is a first-order and parallelizable algorithm that can approximate POT distance up to $\varepsilon$ accuracy with $\mathcal{\widetilde O}(1/\varepsilon)$ parallel depth and $\mathcal{\widetilde O}(n^2/\varepsilon)$ total work. 
\subsection{Setup}
For each feasible $\vx$, we have $\norm{\vx}_1 = \norm{\vX}_1 + \norm{\vp}_1 + \norm{\vq}_1 = \norm{\vr}_1 + \norm{\vc}_1 - s$. We can normalize $\vx = \vx / (\norm{\vr}_1 + \norm{\vc}_1 -s), \: \vb = \vb / (\norm{\vr}_1 + \norm{\vc}_1 -s)$. These imply $\vx \in \Delta_{n^2+2n}$. The POT problem formulation \eqref{main_problem} is now updated as
\begin{equation} \label{POT_areaconvex}
    \min_{\vx \in \Delta_{n^2+2n}} \:  \inner{\vd}{\vx} ~~ \text{s.t.} ~~ \vA\vx = \vb,
\end{equation}
We then consider the $\ell_1$ penalization for the problem (\ref{POT_areaconvex}) and show that it has equal optimal value and $\varepsilon$-approximate minimizer to those of the POT formulation (\ref{POT_areaconvex}) (more details in $\ell_1$ Penalization subsection in Appendix). Through a primal-dual point of view, the $\ell_1$ penalized objective \eqref{l1_penalized} can be rewritten as
\begin{equation} \label{bilinear_objective}
    \min_{\vx \in \mathcal{X}} \max_{\vy \in \mathcal{Y}} \: F(\vx,\vy) \defeq \vd^\top \vx + 23 \norm{\vd}_\infty \left( \vy^\top \vA \vx - \vy^\top \vb \right), 
\end{equation}
with $\mathcal{X} = \Delta_{n^2+2n}, \mathcal{Y} = [-1,1]^{2n+1}$. Note that the term $23$ comes from the guarantees of  $\textsc{Round-POT}$ (Theorem \ref{prop:rounding}, Lemma \ref{lem:l1_regression}). Let $\mathcal{Z} = \mathcal{X} \times \mathcal{Y}$ such that $\vx \in \mathcal{X}$ and $\vy \in \mathcal{Y}$. For a bilinear objective $F(\vx,\vy)$ that is convex in $\vx$ and concave in $\vy$, it is natural to define the gradient operator $g(\vx,\vy) = (\nabla_{\vx} F(\vx,\vy),- \nabla_{\vy} F(\vx,\vy))$. Specifically for the objective \eqref{bilinear_objective}, we have $g(\vx,\vy) = (\vd + 23 \norm{\vd}_\infty \vA^\top \vy, -23 \norm{\vd}_\infty (\vA \vx -\vb)).$  This minimax objective can be solved with the dual extrapolation framework \citep{nesterov_2006}, which requires strongly convex regularizers. This setup can be relaxed with the notion of area-convexity \citep[Definition 1.2]{Sherman-2017-Area}, in the following definition.
\begin{definition}
    \label{defn:area_convex}
    (Area Convexity) A regularizer $r$ is $\kappa$-area-convex w.r.t an operator g if for any $x_1, x_2, x_3$ in its domain,
    \begin{align*}
        &\kappa \sum_{i}^3 r(x_i) - 3\kappa r \left(\frac{\sum_{i}^3 x_i}{3}\right) \geq \inner{g(x_2)- g(x_1)}{x_2-x_3}.
    \end{align*}
\end{definition}
The regularizer chosen for this framework is the Sherman regularizer, introduced in \citep{Sherman-2017-Area}
\begin{equation} \label{regularizer}
    r(\vx,\vy) = 2 \norm{\vd}_\infty \left( 10 \inner{\vx}{\log{\vx}} + \vx^\top \vA^\top (\vy^2) \right),
\end{equation}
in which $\vy^2$ is entry-wise. While this regularizer has a similar form to that in \citep{Jambulapati-2019-Direct}, the POT formulation leads to a different structure of $\vA$. For instance, $\norm{\vA}_1 = 3$ instead of $2$. This following lemma shows that chosen $r$ is 9-area-convex (its proof is in the Proof of Lemma \ref{9_area_convex} section in the Appendix).
\begin{lemma} \label{9_area_convex}
    $r$ is 9-area-convex with respect to the gradient operator $g$, i.e., $\kappa = 9$.
\end{lemma}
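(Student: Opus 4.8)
The plan is to verify the pointwise semidefinite criterion for area-convexity of \citep{Sherman-2017-Area} (the same tool used for ordinary optimal transport in \citep{Jambulapati-2019-Direct}). Since $r$ is twice continuously differentiable on the relative interior of $\mathcal{Z}=\mathcal{X}\times\mathcal{Y}$ and $g$ is affine with constant Jacobian $\nabla g=\left(\begin{smallmatrix}\zeros & 23\norm{\vd}_\infty\vA^\top\\ -23\norm{\vd}_\infty\vA & \zeros\end{smallmatrix}\right)$, it suffices to show (a) that $r$ is convex there and (b) that, at every interior point $(\vx,\vy)$, the fixed positive-semidefinite inequality of the criterion relating $\nabla^2 r(\vx,\vy)$ and $\nabla g$ holds with $\kappa=9$; the three-point inequality of Definition~\ref{defn:area_convex} then follows by a Taylor/integration argument along the segments joining $x_1,x_2,x_3$ to their barycenter.

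First I would record $\nabla^2 r$. Because $\vx^\top\vA^\top(\vy^2)=\inner{\vA\vx}{\vy^2}$ is affine in $\vx$, writing $\beta=2\norm{\vd}_\infty$ gives
\[
\nabla^2 r=\beta\begin{pmatrix}10\,\diag(\ones\oslash\vx) & 2\,\vA^\top\diag(\vy)\\ 2\,\diag(\vy)\vA & 2\,\diag(\vA\vx)\end{pmatrix}.
\]
For (a), I take the Schur complement with respect to the positive diagonal block $10\beta\diag(\ones\oslash\vx)$; the $\vy$-block residue is $2\beta\diag(\vA\vx)-\tfrac{2\beta}{5}\diag(\vy)\vA\diag(\vx)\vA^\top\diag(\vy)$, and for every $\vb$,
\[
\vb^\top\diag(\vy)\vA\diag(\vx)\vA^\top\diag(\vy)\vb=\sum_i x_i\Big(\textstyle\sum_k A_{ki}y_k b_k\Big)^2\le\norm{\vA}_1\sum_k(\vA\vx)_k b_k^2=3\,\vb^\top\diag(\vA\vx)\vb
\]
by Cauchy--Schwarz with $\sum_k A_{ki}\le\norm{\vA}_1=3$ (every column of $\vA$ has at most three nonzeros) and $|y_k|\le1$ on $\mathcal{Y}$. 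Hence the residue is $\succeq\tfrac{4\beta}{5}\diag(\vA\vx)\succeq0$, so $r$ is convex (the entropy weight $10$ leaves slack here).

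For (b), I would take a second Schur complement, now of the $2\times2$ block matrix from the criterion with respect to its (still diagonal, hence trivially invertible) entropy-Hessian blocks, reducing the required semidefiniteness to an inequality $\left(\begin{smallmatrix}D & -E\\ E & D\end{smallmatrix}\right)\succeq0$ on the $\vy$-coordinates only, where $D$ is a symmetric combination of $2\beta\diag(\vA\vx)$, $\diag(\vy)\vA\diag(\vx)\vA^\top\diag(\vy)$ and $\vA\diag(\vx)\vA^\top$ (with coefficients built from $\kappa,\beta,10$ and $23\norm{\vd}_\infty$) and $E$ is, up to a scalar, the antisymmetric commutator $\diag(\vy)\vA\diag(\vx)\vA^\top-\vA\diag(\vx)\vA^\top\diag(\vy)$. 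One shows $D\succ0$ using the same two bounds $\vA\diag(\vx)\vA^\top\preceq3\diag(\vA\vx)$ and $\diag(\vy)\vA\diag(\vx)\vA^\top\diag(\vy)\preceq3\diag(\vA\vx)$, and absorbs $E$ through the elementary fact that $\left(\begin{smallmatrix}D&-E\\ E&D\end{smallmatrix}\right)\succeq0$ whenever $D\succ0$ and $E^\top D^{-1}E\preceq D$. Propagating the constants $10$, $23$ and $3=\norm{\vA}_1$ through these two nested Schur complements is what pins down $\kappa=9$; equivalently, this is the argument of \citep[the area-convexity lemma]{Jambulapati-2019-Direct} re-run with $\norm{\vA}_1=3$ in place of $2$, the squared dependence on $\norm{\vA}_1$ turning the transport constant into $9$.

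I expect step (b), and specifically the last constant-chasing, to be the main obstacle: the crudest application of the column-sparsity estimate (paying the factor $3$ independently in each of the two nested Schur reductions and bounding the commutator $E$ by the triangle inequality) gives only a weaker constant, so one must instead track that the factor $\norm{\vA}_1=3$ is not genuinely incurred twice and split the AM--GM bound $2\vu^\top E\vv\le\vu^\top D\vu+\vv^\top E^\top D^{-1}E\vv$ sharply; the Hessian computation and the Cauchy--Schwarz/sparsity lemmas themselves are routine.
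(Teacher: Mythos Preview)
Your high-level plan is right: invoke the second-order sufficient condition of \citep{Sherman-2017-Area} (as restated by \citep{Jambulapati-2019-Direct}) and verify, with $\kappa=3$ in that criterion, that
\[
\xi\;=\;\begin{pmatrix}3\,\nabla^2 r(\vz) & -J\\[2pt] J & 3\,\nabla^2 r(\vz)\end{pmatrix}\;\succeq\;0,
\]
which yields $3\cdot 3=9$-area-convexity. Your Hessian computation is correct, as is the observation $\norm{\vA}_1=3$. One small slip: you write ``holds with $\kappa=9$'', but in the criterion itself you need $\kappa=3$; the factor $3$ is built into the theorem.

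Where you diverge from the paper is in \emph{how} you verify $\xi\succeq 0$. You propose two nested Schur complements to reduce to an inequality on the $\vy$-block and then absorb a commutator; you yourself flag that this route does not obviously pin down the constant $9$ and may overpay for $\norm{\vA}_1$. The paper avoids this entirely by a direct quadratic-form expansion: write $(\va\;\vb\;\vc\;\vd)\,\xi\,(\va\;\vb\;\vc\;\vd)^\top$, use $\norm{\vA_{:j}}_1\le 3$ to convert the two entropy terms via
\[
\sum_j \frac{30\,a_j^2}{x_j}\;\ge\;\sum_{i,j}A_{ij}\,\frac{10\,a_j^2}{x_j},\qquad\text{and similarly for }c,
\]
and then group everything as a single $\sum_{i,j}A_{ij}(\cdots)$. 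Inside the bracket, complete squares termwise using $|y_i|\le 1$:
\[
\tfrac{9a_j^2}{x_j}+12a_jb_iy_i+4b_i^2x_j\ge 0,\qquad \tfrac{9c_j^2}{x_j}+12c_jd_iy_i+4d_i^2x_j\ge 0,
\]
and what remains is again a nonnegative sum of squares $(a_j/\sqrt{x_j}-d_i\sqrt{x_j})^2+(c_j/\sqrt{x_j}+b_i\sqrt{x_j})^2+b_i^2x_j+d_i^2x_j$. No Schur complements, no commutator bound, and the constant $3$ from $\norm{\vA}_1$ is spent exactly once (to distribute the $30$ across $\sum_i A_{ij}$), so $\kappa=3$ drops out cleanly. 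Your separate convexity check (a) is correct but redundant: it is subsumed by $\xi\succeq 0$, whose diagonal blocks are $3\nabla^2 r$.
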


\subsection{Algorithmic Development}

\begin{algorithm}[t]
    \caption{Dual Extrapolation for POT}
    \label{alg:Dual_Extrapolation_POT}
    \begin{algorithmic}[1]
    \REQUIRE linearized cost $\vd$; linear operator $\vA$; constraints $\vb$; area-convexity coefficient $\kappa$; initial states \(\vs^0_{\vx} = \zeros_{n^2+2n}, \vs^0_{\vy} = \zeros_{2n+1}\); iterations M (Theorem \ref{Complexity_for_AM})
    \STATE \(\nabla_\vx r(\bar{\vz}) = 20 \norm{
    \vd}_\infty (1 - \log(n^2+2n)) \ones_{n^2 +2n}\)
    \STATE \(\nabla_\vy r(\bar\vz) = \zeros_{2n+1}\)
    \FOR{$t=0,1,2,\ldots,T-1$}
        \STATE \(\vv = \vs^t_{\vx} - \nabla_\vx r(\bar{\vz})\)
         \STATE \(\vu =  \vs^t_{\vy} - \nabla_\vy r(\bar{\vz})\)
        \STATE \( (\vz_\vx^t, \vz_\vy^t) = \mathtt{AM}(M, \vv, \vu)\) 
        \STATE \(\vv = \vv + (\vd + 23 \norm{\vd}_\infty \vA^\top \vz^t_{\vy}) / \kappa\) 
        \STATE \(\vu = \vu - 23 \norm{\vd}_\infty ( \boldsymbol A \vz_{\vx}^t - \vb) / \kappa\) 
        \STATE \( (\vw_{\vx}^t, \vw_{\vy}^t) = \mathtt{AM}(M, \vv, \vu)\)
        \STATE \(\vs^{t+1}_\vx = \vs^t_\vx + (\vd + 23 \norm{\vd}_\infty \vA^\top \vw_{\vy}^t) / (2 \kappa) \)
         \STATE \(\vs^{t+1}_{\vy} = \vs^{t}_{\vy} - 23 \norm{\vd}_\infty( \boldsymbol A \vw_{\vx}^t -\vb) / (2 \kappa) \)
    \ENDFOR
    \ENSURE 
        $ \bar{\vw}_{\vx} = \sum_{t=0}^{T-1} \vw_{\vx}^{t} / T$, $\bar{\vw}_{\vy} = \sum_{t=0}^{T-1} \vw_{\vy}^{t} / T$.
    \end{algorithmic}
\end{algorithm}

The main motivation is the DE Algorithm \ref{alg:General_Dual_Extrapolation} in Appendix, proposed by \citep{nesterov_2006}. This general DE framework essentially has two proximal steps per iteration, while maintaining a state $\vs$ in the dual space.
We follow \citep{Jambulapati-2019-Direct} and update $\vs$ with $1/2 \kappa$ rather than $1/ \kappa$ \citep{nesterov_2006}. The proximal steps (steps 2 and 3 in Algorithm \ref{alg:General_Dual_Extrapolation}) needs to minimize: 
\begin{equation} \label{General_AM}
    P(\vx, \vy) \defeq \inner{\vv}{\vx} + \inner{\vu}{\vy} + r(\vx,\vy).
\end{equation}
This can be solved efficiently with an Alternating Minimization ($\mathtt{AM}$) approach \citep{Jambulapati-2019-Direct}. Details for $\mathtt{AM}$ are included in the Appendix.  
Combining both algorithms, we have the DE for POT Algorithm \ref{alg:Dual_Extrapolation_POT}, where each of the two proximal steps is solved by the $\mathtt{AM}$ subroutine.

\subsection{Computational Complexities}
Firstly, we bound the regularizer $r$ to satisfy the convergence guarantees \citep{Jambulapati-2019-Direct} in Proof of Lemma \ref{T} subsection in Appendix. In that same subsection, we also derive the required number of iterations $T$ in DE with respect to $\Theta$, the range of the regularizer. Next, we have this essential lemma that bounds the number of iterations to evaluate a proximal step.
\begin{theorem} \label{Complexity_for_AM}
    (Complexity of $\mathtt{AM}$) For $T = \lceil 36 \Theta/\varepsilon \rceil$ iterations of DE, $\mathtt{AM}$ Algorithm \ref{alg:alternating_minimization} obtains additive error $\varepsilon / 2$ in 
    \begin{equation*}
        M = 24 \log \left( \left( 840 \norm{\vd}_\infty / \varepsilon^2 + 6 / \varepsilon \right) \Theta + 1336 \norm{\vd}_\infty / 9 \right)
    \end{equation*}
    iterations. This is done in wall-clock time $\mathcal{O}(n^2 \log \eta)$ with $\eta = \log n \norm{\vd}_\infty / \varepsilon$.
\end{theorem}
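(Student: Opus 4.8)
I would treat $\mathtt{AM}$ as exact two-block alternating minimization of the proximal objective $P(\vx,\vy) \defeq \inner{\vv}{\vx} + \inner{\vu}{\vy} + r(\vx,\vy)$ over $\mathcal{X}\times\mathcal{Y} = \Delta_{n^2+2n}\times[-1,1]^{2n+1}$, with $r$ the regularizer \eqref{regularizer}. First I record the two block updates in closed form. For fixed $\vy$, the map $\vx\mapsto P(\vx,\vy)$ is linear plus $20\norm{\vd}_\infty\inner{\vx}{\log\vx}$, so its simplex minimizer is the Gibbs distribution $x_i \propto \exp(-(v_i + 2\norm{\vd}_\infty(\vA^\top(\vy^2))_i)/(20\norm{\vd}_\infty))$, computable in $\mathcal{O}(n^2)$ arithmetic since each column of $\vA$ has at most three unit entries; for fixed $\vx\geq\zeros$, $\vy\mapsto P(\vx,\vy)$ is a separable box-constrained convex quadratic with $y_k^2$-coefficient $2\norm{\vd}_\infty(\vA\vx)_k\geq 0$, minimized coordinatewise by clipping the unconstrained vertex into $[-1,1]$, again $\mathcal{O}(n^2)$ after forming $\vA\vx$. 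Hence each $\mathtt{AM}$ iteration costs $\mathcal{O}(n^2)$; since the argument of the logarithm defining $M$ is polynomial in $\norm{\vd}_\infty$, $1/\varepsilon$ and $\log n$ (using $\Theta = \mathcal{O}(\norm{\vd}_\infty\log n)$ for the regularizer range and $T = \lceil 36\Theta/\varepsilon\rceil$), we get $M = \mathcal{O}(\log\eta)$ and total cost $\mathcal{O}(Mn^2) = \mathcal{O}(n^2\log\eta)$.

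The core step is a geometric convergence bound $P(\vx^{(k)},\vy^{(k)}) - \min_{\mathcal{X}\times\mathcal{Y}} P \leq \rho^{\,k}\bigl(P(\vx^{(0)},\vy^{(0)}) - \min P\bigr)$ for a universal ratio $\rho < 1$ (the constant $24$ in $M$ being $\lceil 1/\log(1/\rho)\rceil$ up to an absorbed factor of $2$). I would derive this by adapting the proximal-solver analysis of \citep{Jambulapati-2019-Direct} to the POT operator $\vA$: the $\vx$-block is $20\norm{\vd}_\infty$-strongly convex with respect to $\norm{\cdot}_1$ on the simplex (Pinsker), whereas the coupling term carries the smaller coefficient $2\norm{\vd}_\infty$ with $\norm{\vA}_1 = 3$ and $\vy^2\in[0,1]^{2n+1}$, so that after the first $\vx$-update every iterate is a Gibbs point with all coordinates bounded below and the $\vy$-block becomes strongly convex along the trajectory. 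The role of the area-convexity of $r$ (Lemma \ref{9_area_convex}) is to convert these one-sided moduli into a contraction of the objective gap by a genuine constant rather than by one that deteriorates with the (exponentially small) curvature floor of the $\vy$-block; the only POT-specific changes to the constants of \citep{Jambulapati-2019-Direct} are $\norm{\vA}_1 = 3$ instead of $2$ and the simplex dimension $n^2+2n$, which I would track explicitly.

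Finally I would do the bookkeeping linking the contraction to the stated $M$. In Algorithm \ref{alg:Dual_Extrapolation_POT} the linear parts handed to $\mathtt{AM}$ are $\vv = \vs^t_\vx - \nabla_\vx r(\bar{\vz})$ and $\vu = \vs^t_\vy - \nabla_\vy r(\bar{\vz})$ (with $\bar{\vz} = \argmin r$ the internal start of $\mathtt{AM}$), up to one gradient-step correction of $\ell_\infty$-size $\mathcal{O}(\norm{\vd}_\infty/\kappa)$, and the accumulated dual state obeys $\norm{\vs^t_\vx}_\infty, \norm{\vs^t_\vy}_\infty = \mathcal{O}(t\,\norm{\vd}_\infty/\kappa)$ for $t \leq T = \lceil 36\Theta/\varepsilon\rceil$; hence $\norm{\vv}_\infty, \norm{\vu}_\infty = \mathcal{O}(\norm{\vd}_\infty T/\kappa + \norm{\vd}_\infty\log n)$. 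Bounding the range of the linear part over the simplex and the entropy over the simplex, the initial suboptimality is $P(\vx^{(0)},\vy^{(0)}) - \min P = \mathcal{O}(\norm{\vd}_\infty T/\kappa + \norm{\vd}_\infty\log n) = \mathcal{O}(\norm{\vd}_\infty\Theta/\varepsilon)$; imposing $\rho^M$ times this quantity $\leq \varepsilon/2$ and solving for $M$ gives $M = \lceil\log_{1/\rho}(\mathcal{O}(\norm{\vd}_\infty\Theta/\varepsilon^2))\rceil$, and retaining the lower-order terms recovers exactly the displayed expression with $\rho = e^{-1/24}$ and the explicit constants $840$, $6$, $1336/9$. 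I expect the main obstacle to be precisely the geometric-convergence step of the second paragraph — in particular, arguing that the degenerate curvature of the $\vy$-block does not enter the rate, which is where area-convexity (rather than ordinary strong convexity) is essential and where the POT-specific form of $\vA$ must be threaded through the \citep{Jambulapati-2019-Direct} argument.
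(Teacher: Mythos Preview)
Your outline is essentially the paper's proof: cite the linear contraction of $\mathtt{AM}$ from \citep[Lemmas 7--8]{Jambulapati-2019-Direct}, bound the initial suboptimality $\delta$ via H\"older using $\norm{\vs_\vx^t}_\infty \leq T\norm{g_\vx}_\infty/(2\kappa)$ and a matching bound on the $\vy$-state (plus the contribution of $\nabla_\vx r(\bar\vz)$ and the regularizer range $\Theta$), and then invert $\rho^M\delta \leq \varepsilon/2$. The paper does exactly this bookkeeping, obtaining $\delta \leq (420\norm{\vd}_\infty/\varepsilon + 3)\Theta + 668\norm{\vd}_\infty/9$ and hence the stated $M$; the per-iteration $\mathcal{O}(n^2)$ cost is also as you describe.

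Two corrections. First, the linear rate of $\mathtt{AM}$ in \citep{Jambulapati-2019-Direct} does \emph{not} come from area-convexity; Lemmas~7--8 there are a direct computation exploiting the specific form of $r$ (the $10{:}1$ ratio between the entropy coefficient and the coupling coefficient, together with $\norm{\vA}_{1\to\infty}$ and $\vy^2\in[0,1]^{2n+1}$), and they yield a fixed contraction factor (the paper uses $23/24$, whence $M=\log_{24/23}(2\delta/\varepsilon)\leq 24\log(\cdot)$). Area-convexity is used only for the outer DE regret bound (Lemma~\ref{regret_bound}), not for $\mathtt{AM}$; so your ``main obstacle'' is actually a black-box citation in the paper, and the mechanism you invoke for it is the wrong one. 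Second, for the $\vy$-state the paper bounds $\norm{\vs_\vy^t}_1$ (paired with $\norm{\vy_0-\vy^\ast}_\infty\leq 2$), not $\norm{\vs_\vy^t}_\infty$; if you pair $\norm{\cdot}_\infty$ with $\norm{\vy_0-\vy^\ast}_1$ you would pick up an unwanted factor of $2n+1$.
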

The proof of this theorem can be found in Proof of Theorem \ref{Complexity_for_AM} subsection in Appendix. The main proof idea is to  bound the number of iterations required to solve the proximal steps. This explicit bound for the number of $\mathtt{AM}$ iterations is novel as in DE for OT \cite{Jambulapati-2019-Direct}, the authors runs a while loop and do not analyze its final number of iterations. We can now calculate the computational complexity of the DE algorithm. The proof is in Proof of Theorem \ref{theorem:DE_complexity} subsection in Appendix.
\begin{theorem} (Complexity of DE)
    \label{theorem:DE_complexity}
    In $\mathcal{\widetilde O}(n^2 \norm{\vC}_{max} / \varepsilon)$ wall-clock time, the DE Algorithm \ref{alg:Dual_Extrapolation_POT} returns $(\bar{\vw}_{\vx}, \bar{\vw}_{\vy}) \in \mathcal{Z}$ so that the duality gap \eqref{duality_gap} is less than $\varepsilon$.
\end{theorem}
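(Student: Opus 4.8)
The plan is to assemble the bound from three pieces: (i) the duality-gap convergence guarantee of dual extrapolation run with an area-convex regularizer, (ii) the area-convexity constant $\kappa = 9$ of the Sherman regularizer $r$ from Lemma~\ref{9_area_convex} together with a bound on its range $\Theta = \sup_{\mathcal{Z}} r - \inf_{\mathcal{Z}} r$ over $\mathcal{Z} = \Delta_{n^2+2n}\times[-1,1]^{2n+1}$, and (iii) the per-call cost of the inner $\mathtt{AM}$ subroutine from Theorem~\ref{Complexity_for_AM}. Throughout, the linearized cost satisfies $\norm{\vd}_\infty = \norm{\vC}_{\max}$, so every appearance of $\norm{\vd}_\infty$ below is an appearance of $\norm{\vC}_{\max}$.

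First I would invoke the inexact dual-extrapolation analysis of \citep{nesterov_2006,Sherman-2017-Area,Jambulapati-2019-Direct} specialized to Algorithm~\ref{alg:Dual_Extrapolation_POT}: if the two proximal subproblems \eqref{General_AM} at each outer iteration are solved to additive accuracy $\varepsilon/2$ (which $\mathtt{AM}$ guarantees by Theorem~\ref{Complexity_for_AM}), then after $T$ outer iterations the averaged iterate $(\bar{\vw}_{\vx},\bar{\vw}_{\vy})$ has duality gap \eqref{duality_gap} for the minimax problem \eqref{bilinear_objective} at most $\kappa\Theta/T$ plus a term of the order of the proximal accuracy. The range $\Theta$ is then bounded (the estimate for $r$ in Lemma~\ref{T}) using $\inner{\vx}{\log\vx}\in[-\log(n^2+2n),0]$ on the simplex and $\vx^\top\vA^\top(\vy^2) = (\vA\vx)^\top\vy^2 \le \norm{\vA\vx}_1\norm{\vy^2}_\infty \le \norm{\vA}_1 = 3$, which yields $\Theta \le 2\norm{\vd}_\infty\bigl(10\log(n^2+2n)+3\bigr) = \widetilde{\mathcal{O}}(\norm{\vC}_{\max})$. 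Substituting $\kappa = 9$ and $T = \lceil 36\Theta/\varepsilon\rceil$ gives $\kappa\Theta/T \le \varepsilon/4$, and adding the $\mathtt{AM}$ contribution of order $\varepsilon/2$ keeps the duality gap strictly below $\varepsilon$.

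Next I would count the work. Each outer iteration of Algorithm~\ref{alg:Dual_Extrapolation_POT} consists of two calls to $\mathtt{AM}$ and $O(1)$ matrix--vector products with $\vA$ and $\vA^\top$; since $\vA$ has $O(n^2)$ nonzeros each such product costs $O(n^2)$, and by Theorem~\ref{Complexity_for_AM} a single $\mathtt{AM}$ call runs in $O(n^2\log\eta)$ time with $\eta = \log n\,\norm{\vd}_\infty/\varepsilon$. Hence one outer iteration costs $\widetilde{\mathcal{O}}(n^2)$, and the total over $T = \lceil 36\Theta/\varepsilon\rceil$ iterations is $\widetilde{\mathcal{O}}(\Theta/\varepsilon)\cdot\widetilde{\mathcal{O}}(n^2) = \widetilde{\mathcal{O}}(n^2\norm{\vC}_{\max}/\varepsilon)$, with the logarithmic factors in $n$, $\norm{\vC}_{\max}$ and $1/\varepsilon$ absorbed into $\widetilde{\mathcal{O}}$; exploiting parallelism inside $\mathtt{AM}$ trades the wall-clock count for $\widetilde{\mathcal{O}}(1/\varepsilon)$ depth at the same total work.

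I expect the main obstacle to be the interface in step (i)--(ii): verifying that replacing the exact proximal maps by $\mathtt{AM}$'s $\varepsilon/2$-approximate minimizers corrupts the gap certificate by only an $O(\varepsilon)$ amount, across \emph{both} proximal steps per outer iteration. This is precisely where the $1/(2\kappa)$ state update (following \citep{Jambulapati-2019-Direct} rather than the $1/\kappa$ of \citep{nesterov_2006}) is needed to make the inexact telescoping close, and one must carefully track the mirror-descent-type inequality each step produces. A lesser but genuine technical point is the $\Theta$ estimate, which must use the POT-specific $\norm{\vA}_1 = 3$ (the regularizer \eqref{regularizer} and the factor $23$ from \textsc{Round-POT} differ from the OT case of \citep{Jambulapati-2019-Direct}); the remainder is bookkeeping of logarithmic factors.
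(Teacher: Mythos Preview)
Your proposal is correct and follows essentially the same approach as the paper: combine the area-convexity constant $\kappa=9$ (Lemma~\ref{9_area_convex}), the range bound on $r$ giving $\Theta=\widetilde{\mathcal{O}}(\norm{\vd}_\infty)$ (Lemma~\ref{T}), the inexact dual-extrapolation regret bound (Lemma~\ref{regret_bound}) to set $T=\lceil 36\Theta/\varepsilon\rceil$, and the per-call $\mathtt{AM}$ cost (Theorem~\ref{Complexity_for_AM}), then multiply. The paper's own proof is in fact just three sentences that invoke Theorem~\ref{Complexity_for_AM} and multiply by $T$; the concern you flag about the inexact-proximal interface is exactly what the paper offloads to Lemma~\ref{regret_bound} (Corollary~1 of \citep{Jambulapati-2019-Direct}), so no additional work is needed there.
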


\section{Numerical Experiments}
In this section, we provide numerical results on approximating POT and its applications using the algorithms presented above.\footnote{Implementation and numerical experiments can be found at \url{https://github.com/joshnguyen99/partialot}.} In all settings, the optimal solution is found by solving the linear program \eqref{eq:pot_formulation} using the \texttt{cvxpy} package. Due to space constraint, we also include many extra experiments such as domain adaptation application, large-scale APDAGD, run time for varying $\epsilon$ comparison, and revised Sinkhorn performance in Appendix, further solidifying the efficiency and practicality of the proposed algorithms.
\begin{figure}
    \centering
    \includegraphics[width=0.45\textwidth]{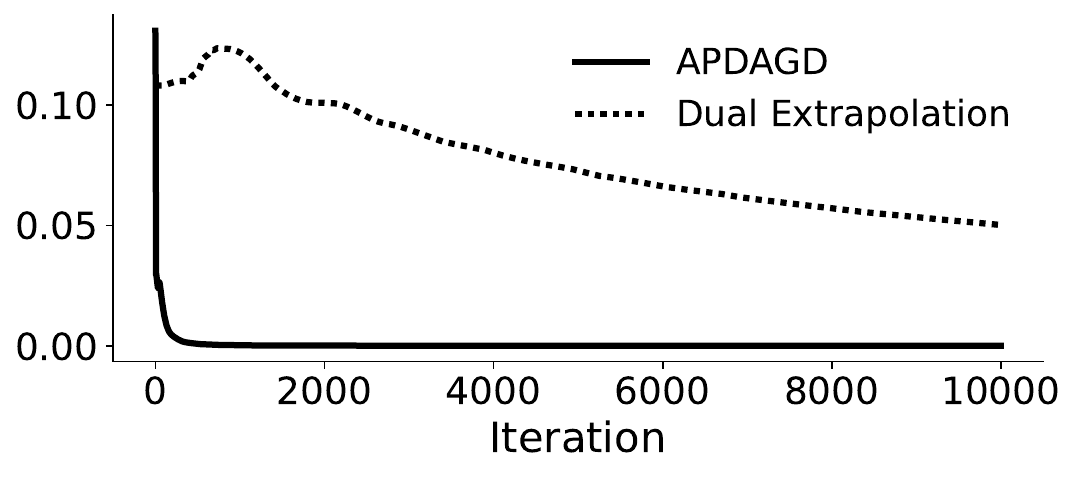}
    \caption{Primal optimality gap for solutions produced by APDAGD and Dual Extrapolation.}
    \label{fig:de_vs_apdagd}
\end{figure}
\begin{figure}
    \centering
    \includegraphics[width=0.9\linewidth]{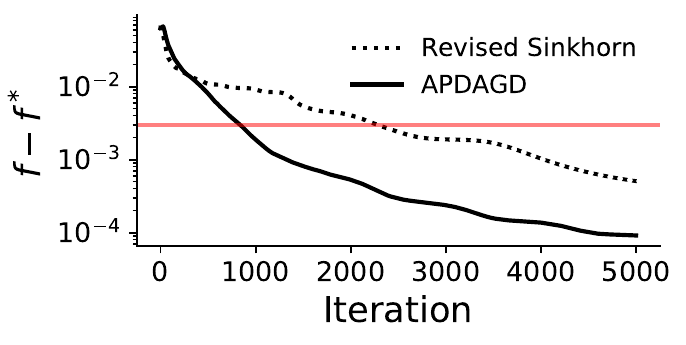}
    \caption{Primal optimality gap against optimization rounds achieved by our \textbf{revised} Sinkhorn and APDAGD for POT. The marginal distributions are taken from the later color transfer application in this section, and the red horizontal line depicts the pre-defined tolerance ($\varepsilon$) for both algorithms.}
    \label{fig:Feasible_Sinkhorn}
\end{figure}
\subsection{Run Time Comparison}
\textbf{APGAGD vs DE}: We are able to implement the DE algorithm for POT while the authors of DE for OT faced numerical overflow and had to use mirror prox instead “for numerical stability considerations” \cite{Jambulapati-2019-Direct}. For the setup of Figure \ref{fig:de_vs_apdagd}, we use images in the CIFAR-10 dataset \citep{krizhevsky2009learning} as the marginals. More details are included in the Further Experiment Setup Details section in Appendix. In Figure \ref{fig:de_vs_apdagd}, despite having a better theoretical complexity, DE has relatively poor practical performance compared to APDAGD. This is not surprising as previous works on this class of algorithms like \citep{pmlr-v130-dvinskikh21a} reach similar conclusions on DE's practical limitations. This can be partly explained by the large constants that are dismissed by the asymptotic computational complexity. Thus, in our applications in the following subsection, we will use APDAGD or revised Sinkhorn.

\textbf{APGAGD vs Sinkhorn}: For the same setting with CIFAR-10 dataset, we report that the ratio between the average per iteration cost of APDAGD and that of Sinkhorn is 0.68. Furthermore, in the Run Time for Varying $\varepsilon$ section in Appendix, we reproduce the same result in Figure 6 as \textbf{Figure 1 in (Dvurechensky et al., 2018)}, comparing the runtime of APDAGD and Sinkhorn for varying $\varepsilon$.

\subsection{Revised Sinkhorn}
Using the similar setting as the later example of color transfer, we can empirically verify that our revised Sinkhorn procedure can achieve the required tolerance $\varepsilon$ of the POT problem in Figure  (as opposed to pre-revised Sinkhorn in Figure \ref{fig:Infeasible_Sinkhorn}).

\subsection{Point Cloud Registration}
We now present an application of POT in point set registration, a common task in shape analysis. Start with two point clouds in three dimensions $R = \{ \vx_i \in \RR^3 \mid i = 1, \ldots, m \}$ and $Q = \{ \vy_j \in \RR^3 \mid j = 1, \ldots, n \}$. The objective is to find a transformation, consisting of a rotation and a translation, that best aligns the two point clouds. When the initial point clouds contain significant noise or missing data, the registration result is often badly aligned. Here, we consider a scenario where one set has missing values. In Figure \ref{fig:point_cloud} (a), the blue point cloud is set to contain the front half of the rabbit, retaining about 45\% of the original points. A desirable transformation must align the first halves of the two rabbits correctly.
Figure \ref{fig:point_cloud} compares the point clouds registration result using these methods. If $\vT$ is simply the OT matrix, not subject to a total transported mass constraint, the blue cloud is clearly not well-aligned with the red cloud. This is because the points for which the blue cloud is missing (i.e., the right half of the rabbit) are in the red cloud. If the total mass transported is set to $s = \frac{\min\{m, n\}}{\max\{m, n\}}$, then we end up with a POT matrix $\vT$ with all entries summing to $s$. In Figures \ref{fig:point_cloud} (c) and (d), the POT solution leads to a much better result than the OT solution: the left halves of the rabbit are closer together. Importantly, the feasible solution obtained by APDAGD leads to an even better alignment, compared to the pre-revised Sinkhorn, due to its infeasibility.

\begin{figure}
    \centering
    \includegraphics[width=1\linewidth]{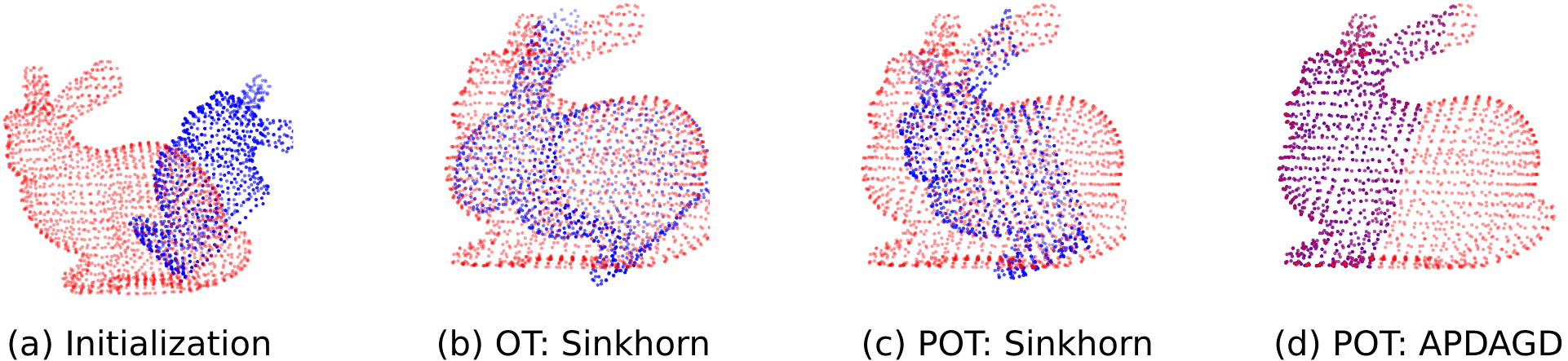}
    \caption{Point cloud registration using (partial) optimal transport. (a): Initial point sets with one set (in blue) missing 45\% of the points. (b): Registration result obtained after transforming the blue point cloud using the OT plan by Sinkhorn. (c): Registration result using the POT plan by \textbf{pre-revised} Sinkhorn \cite{nhatho-mmpot}. (d): Registration result using the POT plan by APDAGD.}
    \label{fig:point_cloud}
\end{figure}

\subsection{Color Transfer}
A frequent application of OT in computer vision is color transfer. POT offers flexibility in transferring colors between two possibly different-sized images, in contrast to OT which requires two color histograms to be normalized \citep{bonneel2015sliced}. We follow the setup by \citep{Blondel-2018-Smooth} and the detail of the implementation is described in the Further Experiment Setup Details section in Appendix. Our results are presented in Figure \ref{fig:color_transfer_comparison}. The third row of Figure \ref{fig:color_transfer_comparison} displays the resulting image produced by APDAGD with different levels of $\alpha$ (or transported mass $s$). Increasing $\alpha$ makes the wall color closer to the lighter part of the kiwi in the target image but comes at a cost of saturating the window frames' white. We emphasize that $\alpha$ is a tunable parameter, and the user can pick the most suitable level of transported mass. This can be more flexible than the vanilla OT which stringently requires all marginal masses to be normalized. We also emphasize that qualitatively, the solution produced by APDAGD closely matches the exact solution in (d), in contrast to \textbf{pre-revised} Sinkhorn's \cite{nhatho-mmpot}. This highlights the impact of having a feasible and optimal mapping on the quality of the application. 

\begin{figure}
    \centering
  \includegraphics[width=1\linewidth]{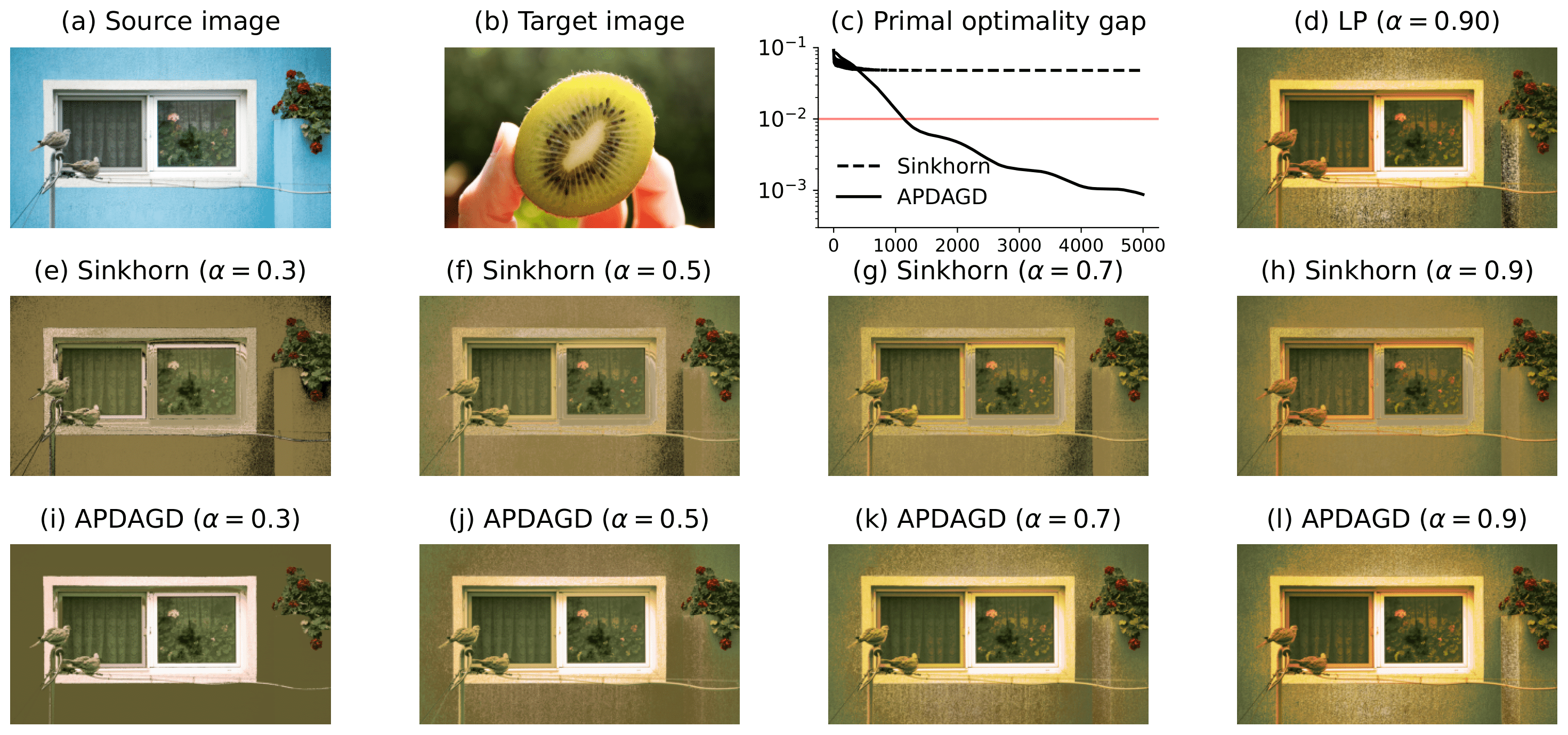}
  \caption{Color transfer using POT. The source and target images are of different sizes, giving rise to unbalanced color histograms. We apply Sinkhorn and APDAGD to solve the entropic regularized POT problem between these histograms. The optimality gap $\inner{\vC}{\vX} - f^*$ after each iteration is displayed in (c). The exact solution, retrieved by the Gurobi solver is presented in (d). On the second and third rows, the transformed image is displayed for different levels of $\alpha$, showing the flexibility of POT as opposed to OT.}  
  \label{fig:color_transfer_comparison}
\end{figure}
\section{Conclusion}
In this paper, we first examine the infeasibility of Sinkhorn for POT. We then propose a a novel rounding algorithm which facilitates our development of feasible Sinkhorn procedure with guarantees, APDAGD and DE. DE achieves the best theoretical complexity while APDAGD and revised Sinkhorn are practically efficient algorithms, as demonstrated in our extensive experiments. We believe the rigor and applicability of our proposed methods will further facilitate the practical adoptions of POT in machine learning applications.

\section{Acknowledgements}
We would like to thank the reviewers of AAAI 2024 for their detailed and insightful comments and Dr. Darina Dvinskikh for sharing the numerical implementations of their work \citep{pmlr-v130-dvinskikh21a}. 


\bibliography{refs}

\begin{thebibliography}{48}
\providecommand{\natexlab}[1]{#1}

\bibitem[{Altschuler, Weed, and Rigollet(2017)}]{altschuler2017near}
Altschuler, J.; Weed, J.; and Rigollet, P. 2017.
\newblock Near-linear time approximation algorithms for optimal transport via Sinkhorn iteration.
\newblock In \emph{Advances in Neural Information Processing Systems}, 1964--1974.

\bibitem[{Balaji, Chellappa, and Feizi(2020)}]{balaji2020robust}
Balaji, Y.; Chellappa, R.; and Feizi, S. 2020.
\newblock Robust optimal transport with applications in generative modeling and domain adaptation.
\newblock \emph{Advances in Neural Information Processing Systems}, 33: 12934--12944.

\bibitem[{Benamou et~al.(2015)Benamou, Carlier, Cuturi, Nenna, and Peyr{\'e}}]{benamou2015iterative}
Benamou, J.-D.; Carlier, G.; Cuturi, M.; Nenna, L.; and Peyr{\'e}, G. 2015.
\newblock Iterative Bregman projections for regularized transportation problems.
\newblock \emph{SIAM Journal on Scientific Computing}, 37(2): A1111--A1138.

\bibitem[{Blondel, Seguy, and Rolet(2018{\natexlab{a}})}]{Blondel-2018-Smooth}
Blondel, M.; Seguy, V.; and Rolet, A. 2018{\natexlab{a}}.
\newblock Smooth and Sparse Optimal Transport.
\newblock In \emph{AISTATS}, 880--889.

\bibitem[{Blondel, Seguy, and Rolet(2018{\natexlab{b}})}]{blondel2018smooth}
Blondel, M.; Seguy, V.; and Rolet, A. 2018{\natexlab{b}}.
\newblock Smooth and sparse optimal transport.
\newblock In \emph{International conference on artificial intelligence and statistics}, 880--889. PMLR.

\bibitem[{Bonneel and Coeurjolly(2019)}]{Bonneel_2019}
Bonneel, N.; and Coeurjolly, D. 2019.
\newblock SPOT: Sliced Partial Optimal Transport.
\newblock \emph{ACM Transactions on Graphics}.

\bibitem[{Bonneel et~al.(2015)Bonneel, Rabin, Peyr{\'e}, and Pfister}]{bonneel2015sliced}
Bonneel, N.; Rabin, J.; Peyr{\'e}, G.; and Pfister, H. 2015.
\newblock Sliced and radon wasserstein barycenters of measures.
\newblock \emph{Journal of Mathematical Imaging and Vision}, 51(1): 22--45.

\bibitem[{Caffarelli and McCann(2010)}]{Caffarelli_2010}
Caffarelli, L.~A.; and McCann, R.~J. 2010.
\newblock Free boundaries in optimal transport and Monge-Ampère obstacle problems.
\newblock \emph{Annals of Mathematics}, 171(2): 673--730.

\bibitem[{Chapel, Alaya, and Gasso(2020)}]{Chapel-nips2020}
Chapel, L.; Alaya, M.~Z.; and Gasso, G. 2020.
\newblock Partial Optimal Tranport with applications on Positive-Unlabeled Learning.
\newblock In \emph{Advances in Neural Information Processing Systems 33}.

\bibitem[{Chizat et~al.(2015)Chizat, Peyré, Schmitzer, and Vialard}]{Chizat_2015}
Chizat, L.; Peyré, G.; Schmitzer, B.; and Vialard, F.-X. 2015.
\newblock Unbalanced Optimal Transport: Dynamic and Kantorovich Formulation.

\bibitem[{Chizat et~al.(2017)Chizat, Peyré, Schmitzer, and Vialard}]{chizat2017scaling}
Chizat, L.; Peyré, G.; Schmitzer, B.; and Vialard, F.-X. 2017.
\newblock Scaling Algorithms for Unbalanced Transport Problems.
\newblock arXiv:1607.05816.

\bibitem[{Courty et~al.(2017{\natexlab{a}})Courty, Flamary, Habrard, and Rakotomamonjy}]{courty2017joint}
Courty, N.; Flamary, R.; Habrard, A.; and Rakotomamonjy, A. 2017{\natexlab{a}}.
\newblock Joint distribution optimal transportation for domain adaptation.
\newblock \emph{Advances in neural information processing systems}, 30.

\bibitem[{Courty et~al.(2017{\natexlab{b}})Courty, Flamary, Tuia, and Rakotomamonjy}]{Courty-2017-Optimal}
Courty, N.; Flamary, R.; Tuia, D.; and Rakotomamonjy, A. 2017{\natexlab{b}}.
\newblock Optimal Transport for Domain Adaptation.
\newblock \emph{IEEE Transactions on Pattern Analysis and Machine Intelligence}, 39(9): 1853--1865.

\bibitem[{Cuturi(2013)}]{Cuturi-2013-Sinkhorn}
Cuturi, M. 2013.
\newblock Sinkhorn distances: {L}ightspeed computation of optimal transport.
\newblock In \emph{Advances in Neural Information Processing Systems}, 2292--2300.

\bibitem[{Cuturi and Peyr{\'e}(2016)}]{cuturi2016smoothed}
Cuturi, M.; and Peyr{\'e}, G. 2016.
\newblock A smoothed dual approach for variational Wasserstein problems.
\newblock \emph{SIAM Journal on Imaging Sciences}, 9(1): 320--343.

\bibitem[{Dennis and Mor{\'e}(1977)}]{dennis1977quasi}
Dennis, J.~E., Jr; and Mor{\'e}, J.~J. 1977.
\newblock Quasi-Newton methods, motivation and theory.
\newblock \emph{SIAM review}, 19(1): 46--89.

\bibitem[{Dvinskikh and Tiapkin(2021)}]{pmlr-v130-dvinskikh21a}
Dvinskikh, D.; and Tiapkin, D. 2021.
\newblock Improved Complexity Bounds in Wasserstein Barycenter Problem.
\newblock In Banerjee, A.; and Fukumizu, K., eds., \emph{Proceedings of The 24th International Conference on Artificial Intelligence and Statistics}, volume 130 of \emph{Proceedings of Machine Learning Research}, 1738--1746. PMLR.

\bibitem[{Dvurechensky, Gasnikov, and Kroshnin(2018)}]{Dvurechensky-2018-Computational}
Dvurechensky, P.; Gasnikov, A.; and Kroshnin, A. 2018.
\newblock Computational Optimal Transport: {C}omplexity by Accelerated Gradient Descent Is Better Than by {S}inkhorn’s Algorithm.
\newblock In \emph{International conference on machine learning}, 1367--1376.

\bibitem[{Figalli(2010)}]{figalli2010optimal}
Figalli, A. 2010.
\newblock The optimal partial transport problem.
\newblock \emph{Archive for rational mechanics and analysis}, 195(2): 533--560.

\bibitem[{Gramfort, Peyr{\'{e}}, and Cuturi(2015)}]{Gramfort_2015}
Gramfort, A.; Peyr{\'{e}}, G.; and Cuturi, M. 2015.
\newblock Fast Optimal Transport Averaging of Neuroimaging Data.
\newblock \emph{CoRR}, abs/1503.08596.

\bibitem[{Guminov et~al.(2021)Guminov, Dvurechensky, Tupitsa, and Gasnikov}]{guminov2021accelerated}
Guminov, S.; Dvurechensky, P.; Tupitsa, N.; and Gasnikov, A. 2021.
\newblock Accelerated Alternating Minimization, Accelerated Sinkhorn's Algorithm and Accelerated Iterative Bregman Projections.
\newblock arXiv:1906.03622.

\bibitem[{Jambulapati, Sidford, and Tian(2019)}]{Jambulapati-2019-Direct}
Jambulapati, A.; Sidford, A.; and Tian, K. 2019.
\newblock A Direct $\widetilde{O}(1/\varepsilon)$ Iteration Parallel Algorithm for Optimal Transport.
\newblock \emph{ArXiv Preprint: 1906.00618}.

\bibitem[{Janati, Cuturi, and Gramfort(2019)}]{Janati_Wasserstein_2019}
Janati, H.; Cuturi, M.; and Gramfort, A. 2019.
\newblock Wasserstein Regularization for Sparse Multi-task Regression.
\newblock In \emph{AISTATS}.

\bibitem[{Kantorovich(1942)}]{Kantorovich-1942-Translocation}
Kantorovich, L.~V. 1942.
\newblock On the translocation of masses.
\newblock In \emph{Dokl. Akad. Nauk. USSR (NS)}, volume~37, 199--201.

\bibitem[{Kawano, Koide, and Otaki(2021)}]{Kawano_2021}
Kawano, K.; Koide, S.; and Otaki, K. 2021.
\newblock Partial Wasserstein Covering.
\newblock \emph{CoRR}, abs/2106.00886.

\bibitem[{Krizhevsky and Hinton(2009)}]{krizhevsky2009learning}
Krizhevsky, A.; and Hinton, G. 2009.
\newblock Learning multiple layers of features from tiny images.
\newblock \emph{Technical Report TR-2009}.

\bibitem[{Lan, Ouyang, and Zhou(2021)}]{lan2021graph}
Lan, G.; Ouyang, Y.; and Zhou, Y. 2021.
\newblock Graph topology invariant gradient and sampling complexity for decentralized and stochastic optimization.
\newblock arXiv:2101.00143.

\bibitem[{Le et~al.(2021)Le, Nguyen, Pham, and Ho}]{nhatho-mmpot}
Le, K.; Nguyen, H.; Pham, T.; and Ho, N. 2021.
\newblock On Multimarginal Partial Optimal Transport: Equivalent Forms and Computational Complexity.

\bibitem[{Liero, Mielke, and Savar\'{e}(2018)}]{Liero_Optimal_2018}
Liero, M.; Mielke, A.; and Savar\'{e}, M.~I. 2018.
\newblock Optimal entropy-transport problemsand a new {H}ellinger–{K}antorovich distance between positive measures.
\newblock \emph{Inventiones Mathematicae}, 211: 969--1117.

\bibitem[{Lin, Ho, and Jordan(2019)}]{lin2019efficient}
Lin, T.; Ho, N.; and Jordan, M. 2019.
\newblock On Efficient Optimal Transport: An Analysis of Greedy and Accelerated Mirror Descent Algorithms.
\newblock In \emph{International Conference on Machine Learning}, 3982--3991.

\bibitem[{Liu et~al.(2020)Liu, Zhang, Xie, Shen, Qian, and Zheng}]{Liu_2020}
Liu, W.; Zhang, C.; Xie, J.; Shen, Z.; Qian, H.; and Zheng, N. 2020.
\newblock Partial Gromov-Wasserstein Learning for Partial Graph Matching.
\newblock \emph{CoRR}, abs/2012.01252.

\bibitem[{Nesterov(2005)}]{nesterov2005smooth}
Nesterov, Y. 2005.
\newblock Smooth minimization of non-smooth functions.
\newblock \emph{Mathematical programming}, 103(1): 127--152.

\bibitem[{Nesterov(2007)}]{nesterov_2006}
Nesterov, Y. 2007.
\newblock Dual extrapolation and its applications to solving variational inequalities and related problems.
\newblock \emph{Mathematical Programming}, 109(2-3): 319--344.

\bibitem[{Nguyen and Maguluri(2023)}]{hoang_nonlinear_sa}
Nguyen, H.~H.; and Maguluri, S.~T. 2023.
\newblock Stochastic Approximation for Nonlinear Discrete Stochastic Control: Finite-Sample Bounds.
\newblock arXiv:2304.11854.

\bibitem[{Nguyen et~al.(2022{\natexlab{a}})Nguyen, Nguyen, Pham, Ho et~al.}]{nguyen2022improving}
Nguyen, K.; Nguyen, D.; Pham, T.; Ho, N.; et~al. 2022{\natexlab{a}}.
\newblock Improving mini-batch optimal transport via partial transportation.
\newblock In \emph{International Conference on Machine Learning}, 16656--16690. PMLR.

\bibitem[{Nguyen et~al.(2022{\natexlab{b}})Nguyen, Nguyen, Zhou, and Nguyen}]{gem-uot}
Nguyen, Q.~M.; Nguyen, H.~H.; Zhou, Y.; and Nguyen, L.~M. 2022{\natexlab{b}}.
\newblock On Unbalanced Optimal Transport: Gradient Methods, Sparsity and Approximation Error.

\bibitem[{Nietert, Cummings, and Goldfeld(2023)}]{nietert2023robust}
Nietert, S.; Cummings, R.; and Goldfeld, Z. 2023.
\newblock Robust Estimation under the Wasserstein Distance.
\newblock \emph{arXiv preprint arXiv:2302.01237}.

\bibitem[{Pele and Werman(2008)}]{Pele2008ALT}
Pele, O.; and Werman, M. 2008.
\newblock A Linear Time Histogram Metric for Improved SIFT Matching.
\newblock In \emph{European Conference on Computer Vision}.

\bibitem[{Pham et~al.(2020)Pham, Le, Ho, Pham, and Bui}]{UOT_complexity}
Pham, K.; Le, K.; Ho, N.; Pham, T.; and Bui, H. 2020.
\newblock On Unbalanced Optimal Transport: An Analysis of Sinkhorn Algorithm.
\newblock In \emph{ICML}.

\bibitem[{Polyak(1969)}]{polyak1969conjugate}
Polyak, B.~T. 1969.
\newblock The conjugate gradient method in extremal problems.
\newblock \emph{USSR Computational Mathematics and Mathematical Physics}, 9(4): 94--112.

\bibitem[{Qin et~al.(2022)Qin, Zhang, Liu, and Chen}]{qin2022rigid}
Qin, H.; Zhang, Y.; Liu, Z.; and Chen, B. 2022.
\newblock Rigid Registration of Point Clouds Based on Partial Optimal Transport.
\newblock In \emph{Computer Graphics Forum}, volume~41, 365--378. Wiley Online Library.

\bibitem[{Rolet, Cuturi, and Peyré(2016)}]{pmlr-v51-rolet16}
Rolet, A.; Cuturi, M.; and Peyré, G. 2016.
\newblock Fast Dictionary Learning with a Smoothed Wasserstein Loss.
\newblock In Gretton, A.; and Robert, C.~C., eds., \emph{Proceedings of the 19th International Conference on Artificial Intelligence and Statistics}, volume~51 of \emph{Proceedings of Machine Learning Research}, 630--638. Cadiz, Spain: PMLR.

\bibitem[{Rubner, Tomasi, and Guibas(2000)}]{rubner2000earth}
Rubner, Y.; Tomasi, C.; and Guibas, L.~J. 2000.
\newblock The earth mover's distance as a metric for image retrieval.
\newblock \emph{International journal of computer vision}, 40(2): 99--121.

\bibitem[{Sarlin et~al.(2019)Sarlin, DeTone, Malisiewicz, and Rabinovich}]{Sarlin_2019}
Sarlin, P.; DeTone, D.; Malisiewicz, T.; and Rabinovich, A. 2019.
\newblock SuperGlue: Learning Feature Matching with Graph Neural Networks.
\newblock \emph{CoRR}, abs/1911.11763.

\bibitem[{Sherman(2017)}]{Sherman-2017-Area}
Sherman, J. 2017.
\newblock Area-convexity, $\ell_{\infty}$ regularization, and undirected multicommodity flow.
\newblock In \emph{STOC}, 452--460. ACM.

\bibitem[{Villani(2008)}]{Villani-09-Optimal}
Villani, C. 2008.
\newblock \emph{Optimal transport: Old and New}.
\newblock Springer.

\bibitem[{Wang et~al.(2022)Wang, Xue, Lei, and Xia}]{Wang_2022}
Wang, Z.; Xue, N.; Lei, L.; and Xia, G.-S. 2022.
\newblock Partial Wasserstein Adversarial Network for Non-rigid Point Set Registration.
\newblock In \emph{International Conference on Learning Representations}.

\bibitem[{Yang and Uhler(2019)}]{Yang_Scalable_2019}
Yang, K.~D.; and Uhler, C. 2019.
\newblock Scalable Unbalanced Optimal Transport using Generative Adversarial Networks.
\newblock In \emph{ICLR}.

\end{thebibliography}
\onecolumn
\section{Additional Related Literature}
Another alternative to OT for unbalanced measures, Unbalanced Optimal Transport (UOT) regularizes the objective by penalizing the marginal constraints via some given divergences \citep{UOT_complexity, Liero_Optimal_2018} including Kullback-Leiber \citep{chizat2017scaling} or squared $\ell_2$ norm \citep{Blondel-2018-Smooth}. Several works have studied UOT both in terms of computational complexity \citep{UOT_complexity, gem-uot} and applications \citep{Janati_Wasserstein_2019, balaji2020robust, Yang_Scalable_2019}. In practical applications, the structure of UOT and recent results in the approximation of OT with UOT \citep{gem-uot} suggest that UOT is best used in the time-varying regimes 
or strict control of mass is not required. For these reasons, UOT and POT cannot be directly compared to each other, and each serves separate purposes in different ML settings. The relaxation of the marginal constraints---which are strictly imposed by the standard OT and UOT---and the control over the total mass transported have granted POT immense flexibility compared to UOT. 

Similar to OT, POT or entropic regularized POT problems belong to the class of minimization problems with linear constraints. Given the structure of linear constraints and the property of strong duality, the prevalent method is to next formulate the Lagrange dual problem. It is then intuitive to solve the dual problem with first-order methods like Quasi-Newton \citep{dennis1977quasi} or Conjugate Gradient \citep{polyak1969conjugate}. Indeed, previous works \citep{cuturi2016smoothed, blondel2018smooth} utilized L-BFGS method to find the OT dual Lagrange problem. Based on the Accelerated Gradient Descent 
method with primal-dual updates, APDAGD  \citep{Dvurechensky-2018-Computational} gives better computational complexity and practicality compared to L-BFGS and other previous first-order methods applied to OT such as Quasi-Newton or Conjugate Gradient. With these APDAGD's theoretical and practical performances, we naturally want to develop a first-order framework based on AGDAGD to solve the POT problem. Nevertheless, there is ample room to further extend this first-order framework. For instance, one may look to develop a stochastic solver to handle stochastic gradient computations \cite{hoang_nonlinear_sa} or extend this method to decentralized settings such as \cite{lan2021graph}.


\section{Extra Algorithms}
\subsection{Sinkhorn}
\begin{algorithm}
    \caption{Sinkhorn for POT}
    \label{alg:Sinkhorn}
    \begin{algorithmic}[1]
        \REQUIRE{cost matrix $\vC$, marginals $\vr, \vc$, regularization term $\gamma$, transported mass $s$.}
        \STATE $\vC \gets \widetilde{\vC}, \vr \gets \widetilde{\vr}, \vq \gets \widetilde{\vq}$ (Updates follow the discussion in Revisiting Sinkhorn section) 
        \STATE $\widetilde{\vX} \gets \textsc{Sinkhorn}(\widetilde{\vC}, \widetilde{\vr}, \widetilde{\vc}, \gamma)$
        \STATE $\bar{\vX} \gets \textsc{Round}(\widetilde{\vX}, \mathcal{U}(\widetilde{\vr}, \widetilde{\vc}))$ \citep[Algorithm 2]{altschuler2017near}
        \ENSURE $\bar{\vX}[1:n,1:n]$
    \end{algorithmic}
\end{algorithm}
\subsection{APDAGD}
\begin{algorithm}
    \caption{Adaptive Primal-Dual Accelerated Gradient Descent (APDAGD) \citep[Algorithm 3]{Dvurechensky-2018-Computational}}
    \label{alg:APDAGD}
    \begin{algorithmic}[1]
        \REQUIRE accuracy $\varepsilon_f, \varepsilon_{eq} > 0$; initial estimate $L_0$ such that $0 < L_0 < 2L$; initialize $M_{-1} = L_0$, $i_0 = k = \alpha_0 = \beta_0 = 0$, $\pmb{\eta}_0 = \pmb{\zeta}_0 = \pmb{\lambda}_0 = \zeros$. 
        \WHILE{$f(\hat{\vx}_{k+1})+\varphi(\pmb{\eta}_{k+1}) \leq \varepsilon_f, \norm{\vA \hat{\vx}_{k+1} -\vb}_2 \leq \varepsilon_{eq}$}
            \WHILE{$\varphi(\pmb{\eta}_{k+1}) \leq \varphi(\pmb{\lambda}_{k+1}) + \inner{\nabla \varphi(\pmb{\lambda}_{k+1})}{\pmb{\eta}_{k+1} - \pmb{\lambda}_{k+1}} + \dfrac{M_k}{2}\norm{\pmb{\eta}_{k+1} - \pmb{\lambda}_{k+1}}_2^2$}
                \STATE \( M_k = 2^{i_k-1} M_k\) \\
                find $\alpha_{k+1}$ such that $\beta_{k+1} \defeq \beta_{k} + \alpha_{k+1} = M_k \alpha_{k+1}^2$
                \STATE \( \tau_k = \dfrac{\alpha_{k+1}}{\beta_{k+1}} \) 
                \STATE \( \pmb{\lambda}_{k+1} = \tau_k \pmb{\zeta}_k + (1-\tau_k)\pmb{\eta}_k  \) 
                \STATE \( \pmb{\zeta}_{k+1} = \pmb{\zeta}_{k} - \alpha _{k+1} \nabla \varphi(\pmb{\lambda}_{k+1})\)
                \STATE \( \pmb{\eta}_{k+1} = \tau_k \pmb{\zeta}_{k+1} + (1-\tau_k) \pmb{\eta}_k \)
            \ENDWHILE 
            \STATE \( \hat{\vx}_{k+1} = \tau_k \vx (\pmb{\lambda}_{k+1}) + (1-\tau_k)\hat{\vx}_k \)
            \STATE \( i_{k+1} = 0, k = k+1\)
        \ENDWHILE
        \ENSURE \(\hat{\vx}_{k+1}, \pmb{\eta}_{k+1}\).
    \end{algorithmic}
\end{algorithm}

\subsection{General Dual Extrapolation}
\begin{algorithm}
    \caption{General Dual Extrapolation}
    \label{alg:General_Dual_Extrapolation}
    \begin{algorithmic}[1]
    \REQUIRE initial state $\vs_0 = \zeros$; area-convexity coefficient $\kappa$; gradient operator $g$; number of iterations $T$; regularizer $r$ with minimizer $\Bar{\vz}$.
        \FOR{$t = 0,1,2,\ldots, T-1$} 
            \STATE $\vz^t = \mathtt{prox}^r_{\Bar{\vz}}(\vs^t)$\\
            \STATE $\vw^t = \mathtt{prox}^r_{\Bar{\vz}}(\vs^t + \dfrac{1}{\kappa} g(\vz^t) )$\\
            \STATE $\vs^{t+1} = \vs^t + \dfrac{1}{2 \kappa} g(\vw^t)$
            \STATE $t = t+1$\\
        \ENDFOR
    \ENSURE $\Bar{\vw} \defeq \dfrac{1}{T-1}
    \sum_{t=0}^T \vw^t$.
    \end{algorithmic}
\end{algorithm}
In this context of this work, we can define the Bregman divergence as follows
\begin{definition}[Bregman divergence]
    For differentiable convex function $h(\cdot)$ and $\vx, \vy$ in its domain, the Bregman divergence from $\vx$ to $\vy$ is $V^h_\vx(\vy) \defeq h(\vy) - h(\vx) - \inner{\nabla h(\vx)}{\vy - \vx}$.
\end{definition}
In our context, $h(\cdot)$ is chosen to be $r(\cdot)$, which is area-convex \citep[Definition 1.2]{Sherman-2017-Area}. Similar to \citep[Definition 2.5]{Jambulapati-2019-Direct}, we can define the proximal operator 
in Dual Extrapolation as follow 
\begin{definition}[Proximal operator]
    For differentiable (regularizer) $h(\cdot)$ with $\vx, \vy$ in its domain, and $g$ in the dual space, the proximal operator is $\mathtt{prox}^h_{\vx}(g) \defeq \argmin_\vy \{ \inner{g}{\vy} + V^h_\vx(\vy)\}$.
\end{definition}

\subsection{Alternating Minimization}
\begin{algorithm}
    \caption{Alternating minimization for \eqref{General_AM}  ($\mathtt{AM}$ ($M$, $\vv$, $\vu$))}
    \label{alg:alternating_minimization}
    \begin{algorithmic}[1]
    \REQUIRE number of iterations \(M\); vectors $\vv, \vu$; initialize $\vx = \dfrac{1}{n^2+2n} \ones_{n^2+2n}$; $\vy= \zeros_{2n+1}$.
    \FOR{$t=0,1,2,\ldots,M-1$}
        \STATE \(\pmb{\zeta} = \dfrac{1}{20 \norm{\vd}_\infty} \vv + \dfrac{1}{10} \vA^\top (\vy^{t})^2\)
        \STATE \(
        \vx^{t+1} = \dfrac{\exp(-\pmb{\zeta})}{\sum_{j=1}^{n^2+2n}[\exp(-\pmb{\zeta})]_j}\)
        \STATE \( \vy^{t+1} = \min \left( \ones, \max \left(-\ones, \dfrac{-\vu}{4 \norm{\vd}_\infty \vA \vx^{t+1}} \right) \right)\) \\
        (All operations are element-wise) 
    \ENDFOR
    \ENSURE \( \vx^{t+1}, \vy^{t+1} \).
    \end{algorithmic}
\end{algorithm}



\section{Revisiting Sinkhorn for POT}
\subsection{Proof of Theorem \ref{contraint_violation}}

\begin{proof}[\unskip \nopunct]
    From the augmentation of the marginals \citep{Chapel-nips2020} and the guarantees of the rounding algorithm \citep[Algorithm 2]{altschuler2017near}, we have 
    \begin{align*}
        &\widetilde{\vX} \ones = \norm{\vr}_1 + \norm{\vc}_1 - s,  \widetilde{\vX}^\top \ones = \norm{\vr}_1 + \norm{\vc}_1 - s\\
        &\ones^\top \Bar{\vX} \ones + \norm{\widetilde{\vp}}_1 = \norm{\vr}_1,  \norm{\widetilde{\vq}}_1 + \widetilde{X}_{n+1,n+1} = \norm{\vc}_1 - s\\
        &\ones^\top \Bar{\vX} \ones + \norm{\widetilde{\vq}}_1 = \norm{\vc}_1, \norm{\widetilde{\vp}}_1 + \widetilde{X}_{n+1,n+1} = \norm{\vr}_1 - s.
    \end{align*}
    Hence we can make a quick substitution 
    \begin{align*}
        \norm{\widetilde{\vp}}_1 + \widetilde{X}_{n+1,n+1} = \norm{\vr}_1 - s = \ones^\top \Bar{\vX} \ones + \norm{\widetilde{\vp}}_1 - s,
    \end{align*}
    which implies the constraint violation $V$
    \begin{align*}
        V = \ones^\top \Bar{\vX} \ones- s = \widetilde{X}_{n+1,n+1}.
    \end{align*}
    Hence, we now proceed to bound $\widetilde{X}_{n+1,n+1}$.
    we thus proceed to lower-bound $\tilde{X}_{n+1,n+1}$. Note that the reformulated OT problem that Sinkhorn solves 
    \begin{equation*}
        \min_{\mathbf{\tilde{X}} \geq 0} \langle \mathbf{\tilde{X}},\mathbf{\tilde{C}} \rangle \quad \text{s.t. } \quad \: \mathbf{X} \mathbf{1} = \mathbf{\tilde{r}}, \: \mathbf{X}^\top \mathbf{1} = \mathbf{\tilde{c}}.
    \end{equation*}
    With a well known dual formulation for entropic OT \citep{lin2019efficient}, we can minimize the Lagrangian w.r.t. $\mathbf{\tilde{X}}$ and obtain
    \begin{equation*}
        \mathbf{\tilde{X}} = \exp \left( - \dfrac{\mathbf{\tilde{C}}}{\gamma} + \mathbf{u} \mathbf{1}^\top + \mathbf{1} \mathbf{v}^\top\right), 
    \end{equation*}
where $\mathbf{u}, \mathbf{v}$ are the dual variable associated with each constraints. Now consider only the $\tilde{X}_{n+1,n+1}$ element, we can deduce
\begin{equation*}
   \tilde{X}_{n+1,n+1} = \exp \left(-\frac{\tilde{C}_{n+1,n+1}}{\gamma} - u_{n+1} -  v_{n+1}\right) \geq \exp \left( - \dfrac{A}{\gamma} - \|\mathbf{u}\|_\infty - \|\mathbf{v}\|_\infty\right).
\end{equation*}
Similar to our Lemma \ref{lem:inf_norm_dual} for POT, a similar bound for the infinity norm of OT dual variables \citep[Lemma 3.2]{nhatho-mmpot} shows that $$\| \mathbf{u} \|_\infty, \| \mathbf{v} \|_\infty \leq R,$$ where $$R =  \dfrac{A}{\gamma} + \log(n) - 2 \log\left({\min_{1\leq i, j\leq n}\{r_i, c_j\}}\right).$$ With $\gamma = \varepsilon / (4 \log{n})$ set by Sinkhorn, we obtain a lower bound of  the constraint violation $\tilde{X}_{n+1,n+1}$
\begin{equation*}
   \tilde{X}_{n+1,n+1} \geq \exp\left(-\dfrac{12A \log{n}}{\varepsilon} - \mathcal{O}(\log{n})\right ). 
\end{equation*}
Next, we establish an upper bound for the violation. We consider a feasible transportation matrix $\mathbf{\hat{X}}$ in the form $\hat{X}_{i,j} = s / n^2$ for $i,j\in[n]$ and $\hat{X}_{i,j} = 0$ otherwise. As $\mathbf{\tilde{X}}$ is a Sinkhorn $\varepsilon$-approximate solution of POT, we have 
\begin{align*}
    \langle \mathbf{\tilde{C}}, \mathbf{\tilde{X}} \rangle - \gamma H(\mathbf{\tilde{X}}) &\leq \langle \mathbf{\tilde{C}}, \mathbf{\hat{X}} \rangle - \gamma H(\mathbf{\hat{X}}) + \varepsilon\\
    &= \frac{s}{n^2} \sum^{n}_{i,j} C_{ij} - \gamma s \log\frac{s}{n^2} + \varepsilon \\
    &\leq s \|\mathbf{C}\|_\infty - \gamma s \log\frac{s}{n^2} + \varepsilon.
\end{align*}
On the other hand, we also have 
\begin{align*}
    \langle \mathbf{\tilde{C}}, \mathbf{\tilde{X}} \rangle - \gamma H(\mathbf{\tilde{X}}) &\geq A \tilde{X}_{n+1,n+1} + \gamma \sum^{n+1}_{i,j} \tilde{X}_{ij} \log(\tilde{X}_{ij}) \\
    &\geq A \tilde{X}_{n+1,n+1} + \gamma \left(\sum^{n+1}_{i,j} \tilde{X}_{ij}\right) \log \left( \frac{1}{n+1} \sum^{n+1}_{i,j} \tilde{X}_{ij} \right)\\
    &= A \tilde{X}_{n+1,n+1} + \gamma \left(\sum^{n+1}_{i,j} \tilde{X}_{ij}\right) \log \left(\sum^{n+1}_{i,j} \tilde{X}_{ij} \right) - \gamma \left(\sum^{n+1}_{i,j} \tilde{X}_{ij}\right) \log \left( n+1\right)\\
    &\geq (A + \gamma - \gamma \log(n+1))\tilde{X}_{n+1,n+1} + \gamma \left( \sum^{n}_{i,j} \tilde{X}_{ij} - 1 \right) \\
    &- \gamma \left(\sum^{n}_{i,j} \tilde{X}_{ij}\right) \log \left( n+1\right),
\end{align*}
where the second steps follows from Jensen's inequality, and the last step follows from the fact that $y\log{y} \leq y-1 \: \forall \: y > 0$. Combining the above two inequalities we deduce
\begin{align*}
    \begin{split}
        \tilde{X}_{n+1,n+1} &\leq \frac{1}{A - \gamma (\log(n+1)-1)} \left( s \|\mathbf{C}\|_\infty - \gamma s \log\frac{s}{n^2} + \varepsilon \right. \\
        &\left.+ \gamma \left( \sum^{n}_{i,j} \tilde{X}_{ij} - 1 \right) - \gamma \left(\sum^{n}_{i,j} \tilde{X}_{ij}\right) \log \left( n+1\right) \right) \\
        &=\mathcal{O}\left(\frac{\|\mathbf{C}\|_\infty}{A}\right).
    \end{split}
\end{align*}
\end{proof}

\subsection{Proof of Theorem \ref{them:revised_sinkhorn_complexity}}
\begin{proof}[\nopunct]
From Theorem \ref{contraint_violation}, we have the constraint violation $V$ or $\tilde{X}_{n+1,n+1}$ satifying
\begin{align*}
    \mathcal{O}\left(\frac{\|\mathbf{C}\|_\infty}{A}\right) \leq  \tilde{X}_{n+1,n+1}.
\end{align*}
Hence, in order to achieve the $\varepsilon$ tolerance we need
\begin{align*}
    \mathcal{O}\left(\frac{\|\mathbf{C}\|_\infty}{A}\right) \leq \varepsilon,
\end{align*}
yielding the sufficient size of A to be
\begin{align*}
   A = \mathcal{O}\left(\frac{\|\mathbf{C}\|_\infty}{\varepsilon}\right).
\end{align*}
The computational complexity of (Infeasible) Sikhorn for POT \citep{nhatho-mmpot} is 
\begin{align*}
    \tilde{\mathcal{O}}\left(\frac{n^2 \|\mathbf{\tilde{C}}\|_\infty^2}{\varepsilon^2}\right) = \tilde{\mathcal{O}}\left(\frac{n^2 A^2}{\varepsilon^2}\right).
\end{align*}
Plugging our derived sufficient value of A into the Sinkhorn complexity, we obtain the revised complexity for Feasible Sinkhorn to be
\begin{align*}
    \tilde{\mathcal{O}}\left(\frac{n^2 \|\mathbf{C}\|_\infty^2}{\varepsilon^4}\right).
\end{align*}
\end{proof}
\section{Rounding Algorithm} 
\subsection{Proof of Lemma \ref{Guarantees_for_EP} (Guarantees for the Enforcing Procedure)}
\begin{proof}[\unskip \nopunct]
    In the first case, we have $\alpha = \dfrac{\norm{\vr}_1-s}{\norm{\vp'}_1} (\geq 0)$, implying
    \begin{align*}
        \norm{\bar{\vp}}_1 = \norm{\vp''}_1 = \alpha \norm{\vp'}_1 = \frac{\norm{\vr}_1-s}{\norm{\vp'}_1} \norm{\vp'}_1 = \norm{\vr}_1-s.
    \end{align*}
    Also, consider 
    \begin{align*}
        &\bar{\vp} = \vp'' = \alpha \vp' \geq 0 \\
        &\bar{\vp} = \vp'' = \alpha \vp' \leq \vp' \leq \vr.
    \end{align*}
    So we have the guarantees as desired in the first scenario. For the second case, we obtain $\alpha = 1$, implying $\vp'' = \vp'$. Firstly we want to show the while loop will always stop. This is because the while condition $\norm{\vr}_1 - s \geq \norm{\vp''}_1 $ will be eventually violated:
    \begin{align*}
        \sum_{i=1}^n (r_i - p''_i) = \norm{\vr}_1 - \norm{\vp''}_1 \geq \norm{\vr}_1 - s - \norm{\vp''}_1. 
    \end{align*}
    During the updates, for some index $1 \leq j \leq i$, $p''_j = r_j$, while the rest remain the same. Therefore, we still have $\zeros \leq \vp'' \leq \vr$ after the running the while loop. For the last updated index $i$, we check that $0 \leq p''_i \leq r_i$, and the second inequality is obvious. The first inequality holds because we have $r_i \geq r_i - p'_i \geq \norm{\vp''}_1 - \norm{\vr}_1 +s$ as $p'_i = p''_i$ before the update inside the while loop. After the while loop, by updating $ p''_i = p''_i - (\norm{\vp''}_1 - \norm{\vr}_1 +s)$, we now have  $\norm{\vp''}_1 = \norm{\vr}_1 - s$ as desired.
\end{proof}

\subsection{Proof of Theorem \ref{prop:rounding} (Guarantees for the Rounding Algorithm)}
\begin{proof}[\unskip \nopunct]
    First, we show that $\vA \Bar{\vx} = \vb$. Note that $\norm{\ve_1}_1 = \norm{\ve_2}_1 = s - \ones^\top \vX' \ones$. From step 10, we have
    \begin{align*}
        &\Bar{\vX} \ones + \bar{\vp} = \left(\vX' + \dfrac{1}{\norm{\ve_1}_1} \ve_1 \ve_2^\top \right) \ones + \bar{\vp} = \vX' \ones + \ve_1 + \bar{\vp}= \vr, \\
        &\Bar{\vX}^\top \ones + \bar{\vq}= \left(\vX' + \dfrac{1}{\norm{\ve_1}_1} \ve_1 \ve_2^\top \right)^\top \ones + \bar{\vq}= \vX'^\top \ones + \ve_2 + \bar{\vq}= \vc,
    \end{align*}
    this implies
    \begin{equation*}
        \ones^\top \Bar{\vX} \ones = \ones^\top (\vr - \bar{\vp}) = \norm{\vr}_1 - \norm{\bar{\vp}}_1 = s.
    \end{equation*}

Next, we endeavor to bound $\norm{\vx - \bar{\vx}}_1$. Considering from the given information $\delta \geq \norm{\vA \vx - \vb}_1$, we obtain 
\begin{align*}
    \delta &\geq \norm{\vA \vx - \vb}_1 \\
    &= \norm{\vX \ones + \vp - \vr}_1 + \norm{\vX^\top \ones + \vq - \vc}_1 + |\ones^\top \vX \ones - s| \\
    &\geq \norm{\vX \ones + \vp - \vr}_1 + \norm{\vX^\top \ones + \vq - \vc}_1. 
\end{align*}
Since steps 7 through 10 have similar structure to \citep[Algorithm 2]{altschuler2017near} with changes in the marginals, by \citep[Lemma 7]{altschuler2017near}, we have 
\begin{align*}
    \norm{\vecflatten(\vX) - \vecflatten(\bar{\vX})}_1 &\leq 2 \left( \norm{\vX \ones + (\bar{\vp} - \vr)}_1 + \norm{\vX^\top \ones + (\bar{\vq} - \vc)}_1 \right) \\
    &\leq 2 \left( \norm{\vX \ones + \vp - \vr}_1 + \norm{\bar{\vp} - \vp}_1 + \norm{\vX^\top \ones + \vq - \vc}_1 + \norm{\bar{\vq} - \vq}_1\right) \\
    &\leq 2 (\delta + \norm{\bar{\vp} - \vp}_1 + \norm{\bar{\vq} - \vq}_1).
\end{align*}
Hence,
\begin{align}
    \norm{\vx - \bar{\vx}}_1 &= \norm{\vecflatten(\vX) - \vecflatten(\bar{\vX})}_1 + \norm{\bar{\vp} - \vp}_1 + \norm{\bar{\vq} - \vq}_1 \\
    &\leq 2 \delta + 3 (\norm{\bar{\vp} - \vp}_1 + \norm{\bar{\vq} - \vq}_1) \label{bound for x_bar}. 
\end{align}
Note that 
\begin{equation*}
    \norm{\bar{\vp} - \vp}_1 \leq \norm{\vp - \vp'}_1 + \norm{\vp' - \vp''}_1 + \norm{\vp'' - \bar{\vp}}_1. 
\end{equation*}
For bounding $\norm{\vp - \vp'}_1$, consider
\begin{equation*}
    \norm{\vp - \vp'}_1 = \sum_{i=1}^n \left(p_i - \min\{p_i, r_i\}\right) = \sum_{i=1}^n \nu_i,
\end{equation*}
where 
\begin{align*}
    \nu_i = \begin{cases}
        0 & p_i \leq r_i \\
        p_i - r_i & p_i > r_i
    \end{cases}.
\end{align*}
If $p_i > r_i$, then obviously $(X1)_i + p_i > r_i$ as $(X1)_i > 0$. Hence we obtain
\begin{align*}
    \norm{\vp - \vp'}_1 = \sum_{i=1}^n \nu_i = \sum_{i: p_i > r_i}^n \nu_i \leq \sum_{i: p_i > r_i}^n \left((X1)_i + p_i - r_i\right) \leq \norm{\vX \ones + \vp - \vr}_ 1.
\end{align*}
Next, consider 
\begin{align*}
    \norm{\vp' - \vp''}_1 &= \sum_{i=1}^n p'_i | 1- \alpha | \\
    &= \norm{\vp'}_1 - \min \{\norm{\vp'}_1,  \norm{\vr}_1 - s\} \\
    &= \norm{\vp'}_1 - \frac{1}{2}(\norm{\vp'}_1+ \norm{\vr}_1 - s - |\norm{\vp'}_1 - \norm{\vr}_1 + s|),
\end{align*}
Further simplifying the RHS, we have
\begin{align*}
    \norm{\vp' - \vp''}_1 &= \frac{1}{2} \left( \norm{\vp'}_1 - \norm{\vr}_1 + s + |\norm{\vp'}_1 - \norm{\vr}_1 + s| \right) \\
    &\leq |\norm{\vp'}_1 - \norm{\vr}_1 + s| \\
    &\leq |\norm{\vp}_1 - \norm{\vr}_1 + s| + \norm{\vp}_1 - \norm{\vp'}_1 \\
    &= |\norm{\vp}_1 - \norm{\vr}_1 + s| + \norm{\vp - \vp'}_1 \\
    &\leq |\norm{\vp}_1 - \norm{\vr}_1 + s| + \norm{\vX \ones + \vp - \vr}_1.
\end{align*}
Finally, we bound for  $\norm{\vp'' - \bar{\vp}}_1$. Note that 
\begin{align*}
    \norm{\vp'' - \bar{\vp}}_1 &= \norm{\vr}_1 - s - \norm{\vp''}_1  \\
    &= \norm{\vr}_1 - s - \norm{\vp}_1 + (\norm{\vp}_1 - \norm{\vp''}_1) \\
    &= \norm{\vr}_1 - s - \norm{\vp}_1 + \norm{\vp - \vp''}_1,
\end{align*}
yielding
\begin{align*}
    \norm{\vp'' - \bar{\vp}}_1 &\leq |\norm{\vp}_1 - \norm{\vr}_1 + s| + \norm{\vp - \vp'}_1 + \norm{\vp' - \vp''}_1 \\
    &\leq 2 |\norm{\vp}_1 - \norm{\vr}_1 + s| + 2 \norm{\vX \ones + \vp - \vr}_1.
\end{align*}
Combining the bounds for $\norm{\vp - \vp'}_1, \norm{\vp' - \vp''}_1, \norm{\vp'' - \bar{\vp}}_1$, we obtain
\begin{align*}
    \norm{\bar{\vp} - \vp}_1 \leq 3 |\norm{\vp}_1 - \norm{\vr}_1 + s| + 4 \norm{\vX \ones + \vp - \vr}_1.
\end{align*}
Similarly, we can obtain the bound for $\norm{\bar{\vq} - \vp}_1$
\begin{align*}
    \norm{\bar{\vq} - \vq}_1 \leq 3 |\norm{\vq}_1 - \norm{\vc}_1 + s| + 4 \norm{\vX^\top \ones + \vq - \vc}_1.
\end{align*}
Next, note that
\begin{align*}
    &\delta + |\ones^\top \vX \ones - s| \\
    &\geq \norm{\vX \ones + \vp - \vr}_1 + \norm{\vX^\top \ones + \vq - \vc}_1 + 2 |\ones^\top \vX \ones - s| \\
    &= |\ones^\top \vX \ones + \|\vp\|_1 - \|\vr\|_1| + |\ones^\top \vX \ones - s| + |\ones^\top \vX \ones + \|\vq\|_1 - \|\vc\|_1| + |\ones^\top \vX \ones - s| \\ 
    &\geq |\|\vp\|_1 - \|\vr\|_1 + s| + |\|\vq\|_1 - \|\vc\|_1 + s|.
\end{align*}
Hence we obtain
\begin{align*}
    &\norm{\bar{\vp} - \vp}_1 + \norm{\bar{\vq} - \vq}_1\\
    &\leq 3 (|\|\vp\|_1 - \|\vr\|_1 + s| + |\|\vq\|_1 - \|\vc\|_1 + s|) + 4 (\norm{\vX \ones + \vp - \vr}_1 +\norm{\vX^\top \ones + \vq - \vc}_1) \\
    &\leq 3 \delta + 3|\ones^\top \vX \ones - s| +  4 (\norm{\vX \ones + \vp - \vr}_1 +\norm{\vX^\top \ones + \vq - \vc}_1) \\
    &= 3 \delta + 3 \norm{\vA \vx -\vb}_1 + \norm{\vX \ones + \vp - \vr}_1 +\norm{\vX^\top \ones + \vq - \vc}_1 \\
    &\leq 7 \delta
\end{align*}
Therefore, from \eqref{bound for x_bar}, we conclude 
\begin{align*}
    \norm{\vx - \bar{\vx}}_1 \leq 23 \delta.
\end{align*}
\end{proof}
\section{Properties of Entropic POT}
The term $\inner{\vx}{\log \vx}$ is  the negative entropic regularizer for every $\vx \geq \zeros$, with the convention that $\zeros \log \zeros = \zeros$, and the parameter $\gamma > 0$ controls the strength of regularization. It is well-known that the objective $\varphi(\pmb{\lambda})$ is $L$-smooth with $L \leq \norm{\vA}^2_{1 \rightarrow 2} / \mu_f$ \citep{nesterov2005smooth}, since $f$ is $\mu_f$-strongly convex w.r.t. $\norm{\cdot}_1$, where $\mu_f = \gamma / (\| \vr \|_1 + \| \vc \|_1 - s)$ (Lemma \ref{lemma:strongly_convex}). Instead of this loose estimate, APDAGD uses line search to find and adapt the local Lipschitz value \citep{Dvurechensky-2018-Computational}. Further, the gradient of $\varphi$ is $\nabla\varphi(\pmb{\lambda}) = \vb - \vA \vx(\pmb{\lambda})$ where $\vx(\pmb{\lambda}) \defeq \argmax_{\vx \in Q} \left\{ - f(\vx) - \inner{\vx}{\vA^\top \pmb{\lambda}} \right\}$. 
\subsection{Strong Convexity}
\begin{lemma}
    \label{lemma:strongly_convex}
    For convenience, denote $\| \vr \|_1 + \| \vc \|_1 - s$ as $D$. Let $Q = \{\vx \in \RR^{n^2+2n} | \vx > \zeros,  \|\vx\|_{1} \leq D \}$. The entropic regularized objective function $f(\vx)= \sum_{i=1}^{n^2+2n} \gamma x_i log(x_i) + d_i x_i$ is $\dfrac{\gamma}{D}$-strongly-convex with respect to $\ell_1$ norm over $Q$. 
\end{lemma}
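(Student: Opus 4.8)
The plan is first to discard the linear part: since $\inner{\vd}{\vx}=\sum_i d_i x_i$ is affine it contributes nothing to strong convexity, so it suffices to show that the entropic part $h(\vx)\defeq\gamma\sum_{i=1}^{n^2+2n}x_i\log x_i$ is $(\gamma/D)$-strongly convex with respect to $\norm{\cdot}_1$ on $Q$. I would then invoke the standard second-order sufficient condition: for a function that is $C^2$ on a convex set, $\sigma$-strong convexity with respect to a norm $\norm{\cdot}$ follows once one verifies $\inner{\nabla^2 h(\vx)\vu}{\vu}\ge\sigma\norm{\vu}^2$ for every $\vx$ in the set and every direction $\vu$. Since each coordinate of $\vx\in Q$ is strictly positive, $h$ is smooth on $Q$ with diagonal Hessian $\nabla^2 h(\vx)=\gamma\,\diag(1/x_1,\dots,1/x_{n^2+2n})$, hence $\inner{\nabla^2 h(\vx)\vu}{\vu}=\gamma\sum_i u_i^2/x_i$.

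The crux is one Cauchy--Schwarz estimate. Writing $|u_i|=\bigl(|u_i|/\sqrt{x_i}\bigr)\sqrt{x_i}$ and applying Cauchy--Schwarz,
\[
  \norm{\vu}_1^2=\Bigl(\sum_i|u_i|\Bigr)^2\le\Bigl(\sum_i\frac{u_i^2}{x_i}\Bigr)\Bigl(\sum_i x_i\Bigr)\le D\sum_i\frac{u_i^2}{x_i},
\]
where the last inequality uses $\sum_i x_i=\norm{\vx}_1\le D$ for $\vx\in Q$ (this is exactly the constraint defining $Q$, together with $\vx>\zeros$). Rearranging gives $\gamma\sum_i u_i^2/x_i\ge(\gamma/D)\norm{\vu}_1^2$, i.e.\ the required Hessian bound with $\sigma=\gamma/D$. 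Integrating this inequality along the segment joining any two points of the convex set $Q$ then produces the first-order form $h(\vy)\ge h(\vx)+\inner{\nabla h(\vx)}{\vy-\vx}+\frac{\gamma}{2D}\norm{\vy-\vx}_1^2$, and re-adding the affine term $\inner{\vd}{\cdot}$ completes the argument. An equivalent shortcut is to recall that negative entropy is $1$-strongly convex w.r.t.\ $\norm{\cdot}_1$ on the probability simplex and rescale to the $\ell_1$ ball of radius $D$, which produces the same factor $\gamma/D$.

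I do not expect a genuine obstacle; the only point needing a little care is that $Q$ is not open, because the inequality $\norm{\vx}_1\le D$ may be active. This is dealt with either by running the Hessian argument on the relatively open set $\{\vx>\zeros,\ \norm{\vx}_1<D\}$ and passing to the boundary by continuity of $h$ and $\nabla h$, or by phrasing everything through the Bregman divergence $B_h(\vy,\vx)=\gamma\sum_i\bigl(y_i\log(y_i/x_i)-y_i+x_i\bigr)$ and establishing $B_h(\vy,\vx)\ge\frac{\gamma}{2D}\norm{\vy-\vx}_1^2$ directly from the same Cauchy--Schwarz computation (a generalized Pinsker inequality). Either route is routine.
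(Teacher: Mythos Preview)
Your proposal is correct and rests on the same core computation as the paper: the Cauchy--Schwarz bound $\norm{\vu}_1^2\le\bigl(\sum_i u_i^2/x_i\bigr)\bigl(\sum_i x_i\bigr)\le D\sum_i u_i^2/x_i$. The only packaging difference is that you invoke the second-order sufficient condition on the Hessian directly, whereas the paper first reduces to the first-order monotonicity criterion $\inner{\nabla f(\vx)-\nabla f(\vy)}{\vx-\vy}\ge\frac{\gamma}{D}\norm{\vx-\vy}_1^2$ and then establishes that via the same Cauchy--Schwarz step applied along the segment; your route is a touch more streamlined but mathematically equivalent.
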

\begin{proof}
    To prove the Lemma, we first consider the following
    Observation \ref{strongconvex2_obs1}:
    \begin{observation}
    \label{strongconvex2_obs1}
        $f(\vx)$ is $\dfrac{\gamma}{D}$-strongly-convex with respect to $\ell_1$ norm over $Q$ if for all $\vx,\vy \in Q$,
        \begin{align}
        \label{strongconvex_alternative1}
            \langle \nabla f(\vx) - \nabla f(\vy), \vx-\vy \rangle \geq \frac{\gamma}{D} \|\vx-\vy \|_1^2.
        \end{align}
    \end{observation}
    \begin{proof}[Proof of Observation \ref{strongconvex2_obs1}]
            Consider $\vx \neq \vy$. For $\lambda \in [0, 1]$, we define  $h_1(\lambda)= f(\vy+\lambda(\vx-\vy)) - \lambda \langle \nabla f(\vy) , \vx-\vy \rangle$ and $h_2(\lambda) = \dfrac{\lambda^2 \gamma}{2D} \|\vx-\vy \|_1^2$, and further obtain:
    \begin{align*}
        h_1'(\lambda) &=  \langle \nabla f(\vy+\lambda(\vx-\vy)) , \vx-\vy \rangle-  \langle \nabla f(\vy), \vx-\vy \rangle \\
        &= \langle \nabla f(\vy+\lambda(\vx-\vy)) - \nabla f(\vy)  , \vx-\vy \rangle \\
        h_2'(\lambda) &= \frac{\lambda \gamma}{D} \|\vx-\vy\|_1^2. 
    \end{align*} 
    By given information from \eqref{strongconvex_alternative1}, we have:
    \begin{align*}
        \lambda h_1'(\lambda)&= \langle \nabla f(\vy+\lambda(\vx-\vy)) - \nabla f(\vy)  , [\vy+\lambda (\vx-\vy)] - \vy \rangle \\
        &\geq \frac{\gamma}{D} \| [\vy+\lambda (\vx-\vy)] - \vy \|_1^2 = \lambda h_2'(\lambda)  \\
        \therefore h_1'(\lambda) &\geq h_2'(\lambda).
    \end{align*}
By Cauchy's mean value theorem, there exists $c\in (0,1)$ such that: 
\begin{align}
    &h_1(1) - h_1(0) = \frac{h_1'(c)}{h_2'(c)}
    (h_2(1) - h_2(0)) \geq h_2(1) - h_2(0)\\
    \therefore \: &f(\vx) \geq f(\vy) + \langle \nabla f(\vy), \vx-\vy \rangle + \frac{\gamma}{2D} \| \vx-\vy\|_1^2. \label{strongconv1}
\end{align}
Hence, we have the desired conclusion for any $\vx \neq \vy$. We also have \eqref{strongconv1} to hold trivially for $\vx=\vy$.
    \end{proof}
Back to our main proof of Lemma \ref{lemma:strongly_convex}, we proceed to prove that for all $\vx,\vy \in Q$, the inequality \eqref{strongconvex_alternative1} holds. We consider $\vx \neq \vy$, while for $\vx = \vy$ \eqref{strongconvex_alternative1} trivially holds.
Since $\dfrac{\partial f(\vx)}{\partial x_i} = \gamma log(x_i) +\gamma + d_i$, we rewrite
\eqref{strongconvex_alternative1} as: 
\begin{align*}
    \sum_{i=1}^{k} \gamma (x_i-y_i) (log(x_i) - log(y_i)) \geq \frac{\gamma}{D} \|\vx-\vy \|_1^2.
\end{align*}
For $\lambda \in [0, 1]$, we define $s_1(\lambda) = \sum_{i=1}^{k} (x_i-y_i) log(y_i + \lambda (x_i - y_i)) $ and $s_2(\lambda) = \lambda \|\vx-\vy\|_1^2$, and further obtain:
\begin{align*}
    s_1'(\lambda) &= \sum_{i=1}^{k} \frac{(x_i-y_i)^2}{y_i + \lambda (x_i-y_i)} \\
    s_2'(\lambda) &= \|\vx-\vy\|_1^2.
\end{align*}

Using $\sum_{i=1}^{k} [y_i + \lambda (x_i-y_i)]= \sum_{i=1}^{k} [ \lambda x_i +(1-\lambda)y_i ] \leq \lambda D +(1-\lambda) D =D$, we have: 
\begin{align*}
    D \cdot s_1'(\lambda) &\geq \sum_{i=1}^{k} [ y_i + \lambda (x_i-y_i)] \cdot   \sum_{i=1}^{k} \frac{(x_i-y_i)^2}{y_i + \lambda (x_i-y_i)} \\
    &\geq (\sum_{i=1}^{k} |x_i-y_i|)^2 = \|\vx-\vy\|_1^2 \text{  (by Cauchy-Schwarz inequality)} \\
    &\geq s_2'(\lambda).
\end{align*}
By Cauchy's mean value theorem, there exists $c\in (0,1)$ such that: 
 \begin{align*}
 D(s_1(1) - s_1(0)) &= \frac{D\cdot s_1'(c)}{s_2'(c)} (s_2(1) - s_2(0)) \geq s_2(1) - s_2(0).
 \end{align*}
Since $s_1(1) - s_1(0)=\sum_{i=1}^{k} (x_i-y_i) (log(x_i) - log(y_i))$ and $s_2(1) - s_2(0)= \|\vx-\vy\|_1^2 $, we get:  
\begin{align*}
     \sum_{i=1}^{k} (x_i-y_i) (log(x_i) - log(y_i)) \geq \frac{1}{D} \|\vx-\vy\|_1^2
 \end{align*}
The condition \eqref{strongconvex_alternative1} thus follows for all $\vx,\vy \in Q$. By Observation \ref{strongconvex2_obs1}, $f(\vx)$ is $\dfrac{\gamma}{\| \vr \|_1 + \| \vc \|_1 - s}$-strongly-convex with respect to $\ell_1$ norm over $Q$.
\end{proof}

\subsection{Dual Formulation for Entropic POT}
First we note the general form of Equation (\ref{prob:entropic}). Given a simple closed convex set $Q$ in a finite-dimensional real vector space $E$; $H$ is another finite-dimensional real vector space with an element $\vb$; $H^\ast$ is the dual space of $H$. $\vA$ is a given linear operator from $E$ to $H$. Consider
\begin{equation} \label{general_problem}
    \min_{\vx \in Q \subseteq E} f(\vx) \quad \text{s.t. }  \vA \vx = \vb,
\end{equation}
where $f(\vx)$ is a $\mu_f$-strongly convex function on $Q$ with respect to some chosen norm $\norm{\cdot}_E$ on $E$. If $\gamma = 0$, we recover Problem \eqref{main_problem}. 

Let $\vy \in \RR^n,\vz \in \RR^n$ and $t \in \RR$ be the dual variables corresponding to the three equality constraints. The Lagrangian is
\begin{align*}
    \mathcal{L} & = \inner{\vC}{\vX} + \gamma (\inner{\vX}{\log \vX} + \inner{\vp}{\log \vp} + \inner{\vq}{\log \vq}) + \\
    &\inner{\vy}{\vX \ones +\vp - \vr} + \inner{\vz}{\vX^\top \ones +\vq - \vc} + t (\ones^\top \vX \ones - s),
\end{align*}
Hence, we have the dual problem:
\begin{align} \label{original_dual}
    \begin{split}
        \max_{\vy, \vz, t} & \left\{ - \inner{\vy}{\vr} - \inner{\vz}{\vc} - ts + \min_{X_{i, j} \geq 0} \sum_{i, j=1}^{n} X_{i, j}(C_{i, j} + \gamma \log X_{i, j} + y_i + z_j + t)\right. \\ 
        & \left. \quad + \min_{p_i \geq 0} \sum_{i=1}^{n} p_i (\gamma \log{p_i} + y_i) + \min_{q_j \geq 0}\sum_{j=1}^{n} q_j (\gamma \log{q_j} + z_j)\right\}.
    \end{split}
\end{align} 
Next, we minimize the Lagrangian w.r.t. $\vX$, $\vp$ and $\vq$ according to the first order condition, which gives
\begin{equation*}
    X_{i, j} = \exp \left( - \frac{1}{\gamma} \left( C_{i, j} + y_i + z_j + t \right) - 1 \right), p_i = \exp \left( - \frac{y_i}{\gamma} - 1 \right), q_j = \exp \left( - \frac{z_j}{\gamma} - 1 \right).
\end{equation*}
Plugging this back to the dual problem \eqref{original_dual}, we have
\begin{align*}
    \begin{split}
        \max_{\vy, \vz, t} & \left\{ - \inner{\vy}{\vr} - \inner{\vz}{\vc} - ts  - \gamma \sum_{i, j=1}^{n} \exp \left( - \frac{1}{\gamma} \left( C_{i, j} + y_i + z_j + t \right) - 1 \right) \right. \\ 
        & \left. \quad - \gamma \sum_{i=1}^{n} \exp \left( - \frac{y_i}{\gamma} - 1 \right) - \gamma \sum_{j=1}^{n} \exp \left( - \frac{z_j}{\gamma} - 1 \right) \right\},
    \end{split}
\end{align*}
as desired.
Plugging this back into the dual problem, suppressing all additive constants and diving the objective by $- \gamma$, we obtain the following new dual problem:
\begin{equation} \label{entropic_regularized_problem}
    \min_{\vu, \vv, w} \; h(\vu,\vv,w) \coloneqq \sum_{i, j=1}^{n} e^{w} e^{u_i} e^{- C_{i, j} / \gamma} e^{v_j} + \sum_{i=1}^{n} e^{u_i} + \sum_{j=1}^{n} e^{v_j} -  \inner{\vu}{\vr} - \inner{\vv}{\vc} - ws.
\end{equation}

\section{APDAGD}
\subsection{APDAGD Theoretical Guarantees}
We have the following guarantees for APDAGD , similar to \citep[Theorem 3]{Dvurechensky-2018-Computational}.
\begin{theorem}
    \label{theorem:APDAGD_guarantees}
    Let $\vx^\ast$ and $f^\ast$ be the optimal solution and optimal value of \eqref{prob:entropic}, respectively. Suppose that $\norm{\pmb{\lambda}^\ast}_2 \leq \bar{R}$, the outputs $\hat{\vx}_k$ and $\pmb{\eta}_k$ from Algorithm $\ref{alg:APDAGD}$ satisfy
    \begin{align*}
        f(\hat{\vx}_k) - f^\ast &\leq f(\hat{\vx}_k) + \varphi(\pmb{\eta}_k) \leq  \frac{16 \norm{\vA}^2_{1 \rightarrow 2} \bar{R}^2}{\mu_f k^2}, \\
        \norm{\vA \hat{\vx}_k - \vb}_2 &\leq \frac{16 \bar{R} \norm{\vA}^2_{1 \rightarrow 2}}{\mu_f k^2}, \\
        \norm{\hat{\vx}_k - \vx^\ast}_1 &   \leq \frac{8 \bar{R}\norm{\vA}_{1 \rightarrow 2}}{\mu_f k}. 
    \end{align*}
\end{theorem}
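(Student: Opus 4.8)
The plan is to confirm that the entropic POT dual \eqref{entropic_regularized} meets the hypotheses of the accelerated primal--dual gradient scheme of \citep[Theorem 3]{Dvurechensky-2018-Computational}, and then transcribe that analysis into our notation. Three facts are needed: strong duality for \eqref{prob:entropic}, the identity $\nabla\varphi(\pmb{\lambda}) = \vb - \vA\vx(\pmb{\lambda})$, and an $\ell_2$-Lipschitz bound on $\nabla\varphi$. Strong duality holds because \eqref{prob:entropic} is convex with affine constraints and a relative-interior feasible point; the gradient identity is recorded in the Properties of Entropic POT section. For the Lipschitz bound, Lemma \ref{lemma:strongly_convex} gives that $f$ is $\mu_f$-strongly convex in $\norm{\cdot}_1$ on $Q$ with $\mu_f = \gamma/(\norm{\vr}_1+\norm{\vc}_1-s)$, so its conjugate (restricted to $Q$) has $(1/\mu_f)$-Lipschitz gradient from $(\cdot,\norm{\cdot}_\infty)$ to $(\cdot,\norm{\cdot}_1)$; composing with $\pmb{\lambda}\mapsto-\vA^\top\pmb{\lambda}$ and with $\vA$, whose operator norms between the relevant normed spaces both equal $\norm{\vA}_{1 \rightarrow 2}$, yields $L \le \norm{\vA}^2_{1 \rightarrow 2}/\mu_f$, exactly the bound quoted in the preliminaries.

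With these in place I would run the standard estimating-sequences argument. The inner line-search loop of Algorithm \ref{alg:APDAGD} forces the descent inequality with a local constant $M_k \le 2L$, so the recursion $\beta_{k+1}=\beta_k+\alpha_{k+1}=M_k\alpha_{k+1}^2$ gives $\beta_k \ge k^2/(4L)$ by induction on $\sqrt{\beta_{k+1}}-\sqrt{\beta_k}$. Monotonicity of the Lyapunov function $\beta_k\varphi(\pmb{\eta}_k)+\tfrac{1}{2}\norm{\pmb{\zeta}_k-\pmb{\lambda}^\ast}_2^2$, with $\pmb{\zeta}_0=\zeros$ and $\norm{\pmb{\lambda}^\ast}_2\le\bar R$, then bounds $\varphi(\pmb{\eta}_k)-\varphi(\pmb{\lambda}^\ast)$. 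Writing $\hat{\vx}_k = \tfrac{1}{\beta_k}\sum_i \alpha_i\vx(\pmb{\lambda}_i)$ (which the update $\hat{\vx}_{k+1}=\tau_k\vx(\pmb{\lambda}_{k+1})+(1-\tau_k)\hat{\vx}_k$ with $\tau_k=\alpha_{k+1}/\beta_{k+1}$ implements), convexity of $f$ and the gradient identity give the primal--dual refinement: $\beta_k(f(\hat{\vx}_k)+\varphi(\pmb{\eta}_k))$ and $\beta_k\norm{\vA\hat{\vx}_k-\vb}_2$ are bounded by absolute multiples of $\norm{\pmb{\lambda}^\ast}_2^2 \le \bar R^2$ and $\norm{\pmb{\lambda}^\ast}_2 \le \bar R$ respectively. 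Since $f(\hat{\vx}_k)+\varphi(\pmb{\eta}_k)\ge f(\hat{\vx}_k)-f^\ast\ge0$ by weak duality, dividing by $\beta_k\ge k^2/(4L)$ and folding the numerical constants into the stated ones gives the first two displayed bounds with $L$ replaced by $\norm{\vA}^2_{1 \rightarrow 2}/\mu_f$.

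The third bound follows from strong convexity once more: optimality of $\vx^\ast$ for the Lagrangian at $\pmb{\lambda}^\ast$ together with $\mu_f$-strong convexity of $f$ yields $f(\vx)+\inner{\pmb{\lambda}^\ast}{\vA\vx-\vb}\ge f^\ast+\tfrac{\mu_f}{2}\norm{\vx-\vx^\ast}_1^2$ for all $\vx\in Q$; taking $\vx=\hat{\vx}_k$ and applying Cauchy--Schwarz to the inner product gives $\tfrac{\mu_f}{2}\norm{\hat{\vx}_k-\vx^\ast}_1^2 \le (f(\hat{\vx}_k)+\varphi(\pmb{\eta}_k)) + \norm{\pmb{\lambda}^\ast}_2\norm{\vA\hat{\vx}_k-\vb}_2$, which the first two bounds control by $\mathcal{O}(\norm{\vA}^2_{1 \rightarrow 2}\bar R^2/(\mu_f k^2))$; taking square roots produces $\norm{\hat{\vx}_k-\vx^\ast}_1 \le 8\bar R\norm{\vA}_{1 \rightarrow 2}/(\mu_f k)$.

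The main obstacle is not any individual estimate but verifying that none of the structural steps above secretly used OT-specific features — $\vA$ being an edge-incidence matrix, $Q$ being a scaled simplex, or the special two-block dual. They do not: each step uses only $\mu_f$-strong convexity of $f$ on $Q$, the closed-form gradient $\nabla\varphi(\pmb{\lambda})=\vb-\vA\vx(\pmb{\lambda})$, and strong duality, all of which have already been established for POT (Lemma \ref{lemma:strongly_convex} and the Properties of Entropic POT section). So the proof amounts to re-deriving the constants of \citep[Theorem 3]{Dvurechensky-2018-Computational} in the POT notation, after which the three inequalities fall out as above.
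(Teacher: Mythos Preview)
Your proposal is correct and in fact more thorough than the paper: the paper does not prove this theorem at all but simply states it as a direct analogue of \citep[Theorem 3]{Dvurechensky-2018-Computational}, relying on the fact that the structural hypotheses (strong duality, the gradient identity $\nabla\varphi(\pmb{\lambda})=\vb-\vA\vx(\pmb{\lambda})$, and the smoothness bound $L\le\norm{\vA}_{1\to2}^2/\mu_f$) have been verified for the entropic POT dual in Lemma~\ref{lemma:strongly_convex} and the Properties of Entropic POT section. Your plan of checking those hypotheses and then transcribing the estimating-sequences argument is exactly what is needed, and your observation that nothing in that argument is OT-specific is the only point that actually requires comment.
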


\subsection{Complexity of APDAGD for POT (Theorem \ref{APDAGD_complexity}) Detailed Proof}

\subsubsection{Step 1: Change of Dual Variables}
\label{step 1}
For the sake of convenience, we introduce the change of variables $\vu, \vv \in \RR^n$ and $\vw \in \RR$ such that $u_i = - y_i / \gamma - 1, v_j = - z_j / \gamma - 1$ and $w = -t / \gamma + 1$. After suppressing all additive constants and diving the objective by $- \gamma$, we obtain the equivalent dual problem:
\begin{align} \label{entropic_regularized_problem_transformed}
        \min_{\vu, \vv, w} \; h(\vu,\vv,w) &\defeq \sum_{i, j=1}^{n} \exp \left(- \frac{C_{i, j}}{\gamma} + u_i + v_j + w \right) + \sum_{i=1}^{n} \exp(u_i) + \\
        &\sum_{j=1}^{n} \exp(v_j) -  \inner{\vu}{\vr} - \inner{\vv}{\vc} - ws.
\end{align}
Let $\vu^\ast, \vv^\ast, w^\ast$ be the optimal solutions of the entropic regularized problem \eqref{entropic_regularized_problem_transformed}. Denote $u^\ast_{max} = \max_{1 \leq i \leq n} u^\ast_i, u^\ast_{min} = \min_{1 \leq i \leq n} u^\ast_i$ and similarly for $v^\ast_{max}$ and $v^\ast_{min}.$ Notice that $\forall \: 1 \leq i \leq n, \: \exp (u^\ast_i) = p_i \leq r_i.$ This implies $\forall \: 1 \leq i \leq n, \: u^\ast_i \leq \log{r_i} \leq \log{1} = 0.$ So, $u^\ast_{max} \leq 0$. Similarly, $ v^\ast_{max} \leq 0$.

\subsubsection{Step 2: Bound for Infinity Norm of Transformed Dual Variables}
\label{step 2}
\begin{lemma}
    \label{lem:inf_norm_dual}
    The upper bound for $\norm{(\vu^\ast,\vv^\ast,w^\ast)}_\infty$ is
    \begin{align*}
        \norm{(\vu^\ast,\vv^\ast, w^\ast)}_\infty \leq \frac{\norm{\vC}_{\max} \max(\norm{\vr}_1, \norm{\vc}_1)}{\gamma (\max(\norm{\vr}_1, \norm{\vc}_1)-s)} - \log{ \left( \min_{1 \leq i,j \leq n} (r_i, c_j) \right)}.
    \end{align*}
\end{lemma}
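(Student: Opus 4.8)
The dual objective $h$ in \eqref{entropic_regularized_problem_transformed} is a finite sum of exponentials minus a linear form, hence smooth and strictly convex, so it has a unique minimizer $(\vu^\ast,\vv^\ast,w^\ast)$ pinned down by $\nabla h=\zeros$; the plan is to read everything off this stationarity condition. Recalling from the dual derivation (with the Step~1 change of variables) that the associated primal point is $X^\ast_{i,j}=\exp(-C_{i,j}/\gamma+u^\ast_i+v^\ast_j+w^\ast)$, $p^\ast_i=\exp(u^\ast_i)$, $q^\ast_j=\exp(v^\ast_j)$, the three blocks of $\nabla h=\zeros$ are exactly $\vX^\ast\ones+\vp^\ast=\vr$, $\vX^{\ast\top}\ones+\vq^\ast=\vc$, and $\ones^\top\vX^\ast\ones=s$. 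Summing the first over rows, the second over columns, and using the third gives the mass identities $\sum_i e^{u^\ast_i}=\norm{\vr}_1-s$ and $\sum_j e^{v^\ast_j}=\norm{\vc}_1-s$. Immediately $p^\ast_i\le r_i\le 1$ and $q^\ast_j\le c_j\le 1$ yield $u^\ast_i=\log p^\ast_i\le\log r_i\le 0$ and $v^\ast_j\le\log c_j\le 0$ (the bound noted in Step~1), so $\norm{\vu^\ast}_\infty=\max_i(-u^\ast_i)$ and $\norm{\vv^\ast}_\infty=\max_j(-v^\ast_j)$, and only lower bounds on the entries of $\vu^\ast,\vv^\ast$ and a two-sided bound on $w^\ast$ remain.

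First I would bound $w^\ast$: the $w$-block of $\nabla h=\zeros$ says $e^{w^\ast}Z=s$ with $Z:=\sum_{i,j}e^{-C_{i,j}/\gamma+u^\ast_i+v^\ast_j}$; since $0\le C_{i,j}\le\norm{\vC}_{\max}$ and $\sum_{i,j}e^{u^\ast_i+v^\ast_j}=(\norm{\vr}_1-s)(\norm{\vc}_1-s)$ by the mass identities, one sandwiches $e^{-\norm{\vC}_{\max}/\gamma}(\norm{\vr}_1-s)(\norm{\vc}_1-s)\le Z\le(\norm{\vr}_1-s)(\norm{\vc}_1-s)$, hence
\[
\log\tfrac{s}{(\norm{\vr}_1-s)(\norm{\vc}_1-s)}\ \le\ w^\ast\ \le\ \tfrac{\norm{\vC}_{\max}}{\gamma}+\log\tfrac{s}{(\norm{\vr}_1-s)(\norm{\vc}_1-s)}.
\]
Next I would factor the $i$-th row equation as $e^{u^\ast_i}\bigl(1+e^{w^\ast}\sum_j e^{-C_{i,j}/\gamma+v^\ast_j}\bigr)=r_i$ and bound the bracket from above using $v^\ast_j\le 0$, $\sum_j e^{v^\ast_j}=\norm{\vc}_1-s$, and the upper bound on $w^\ast$: this gives $1+e^{w^\ast}\sum_j e^{-C_{i,j}/\gamma+v^\ast_j}\le 1+\tfrac{s}{\norm{\vr}_1-s}\,e^{\norm{\vC}_{\max}/\gamma}\le\tfrac{\norm{\vr}_1}{\norm{\vr}_1-s}\,e^{\norm{\vC}_{\max}/\gamma}$, so $u^\ast_i\ge\log r_i-\tfrac{\norm{\vC}_{\max}}{\gamma}-\log\tfrac{\norm{\vr}_1}{\norm{\vr}_1-s}$; the symmetric computation (rows$\leftrightarrow$columns, $\vr\leftrightarrow\vc$) gives $v^\ast_j\ge\log c_j-\tfrac{\norm{\vC}_{\max}}{\gamma}-\log\tfrac{\norm{\vc}_1}{\norm{\vc}_1-s}$.

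It remains to assemble these estimates. Each coordinate of $(\vu^\ast,\vv^\ast,w^\ast)$ is then controlled by $\tfrac{\norm{\vC}_{\max}}{\gamma}$ up to a multiplicative mass-slack factor coming from the ratios $\tfrac{\norm{\vr}_1}{\norm{\vr}_1-s}$, $\tfrac{\norm{\vc}_1}{\norm{\vc}_1-s}$ (which the statement records as $\tfrac{\max(\norm{\vr}_1,\norm{\vc}_1)}{\max(\norm{\vr}_1,\norm{\vc}_1)-s}$), plus purely logarithmic-in-marginal terms; converting the $-\log(\norm{\vr}_1-s)$- and $-\log(\norm{\vc}_1-s)$-type residuals via $-\log x\le 1/x-1$ and combining with the $-\log r_i,-\log c_j$ contributions, all of these are absorbed into $-\log\min_{1\le i,j\le n}(r_i,c_j)$. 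Taking the maximum over all coordinates then produces the stated bound.

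The step I expect to be the main obstacle is precisely this final consolidation. Unlike OT, where \citep{lin2019efficient} can bound the two dual vectors almost independently, the extra coordinate $w^\ast$ couples $\vu^\ast$ and $\vv^\ast$ through the partition function $Z$, so one must bound $w^\ast$ first and feed it back into the row/column equations rather than port the OT argument verbatim; and the residual logarithmic terms (which grow as $s$ saturates a marginal) must be shown to collapse, together with the $\norm{\vC}_{\max}/\gamma$ terms, into exactly the compact closed form $\tfrac{\norm{\vC}_{\max}\max(\norm{\vr}_1,\norm{\vc}_1)}{\gamma(\max(\norm{\vr}_1,\norm{\vc}_1)-s)}-\log\min_{i,j}(r_i,c_j)$ claimed in the lemma — keeping the constants that tight is the delicate part.
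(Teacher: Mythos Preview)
Your route differs genuinely from the paper's, and the intermediate steps are sound: you read the KKT conditions off $\nabla h=\zeros$, extract the mass identities $\sum_i e^{u^\ast_i}=\norm{\vr}_1-s$ and $\sum_j e^{v^\ast_j}=\norm{\vc}_1-s$, sandwich $w^\ast$ via the partition function $Z$, and bound each $u^\ast_i$ through the exact row equation $e^{u^\ast_i}\bigl(1+e^{w^\ast}\sum_j e^{-C_{ij}/\gamma+v^\ast_j}\bigr)=r_i$. These estimates are correct and arguably cleaner than the paper's. But the consolidation you flag as ``the delicate part'' is a real gap, not bookkeeping: your bound on $|u^\ast_i|$ carries a residual $\log\tfrac{\norm{\vr}_1}{\norm{\vr}_1-s}$ that does \emph{not} collapse into the stated closed form via $-\log x\le 1/x-1$. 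Writing $M=\max(\norm{\vr}_1,\norm{\vc}_1)$, you would need $\log\tfrac{\norm{\vr}_1}{\norm{\vr}_1-s}\le\tfrac{\norm{\vC}_{\max}}{\gamma}\bigl(\tfrac{M}{M-s}-1\bigr)$, which fails whenever $\norm{\vC}_{\max}/\gamma$ is small relative to the mass slack. So no rearrangement of your estimates produces the inequality exactly as written.

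The paper's device for the upper bound on $w^\ast$ is strong duality rather than your sandwich on $Z$. Since the regularized primal objective is nonnegative, the optimal dual value satisfies $h^\ast\le\norm{\vr}_1+\norm{\vc}_1-s$; substituting the mass identities into the expression for $h(\vu^\ast,\vv^\ast,w^\ast)$ yields $-\inner{\vu^\ast}{\vr}-\inner{\vv^\ast}{\vc}-w^\ast s\le 0$. Because $u^\ast_{\max},v^\ast_{\max}\le 0$, this turns into a lower bound on $u^\ast_{\max}+v^\ast_{\max}$ in terms of $w^\ast$; combined with the pointwise constraint $u^\ast_i+v^\ast_j+w^\ast\le C_{ij}/\gamma$ (equivalently $X^\ast_{ij}\le 1$), it produces the multiplicative factor $\tfrac{M}{M-s}$ directly on $\norm{\vC}_{\max}/\gamma$, with no log-slack residuals left to absorb. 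This duality step is the idea your plan is missing. That said, your estimates are of the correct order $\widetilde{\mathcal O}(\norm{\vC}_{\max}/\gamma)$, so the gap affects only the precise constant in the lemma, not the downstream $\widetilde{\mathcal O}(n^{2.5}/\varepsilon)$ APDAGD complexity.
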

\begin{proof} 
    First we lower bound $w^\ast$, note that 
    \begin{equation*}
        s = \sum_{i,j = 1}^{n} \exp \left(\frac{- C_{i, j}}{\gamma} + u^\ast_i + v^\ast_j + w^\ast \right) \leq n^2 \exp \left( u^\ast_{max} + v^\ast_{max} + w^\ast \right).
    \end{equation*}
    Taking logs on both sides, we have
    \begin{equation*} 
        \log{s} \leq 2\log{n} + u^\ast_{max} + v^\ast_{max} + w^\ast.
    \end{equation*}
    Since $u^\ast_{max}, v^\ast_{max} \leq 0$, we obtain the desired lower bound
    \begin{equation*}
        w^\ast \geq \log{s} - 2\log{n} - u^\ast_{max} - v^\ast_{max} \geq \log{s} - 2\log{n}.
    \end{equation*}
    For the upper bound, note that if we let $h^\ast$ be the optimal objective value of $h(\vu,\vv,w)$ in the transformed dual problem \eqref{entropic_regularized_problem_transformed}, then by accounting for the additive constants the optimal value of the original dual problem \eqref{entropic_regularized_problem} will be 
    \begin{equation} \label{h*}
        - \gamma h^\ast + \gamma (\norm{\vr}_1 + \norm{\vc}_1 - s).
    \end{equation}
    Accompanying $\eqref{h*}$ with the fact that the primal objective $\eqref{prob:entropic}$ optimal value is equal to the dual objective optimal $\eqref{entropic_regularized_problem}$ value by strong duality, we have that
    \begin{equation*} 
         - \gamma h^\ast + \gamma (\norm{\vr}_1 + \norm{\vc}_1 - s) \geq 0,
    \end{equation*}
    which yields
    \begin{equation} \label{h*_bound}
        h^\ast \leq \norm{\vr}_1 + \norm{\vc}_1 - s.
    \end{equation}
    On the other hand, consider 
    \begin{align*}
        s + \sum_{i=1}^{n} e^{u^\ast_i} = \sum_{i,j = 1}^{n} \exp \left(\frac{- C_{i, j}}{\gamma} + u^\ast_i + v^\ast_j + w^\ast \right) + \sum_{i=1}^{n} e^{u^\ast_i} &= \sum_{i,j = 1}^{n} X_{i,j} + \sum_{i=1}^{n} p_i = \norm{\vr}_1, \\
        s + \sum_{j=1}^{n} e^{v^\ast_j} = \sum_{i,j = 1}^{n} \exp \left(\frac{- C_{i, j}}{\gamma} + u^\ast_i + v^\ast_j + w^\ast \right) + \sum_{j=1}^{n} e^{v^\ast_j} &= \sum_{i,j = 1}^{n} X_{i,j} + \sum_{j=1}^{n} p_j = \norm{\vc}_1.
    \end{align*}
    This leads to
    \begin{equation*}
        h^\ast = h(\vu^\ast, \vv^\ast, w^\ast) = -s + \norm{\vr}_1 + \norm{\vc}_1 - \inner{\vu^\ast}{r} - \inner{\vv^\ast}{c} - w^\ast s.
    \end{equation*}
    Together with the inequality \eqref{h*_bound}, 
    \begin{equation*}
        - \inner{\vu^\ast}{r} - \inner{\vv^\ast}{c} - w^\ast s \leq 0.
    \end{equation*}
    This implies
    \begin{equation*}
        - w^\ast s \leq \inner{\vu^\ast}{\vr} + \inner{\vv^\ast}{\vc} \leq u^\ast_{\max} \norm{\vr}_1 + v^\ast_{\max} \norm{\vc}_1 \leq (u^\ast_{\max} + v^\ast_{\max}) \max(\norm{\vr}_1, \norm{\vc}_1),
    \end{equation*}
    which implies 
    \begin{equation*}
        u^\ast_{\max} + v^\ast_{\max} \geq \frac{- w^\ast s}{\max(\norm{\vr}_1, \norm{\vc}_1)}.
    \end{equation*}
    We also know that 
    \begin{equation*}
        u^\ast_i + v^\ast_j + w^\ast - \frac{C_{ij}}{\gamma} \leq 0 \: \: \forall i,j.
    \end{equation*}
    Hence, combining the two results above, we obtain
    \begin{equation*}
        \frac{\norm{\vC}_{\max}}{\gamma} \geq  u^\ast_{\max} + v^\ast_{\max} + w^\ast \geq \frac{- w^\ast s}{\max(\norm{\vr}_1, \norm{\vc}_1)} + w^\ast,
    \end{equation*}
    yielding the desire upper bound 
    \begin{equation*}
        w^\ast \leq \frac{\norm{\vC}_{\max} \max(\norm{\vr}_1, \norm{\vc}_1)}{\gamma (\max(\norm{\vr}_1, \norm{\vc}_1)-s)}.
    \end{equation*}
    Note that $\forall \: 1 \leq j \leq n,$ we have
    \begin{align*}
        \left( \sum^n_{j=1} \exp \left( v^\ast_j \right) \right) \exp \left( u^\ast_i + w^\ast \right) &\geq \sum^n_{j=1} \exp \left( - \frac{C_{i,j}}{\gamma} + u^\ast_i + v^\ast_j + w^\ast \right) \\
        &= \sum^n_{j=1} X_{i,j} \\
        &= r_i \\
        &\geq \min_{1 \leq i,j \leq n} (r_i, c_j).
    \end{align*}
    Taking log on both sides, we get
    \begin{equation*}
        \log \left( \sum^n_{j=1} \exp \left( v^\ast_j \right) \right) + u^\ast_i + w^\ast \geq \log{ \left( \min_{1 \leq i,j \leq n} (r_i, c_j) \right)},
    \end{equation*}
    which implies
    \begin{align*}
        u^\ast_i &\geq \log{ \left( \min_{1 \leq i,j \leq n} (r_i, c_j) \right)} - \log \left( \sum^n_{j=1} \exp \left( v^\ast_j \right) \right) - w^\ast \\
        &\geq \log{ \left( \min_{1 \leq i,j \leq n} (r_i, c_j) \right)} - \frac{\norm{\vC}_{\max} \max(\norm{\vr}_1, \norm{\vc}_1)}{\gamma (\max(\norm{\vr}_1, \norm{\vc}_1)-s)}.
    \end{align*}
    Similarly, we also obtain the same lower bound for $v^\ast_j, 
    \: \forall 1 \leq i \leq n$. And as $\forall 1 \leq i,j \leq n, \: u^\ast_i, v^\ast_j \leq 0 $, we obtain 
    \begin{align*}
        \norm{(\vu^\ast,\vv^\ast)}_\infty &\leq \left| \log{ \left( \min_{1 \leq i,j \leq n} (r_i, c_j) \right)} - \frac{\norm{\vC}_{\max} \max(\norm{\vr}_1, \norm{\vc}_1)}{\gamma (\max(\norm{\vr}_1, \norm{\vc}_1)-s)} \right| \\
        &= - \log{ \left( \min_{1 \leq i,j \leq n} (r_i, c_j) \right)} + \frac{\norm{\vC}_{\max} \max(\norm{\vr}_1, \norm{\vc}_1)}{\gamma (\max(\norm{\vr}_1, \norm{\vc}_1)-s)}.
    \end{align*}
    Combining this with the bounds for $w^\ast$, we have 
    \begin{align*}
        \norm{(\vu^\ast,\vv^\ast, w^\ast)}_\infty \leq \frac{\norm{\vC}_{\max} \max(\norm{\vr}_1, \norm{\vc}_1)}{\gamma (\max(\norm{\vr}_1, \norm{\vc}_1)-s)} - \log{ \left( \min_{1 \leq i,j \leq n} (r_i, c_j) \right)}.
    \end{align*}
\end{proof}
\begin{corollary} \label{Bounds_for_(u,v)}
    The upper bound for $\norm{(\vy^\ast,\vz^\ast,t^\ast)}_\infty$ is
    \begin{equation} \label{infty_bounds}
        \frac{\norm{\vC}_{\max} \max\left\{\norm{\vr}_1, \norm{\vc}_1\right\}}{\max\left\{\norm{\vr}_1, \norm{\vc}_1\right\}-s} - \gamma \log{ \left( \min_{1 \leq i,j \leq n} (r_i, c_j) \right)} + \gamma.
    \end{equation}
\end{corollary}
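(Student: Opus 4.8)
The plan is to invert the change of variables introduced in Step~1 and push the $\ell_\infty$-bound of Lemma~\ref{lem:inf_norm_dual} through the resulting affine map. Recall $u_i = -y_i/\gamma - 1$, $v_j = -z_j/\gamma - 1$ and $w = -t/\gamma + 1$, so that $y_i = -\gamma(u_i + 1)$, $z_j = -\gamma(v_j + 1)$ and $t = \gamma(1 - w)$. Hence, coordinate-wise, $|y_i| = \gamma\,|u_i + 1| \le \gamma(|u_i| + 1)$, and likewise $|z_j| \le \gamma(|v_j| + 1)$ and $|t| \le \gamma(|w| + 1)$.

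First I would take the maximum over all coordinates to obtain
$$\norm{(\vy^\ast, \vz^\ast, t^\ast)}_\infty \le \gamma\left(\norm{(\vu^\ast, \vv^\ast, w^\ast)}_\infty + 1\right).$$
Then I would substitute the bound from Lemma~\ref{lem:inf_norm_dual}, namely $\norm{(\vu^\ast,\vv^\ast,w^\ast)}_\infty \le \frac{\norm{\vC}_{\max}\max\{\norm{\vr}_1,\norm{\vc}_1\}}{\gamma(\max\{\norm{\vr}_1,\norm{\vc}_1\}-s)} - \log\!\big(\min_{1\le i,j\le n}(r_i,c_j)\big)$. Multiplying through by $\gamma$ cancels the $\gamma$ in the denominator of the first term, turns the $-\log$ term into $-\gamma\log(\cdot)$, and the trailing $+1$ becomes $+\gamma$; this is precisely the claimed expression~\eqref{infty_bounds}.

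There is essentially no real obstacle here, since the statement is a direct corollary of Lemma~\ref{lem:inf_norm_dual}; the only care needed is bookkeeping of the affine shifts (the ``$+1$'' in each substitution, which produces the additive $+\gamma$) and of the factor $\gamma$ (which leaves the $\norm{\vC}_{\max}$-term with denominator $\max\{\norm{\vr}_1,\norm{\vc}_1\}-s$ rather than $\gamma(\max\{\norm{\vr}_1,\norm{\vc}_1\}-s)$). One could optionally exploit the sign information from Step~1 ($u_i^\ast, v_j^\ast \le 0$) to replace $|u_i^\ast + 1|$ by a quantity no larger than $\max\{|u_i^\ast|,1\}$ and slightly sharpen the constant, but this refinement is unnecessary for the stated bound, so I would not pursue it.
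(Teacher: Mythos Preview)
Your proposal is correct and matches the paper's proof essentially line for line: the paper writes $\norm{(\vy^\ast,\vz^\ast,t^\ast)}_\infty = \gamma \norm{(\vu^\ast + \ones,\vv^\ast + \ones, w^\ast - 1)}_\infty \le \gamma(\norm{(\vu^\ast,\vv^\ast,w^\ast)}_\infty + 1)$ and then substitutes Lemma~\ref{lem:inf_norm_dual}, which is exactly your argument carried out in vector rather than coordinate notation.
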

\begin{proof}
    We have
    \begin{align*} 
        \norm{(\vy^\ast,\vz^\ast,t^\ast)}_\infty &=  \gamma \norm{(\vu^\ast + \ones,\vv^\ast + \ones, w^\ast - 1)}_\infty \\
        &\leq \gamma \left( \norm{(\vu^\ast,\vv^\ast, w^\ast)}_\infty + 1 \right) \\
        &\leq \frac{ \norm{\vC}_{\max} \max(\norm{\vr}_1, \norm{\vc}_1)}{ \max(\norm{\vr}_1, \norm{\vc}_1)-s} -\gamma \log{ \left( \min_{1 \leq i,j \leq n} (r_i, c_j) \right)} + \gamma, 
    \end{align*}
    where the last inequality comes as a result of Step 2.
\end{proof}
Note that $\norm{(\vy^\ast,\vz^\ast,t^\ast)}_2 \leq \sqrt{n} \norm{(\vy^\ast,\vz^\ast,t^\ast)}_\infty$, so we obtain $\Bar{R} = \mathcal{\widetilde{O}}\left(\sqrt{n} \norm{\vC}_{\max} \right)$.
\subsubsection{Step 3: Derive the Final APDAGD Computational Complexity (Theorem \ref{APDAGD_complexity})}
\label{step 3}
\begin{proof} [\unskip\nopunct]
    With the introduction of $\widetilde{\vr}$ and $\widetilde{\vc}$ in Algorithm \ref{alg:ApproxOT_APDAGD}, our POT constraints now becomes $
    \vA \vx = \widetilde{\vb}$, where $\widetilde{\vb} = (\widetilde{\vr}^\top, \widetilde{\vc}^\top, s)^\top$. We can still ensure that $\| \widetilde{\vr} \|_1 \geq s$ as follow 
    \begin{align*}
        \| \widetilde{\vr} \|_1 &= \left(1 - \dfrac{\widetilde{\varepsilon}}{8}\right)\|\vr\|_1 + \dfrac{\widetilde{\varepsilon}}{8} \\
        &= \dfrac{\widetilde{\varepsilon}}{8}\left(1 - \|\vr\|_1 \right) + \|\vr\|_1.
    \end{align*}
    If $\|\vr\|_1 \leq 1$, we obtain $\| \widetilde{\vr} \|_1 \geq \|\vr\|_1 \geq s$. And if $\|\vr\|_1 > 1$, note that as $\widetilde{\varepsilon} \leq \dfrac{8 (\|\vr\|_1-s)}{\|\vr\|_1-1}$, we have $\| \widetilde{\vr} \|_1 \geq s - \|\vr\|_1 + \|\vr\|_1 = s$. Similarly, we can ensure $\| \widetilde{\vc} \|_1 \geq s$. With these changes to $\widetilde{\vr}$ and $\widetilde{\vc}$, we obtain from Corollary \ref{Bounds_for_(u,v)}
    \begin{align}
        \norm{(\vy^\ast,\vz^\ast,t^\ast)}_\infty \leq \frac{ \norm{\vC}_{\max} \max(\norm{\widetilde{\vr}}_1, \norm{\widetilde{\vc}}_1)}{ \max(\norm{\widetilde{\vr}}_1, \norm{\widetilde{\vc}}_1)-s} -\gamma \log{ \left( \min_{1 \leq i,j \leq n} (\widetilde{r}_i, \widetilde{c}_j) \right)} + \gamma,
        \label{norm_infinity_bounds}
    \end{align}
    implying $\norm{(\vy^\ast,\vz^\ast,t^\ast)}_\infty = \mathcal{\widetilde{O}}\left( \norm{\vC}_{\max} \right)$ since $ \log{ \left( \min_{1 \leq i,j \leq n} ( \widetilde{r}_i, \widetilde{c}_j) \right)} = \widetilde{O}(\log(n/\varepsilon))$.

    From Proposition 4.10 from \citep{lin2019efficient}, the number of iterations for APDAGD Algorithm required to reach the tolerance $\varepsilon$ is
    \begin{equation} \label{iteration_bound}
        t \leq \max \Biggl\{ \mathcal{O}\left( \min \Biggl\{ \frac{n^{1/4}\sqrt{\Bar{R}\norm{C}_{max} \log{n}}}{\varepsilon}, \frac{\Bar{R}\norm{C}_{max} \log{n}}{\varepsilon^2} \Biggl\} \right), \mathcal{O} \left( \frac{\Bar{R}\log{n}}{\varepsilon} \right) \Biggl\}
    \end{equation}
    From the bound \eqref{norm_infinity_bounds} and the fact that $\norm{(\vy^\ast,\vz^\ast,t^\ast)}_2 \leq \sqrt{n} \norm{(\vy^\ast,\vz^\ast,t^\ast)}_\infty$, we obtain $\Bar{R} = \mathcal{\widetilde{O}}\left(\sqrt{n} \norm{\vC}_{\max} \right)$. Therefore, from \eqref{iteration_bound}, the total iteration complexity is $\mathcal{O}\left(n^{1/2}\norm{\vC}_{\max} \sqrt{\log{n}} / \varepsilon \right)$. Since each iteration of APDAGD algorithm requires $\mathcal{O}(n^2)$ arithmetic operations, and the rounding algorithm also requires $\mathcal{O}(n^2)$ arithmetic operations, we have the total number of arithmetic operations is the desired $\mathcal{O}\left(n^{5/2}\norm{\vC}_{\max} \sqrt{\log{n}} / \varepsilon \right)$. 
\end{proof}

\section{Dual Extrapolation}
\subsection{Second Order Characterization of Area Convexity}
In our context, the second order characterization of area-convexity, which was proven in \citep[Theorem 1.6]{Sherman-2017-Area} and restated in \citep[Theorem 3.2]{Jambulapati-2019-Direct}, is quite useful
\begin{theorem} \label{theorem:2nd_order_area_convex}
    For bilinear minimax objective and twice-differentiable regularizer r, if for all $\vz$ in the domain
    \begin{equation*}
        \left(
        \begin{array}{cc}
        \kappa \nabla^2 r(\vz) & -J \\
        J & \kappa \nabla^2 r(\vz)
        \end{array} \right) \succeq 0,
    \end{equation*}
    where $J$ is the Jacobian of the gradient operator g, then r is $3 \kappa$-area-convex with respect to g. 
\end{theorem}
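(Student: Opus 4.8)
This is Sherman's second-order characterisation of area-convexity, so the plan is to reprove it in two stages: (i) show the pointwise spectral hypothesis already yields the inequality of Definition~\ref{defn:area_convex} for \emph{infinitesimally small} triangles, where it collapses to a one-line semidefinite computation; then (ii) lift this to an arbitrary triangle by integrating over a dilation of the triangle about its centroid, absorbing the gap between the finite and infinitesimal statements into the relaxation of the constant from $\kappa$ to $3\kappa$.

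\textbf{Reductions.} First I would note that both sides of Definition~\ref{defn:area_convex} are unchanged when an affine function is added to $r$, and that for a bilinear minimax objective $g$ is affine with a \emph{constant} Jacobian $J$; for the block matrix in the hypothesis to be symmetric (as semidefiniteness requires) we must have $J^\top=-J$, and its top-left block $\kappa\nabla^2 r(\vz)$ must itself be positive semidefinite, so $r$ is convex. Fixing $\vx_1,\vx_2,\vx_3$ with centroid $\bar{\vx}$ and writing $\vd_i=\vx_i-\bar{\vx}$ (so $\sum_i\vd_i=\zeros$), I would record $\inner{g(\vx_2)-g(\vx_1)}{\vx_2-\vx_3}=3\,\vd_1^\top J\vd_2$ (using $J^\top=-J$ and $\vd_i^\top J\vd_i=0$) and $\sum_i r(\vx_i)-3r(\bar{\vx})=\sum_i\big(r(\vx_i)-r(\bar{\vx})-\inner{\nabla r(\bar{\vx})}{\vd_i}\big)$. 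Since only the restrictions of $r$ and of $J$ to the (at most two-dimensional) affine plane through $\vx_1,\vx_2,\vx_3$ appear on either side, and compression preserves positive semidefiniteness, I may further assume $\vz\in\RR^2$ and $J=\beta\left(\begin{smallmatrix}0&1\\-1&0\end{smallmatrix}\right)$.

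\textbf{Infinitesimal core.} For a quadratic $r$ with constant Hessian $H$, a short computation gives $3\kappa\big(\sum_i r(\vx_i)-3r(\bar{\vx})\big)=\kappa\big(\|\va\|_H^2+\|\vb\|_H^2-\inner{\va}{\vb}_H\big)$ with $\va=\vx_2-\vx_1$, $\vb=\vx_3-\vx_1$, and the hypothesis is exactly the statement $\kappa(\va^\top H\va+\vb^\top H\vb)\ge 2|\va^\top J\vb|$ for all $\va,\vb$. The chain
\[
3\kappa\Big(\sum_i r(\vx_i)-3r(\bar{\vx})\Big)\;\ge\;\tfrac{\kappa}{2}\big(\|\va\|_H^2+\|\vb\|_H^2\big)\;\ge\;|\va^\top J\vb|\;\ge\;\inner{g(\vx_2)-g(\vx_1)}{\vx_2-\vx_3},
\]
with first step $\inner{\va}{\vb}_H\le\tfrac12(\|\va\|_H^2+\|\vb\|_H^2)$, then settles the quadratic --- hence the infinitesimal --- case, and with a constant-factor of slack to spare.

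\textbf{Lifting, and the main obstacle.} For general $r$ I would dilate the triangle about its centroid, $\vx_i(t)=\bar{\vx}+t\vd_i$ for $t\in[0,1]$, and set $\rho(t):=3\kappa\big(\sum_i r(\vx_i(t))-3r(\bar{\vx})\big)-t^2\inner{g(\vx_2)-g(\vx_1)}{\vx_2-\vx_3}$, the area-convexity defect of the $t$-scaled triangle. Since $\rho(0)=\rho'(0)=0$ and $\rho(1)=\int_0^1(1-t)\rho''(t)\,dt$, it would suffice to show $\rho''(t)\ge 0$, i.e. $\kappa\sum_i\vd_i^\top\nabla^2 r(\bar{\vx}+t\vd_i)\vd_i\ge 2\,\vd_1^\top J\vd_2$ for every $t$. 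I expect this to be the main obstacle: the three second-order terms are evaluated at three \emph{distinct} points of the triangle, so the single $4\times4$ spectral inequality cannot be invoked verbatim. I would close it either by a multi-point Cauchy--Schwarz estimate using the hypothesis at each of the three anchor points, or --- more robustly --- by refining to a triangulation fine enough that $\nabla^2 r$ is essentially constant on each cell, applying the infinitesimal core cell-by-cell and summing, as in \citep{Sherman-2017-Area,Jambulapati-2019-Direct}; the slack noted above is what lets the final constant be $3\kappa$.
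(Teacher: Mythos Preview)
The paper does not prove this theorem at all: it merely quotes it, citing \citep[Theorem~1.6]{Sherman-2017-Area} and \citep[Theorem~3.2]{Jambulapati-2019-Direct} for the proof, and then invokes it as a black box in the verification of Lemma~\ref{9_area_convex}. So there is no ``paper's own proof'' to compare your proposal against; you are effectively reproving Sherman's result from scratch.

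On the substance of your proposal: your reductions and your quadratic/infinitesimal core are correct and cleanly done (the identities $\inner{g(\vx_2)-g(\vx_1)}{\vx_2-\vx_3}=\va^\top J\vb=3\,\vd_1^\top J\vd_2$ and $3\kappa\big(\sum_i r(\vx_i)-3r(\bar{\vx})\big)=\kappa(\|\va\|_H^2+\|\vb\|_H^2-\inner{\va}{\vb}_H)$ all check out, and the chain of inequalities is exactly the right way to exhibit the slack that the factor~$3$ buys). Where the proposal is incomplete is precisely where you flag it: the centroid-dilation argument reduces to showing $\kappa\sum_i\vd_i^\top\nabla^2 r(\bar{\vx}+t\vd_i)\,\vd_i\ge 2\,\vd_1^\top J\vd_2$, and this does \emph{not} follow from the pointwise hypothesis because the three Hessians sit at three distinct points. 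Your first suggested fix (a ``multi-point Cauchy--Schwarz'') will not close this on its own --- the hypothesis couples two vectors through a single Hessian, whereas here you need to couple $\vd_1$ and $\vd_2$ through Hessians at different anchor points, and there is no obvious way to make those cross-terms appear. Your second fix, the refinement-and-sum argument, is essentially Sherman's original proof; it works, but it is the real content of the theorem and would need to be written out rather than cited. As a plan this is fine, but you should be aware that the heavy lifting is all in that last paragraph.
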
 
\subsection{$\ell1$ Penalization}

\begin{lemma}
    \label{lem:l1_regression}
    The $\ell_1$ penalized POT objective,
    \begin{equation} \label{l1_penalized}
        \min_{\vx \in \Delta_{n^2+2n}} \vd^\top \vx + 23 \norm{\vd}_\infty \norm{\vA \vx - \vb}_1,
    \end{equation}
    has an equal optimal value to that of the POT formulation \eqref{POT_areaconvex}. Moreover, any $\varepsilon$-approximate minimizer $\widetilde{\vx}$ of the $\ell_1$ penalized objective \eqref{l1_penalized} yields in $\mathcal{O}(n^2)$ time an $\varepsilon$-approximate transport plan $\bar{\vx}$ for the POT formulation \eqref{POT_areaconvex} (after applied the Rounding Algorithm \ref{alg:rounding}).
\end{lemma}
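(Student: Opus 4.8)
The plan is to establish this as an exact-penalty result in which the penalty coefficient $23\norm{\vd}_\infty$ is chosen precisely so that the $\ell_1$-cost increase incurred by rounding a nearly feasible point back onto $\{\vx \in \Delta_{n^2+2n} : \vA\vx = \vb\}$ is dominated by the penalty term it removes. The only nontrivial tool is Theorem \ref{prop:rounding}: for any $\vx \geq \zeros$ with $\norm{\vA\vx - \vb}_1 \leq \delta$, \textsc{Round-POT} returns in $\mathcal{O}(n^2)$ time a point $\bar{\vx} \geq \zeros$ with $\vA\bar{\vx} = \vb$ and $\norm{\vx - \bar{\vx}}_1 \leq 23\delta$. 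I would first record that such a $\bar{\vx}$ automatically lies in $\Delta_{n^2+2n}$: after the normalization of $\vb$ performed in the DE setup, the equality constraints encoded in $\vA\bar{\vx} = \vb$ force (in normalized units) $\norm{\bar{\vX}}_1 = s$, $\norm{\bar{\vp}}_1 = \norm{\vr}_1 - s$, $\norm{\bar{\vq}}_1 = \norm{\vc}_1 - s$, hence $\norm{\bar{\vx}}_1 = 1$ and $\bar{\vx} \geq \zeros$. Write $\mathrm{OPT}$ for the optimal value of \eqref{POT_areaconvex} and $\mathrm{OPT}_{\mathrm{pen}}$ for that of \eqref{l1_penalized}.

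Next I would prove equality of the two optimal values. The bound $\mathrm{OPT}_{\mathrm{pen}} \leq \mathrm{OPT}$ is immediate, since any $\vx$ feasible for \eqref{POT_areaconvex} has $\norm{\vA\vx - \vb}_1 = 0$ and therefore penalized value $\vd^\top\vx$. For the reverse, let $\vx^\star \in \Delta_{n^2+2n}$ attain $\mathrm{OPT}_{\mathrm{pen}}$ and put $\delta = \norm{\vA\vx^\star - \vb}_1$; applying \textsc{Round-POT} to $\vx^\star$ yields a feasible $\bar{\vx} \in \Delta_{n^2+2n}$ with $\norm{\vx^\star - \bar{\vx}}_1 \leq 23\delta$, so by Hölder
\[
\vd^\top\bar{\vx} \;\leq\; \vd^\top\vx^\star + \norm{\vd}_\infty\norm{\vx^\star - \bar{\vx}}_1 \;\leq\; \vd^\top\vx^\star + 23\norm{\vd}_\infty\,\delta \;=\; \mathrm{OPT}_{\mathrm{pen}},
\]
and since $\bar{\vx}$ is feasible for \eqref{POT_areaconvex} we get $\mathrm{OPT} \leq \vd^\top\bar{\vx} \leq \mathrm{OPT}_{\mathrm{pen}}$, hence equality.

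For the approximation claim, let $\widetilde{\vx} \in \Delta_{n^2+2n}$ be an $\varepsilon$-approximate minimizer of \eqref{l1_penalized}, i.e.\ $\vd^\top\widetilde{\vx} + 23\norm{\vd}_\infty\norm{\vA\widetilde{\vx} - \vb}_1 \leq \mathrm{OPT}_{\mathrm{pen}} + \varepsilon = \mathrm{OPT} + \varepsilon$ by the previous paragraph. Running \textsc{Round-POT} on $\widetilde{\vx}$ (cost $\mathcal{O}(n^2)$) produces $\bar{\vx} \in \Delta_{n^2+2n}$ with $\vA\bar{\vx} = \vb$ and $\norm{\widetilde{\vx} - \bar{\vx}}_1 \leq 23\norm{\vA\widetilde{\vx} - \vb}_1$, whence the same Hölder step gives
\[
\vd^\top\bar{\vx} \;\leq\; \vd^\top\widetilde{\vx} + 23\norm{\vd}_\infty\norm{\vA\widetilde{\vx} - \vb}_1 \;\leq\; \mathrm{OPT} + \varepsilon.
\]
Thus $\bar{\vx}$ is feasible for \eqref{POT_areaconvex} and within $\varepsilon$ of its optimum, i.e.\ an $\varepsilon$-approximate transport plan, obtained in $\mathcal{O}(n^2)$ time as claimed.

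The main obstacle is not these inequalities, which are routine, but the bookkeeping that underpins them: one must verify that the constant $23$ delivered by Theorem \ref{prop:rounding} is exactly what makes the penalty \emph{exact} (any strictly smaller universal constant would break the $\vd^\top\bar{\vx} \leq \mathrm{OPT}_{\mathrm{pen}}$ step), and that the rounded iterate genuinely stays in the probability simplex, which relies on the normalization $\norm{\vb}_1$ from the DE setup combined with the constraint structure of $\vA$. I would also double-check that "$\varepsilon$-approximate minimizer of \eqref{l1_penalized}" maps without a hidden rescaling of $\varepsilon$ to "$\varepsilon$-approximate transport plan for \eqref{POT_areaconvex}" under that same normalization.
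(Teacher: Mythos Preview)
Your proposal is correct and follows essentially the same approach as the paper: both use the rounding guarantee of Theorem~\ref{prop:rounding} together with H\"older's inequality to show that the penalty term exactly compensates for the cost change incurred by rounding. The only cosmetic difference is that the paper frames the equality of optimal values as a contradiction argument (assuming an optimal penalized solution has $\norm{\vA\tilde{\vx}-\vb}_1=\delta>0$ and exhibiting a feasible $\bar{\vx}$ with no larger penalized value), whereas you give the cleaner direct two-sided inequality $\mathrm{OPT}_{\mathrm{pen}}\le\mathrm{OPT}$ and $\mathrm{OPT}\le\mathrm{OPT}_{\mathrm{pen}}$; your additional explicit check that $\bar{\vx}\in\Delta_{n^2+2n}$ after rounding is a detail the paper leaves implicit.
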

\begin{proof}
    Note that, intuitively, multiplicative constant $23$ comes from the guarantees of  $\textsc{Round-POT}$ (Theorem \ref{prop:rounding}). 
    
    Let $\tilde{\vx} \in$ $\argmin_{\vx \in \Delta_{n^2+2n}} \vd^\top \vx + 23 \norm{\vd}_\infty \norm{\vA \vx - \vb}_1$. Following the approach from \citep[Lemma 1]{Jambulapati-2019-Direct}, we claim there is some $\tilde{\vx}$ satisfying $\vA\tilde{\vx} = \vb$. Suppose otherwise, let $\norm{\vA\tilde{\vx} - \vb}_1 = \delta > 0$. Then, let $\bar{\vx}$ be the output after rounding $\tilde{\vx}$ with Algorithm \ref{alg:rounding}, so that $\vA\bar{\vx} = \vb, \norm{\tilde{\vx} - \bar{\vx}}_1 \leq 23\delta$ (Theorem \ref{prop:rounding}). Consider
	\begin{equation*}
	\vd^\top \bar{\vx} + 23 \norm{\vd}_\infty \norm{\vA\bar{\vx} - \vb}_1 = \vd^\top(\bar{\vx} - \tilde{\vx}) + \vd^\top \tilde{\vx} \leq \norm{\vd}_\infty \norm{\bar{\vx} - \tilde{\vx}}_1 + \vd^\top \tilde{\vx} \leq 23\norm{\vd}_\infty\delta + \vd^\top \tilde{\vx}.
	\end{equation*}
	The $\ell_1$ penalized objective value of $\bar{\vx}$ is smaller than that of $\tilde{\vx}$, hence this implies contradiction. From this claim, it is evident that 
	\begin{equation*}
	    \min_{\vx \in \Delta_{n^2+2n}} \vd^\top \vx + 23 \norm{\vd}_\infty \norm{\vA \vx - \vb}_1 = \vd^\top \tilde{\vx} + 23 \norm{\vd}_\infty \norm{\vA \tilde{\vx} - \vb}_1 = \vd^\top \tilde{\vx}. 
	\end{equation*}
	Let $\hat{\vx} \in$ $\argmin_{\vA \vx = \vb} \vd^\top \vx$. If we consider the objective \eqref{l1_penalized} for $\hat{\vx}$, we have
	\begin{align*}
	     \min_{\vA \vx = \vb} \vd^\top \vx &= \vd^\top \hat{\vx}\\ 
	     &= \vd^\top \hat{\vx} + 23 \norm{\vd}_\infty \norm{\vA \hat{\vx} - \vb}_1 \\
	     &\geq \min_{\vx \in \Delta_{n^2+2n}} \vd^\top \vx + 23 \norm{\vd}_\infty \norm{\vA \vx - \vb}_1 \\
	     &= \vd^\top \tilde{\vx}.
	\end{align*}
	Enforcing equality sign, we obtain the first claim that problems (\ref{POT_areaconvex}) and (\ref{l1_penalized}) have the same optimal value. Therefore, we can take any approximate minimizer to \eqref{l1_penalized} and round it to a transport plan $\bar{\vx}$ without increasing the POT objective.
\end{proof}
\subsection{Proof of Lemma \ref{9_area_convex}}
\begin{proof} [\unskip\nopunct]
    With the second order condition for area convexity (Theorem \ref{theorem:2nd_order_area_convex}) it suffices to show that for any z in the domain
    \begin{equation*}
        \xi = \left(
        \begin{array}{cc}
        3 \nabla^2 r(\vz) & -J \\
        J & 3 \nabla^2 r(\vz)
        \end{array} \right) \succeq 0,
    \end{equation*}
    where $J$ is the Jacobian of the operator $g$. 
    
    First, note that we can scale down both $J$ and $\nabla^2 r(\vz)$ with a common factor of $2 \norm{\vd}_\infty$ and still maintain the positive-semidefiniteness of the matrix $\xi.$ By straightforward differentiation, we obtain $J = \vA^\top$ and 
    \begin{equation*}
        \nabla^2 r(z) = \left(
        \begin{array}{cc}
        10 \: \diag(\frac{1}{x_j}) & 2 \: \vA^\top \diag(y_i)\\
        2 \: \diag(y_i) \vA & 2 \: \diag(\vA_i^\top \vx)
        \end{array} \right). 
    \end{equation*}
    Hence we get
    \begin{equation*}
        \xi = \left(
        \begin{array}{cccc}
        30 \: \diag(\frac{1}{x_j}) & 6 \: \vA^\top \diag(y_i) & \zeros & -\vA^\top \\
        6 \: \diag(y_i) \vA & 6 \: \diag(\vA_i^\top \vx) & \vA & \zeros \\ 
        \zeros & \vA^\top & 30 \: \diag(\frac{1}{x_j}) & 6 \: \vA^\top \diag(y_i) \\
        - \vA & \zeros &  6 \: \diag(y_i) \vA & 6 \: \diag(\vA_i^\top \vx)
        \end{array} \right). 
    \end{equation*}
    As the linear operator $\vA$ has the structure \eqref{matrix_A}, we obtain $\norm{\vA_{:j}}_1 \leq 3$.
    Given any arbitrary vector $(\va \: \vb \: \vc \: \vd)^\top,$ since $x_j \geq 0 \: \forall \: j$, we note that 
    \begin{align*}
        \va \xi \va^\top &= \sum_{j} \frac{30 a_j^2}{x_j} \geq \sum_{j} \norm{\vA_{:j}}_1  \frac{10 a_j^2}{x_j} = \sum_{i,j} A_{i,j} \frac{10 a_j^2}{x_j}, \\
        \vc \xi \vc^\top &= \sum_{j} \frac{30 c_j^2}{x_j} \geq \sum_{j} \norm{\vA_{:j}}_1  \frac{10 c_j^2}{x_j} = \sum_{i,j} A_{i,j} \frac{10 c_j^2}{x_j}.
    \end{align*}
    Therefore expand the terms and simplify $(\va \: \vb \: \vc \: \vd) \: \xi \: (\va \: \vb \: \vc \: \vd)^\top$ as follows
    \begin{align*}
        &(\va \: \vb \: \vc \: \vd) \: \xi \: (\va \: \vb \: \vc \: \vd)^\top \\ 
        &\geq \sum_{i,j} A_{ij} \left(\frac{10a_j^2}{x_j} + 6b_i^2x_j + \frac{10c_j^2}{x_j} + 6 d_i^2x_j + 12 a_j b_i y_i + 12 c_j d_i y_i - 2a_j d_i + 2c_j b_i \right).
    \end{align*}
    Note that as $y_i \in [-1,1],$ we have
    \begin{equation*}
        \frac{9a_j^2}{x_j} + 12 a_j b_i y_i + 4 b_i^2x_j \geq 0,\: \frac{9c_j^2}{x_j} + 12 c_j d_i y_i + 4 d_i^2x_j \geq 0.
    \end{equation*}
    So we obtain
    \begin{align*}
        &(\va \: \vb \: \vc \: \vd) \: \xi \: (\va \: \vb \: \vc \: \vd)^\top \\
        &\geq \sum_{i,j} A_{ij} \left(\frac{a_j^2}{x_j} + 2 b_i^2x_j + \frac{c_j^2}{x_j} + 2 d_i^2x_j - 2a_j d_i + 2c_j b_i \right) \\
        &= \sum_{i,j} A_{ij} \left( \left(\frac{a_j^2}{x_j} + d_i^2x_j - 2a_j d_i \right) + \left(b_i^2x_j + \frac{c_j^2}{x_j} + 2c_j b_i\right) + b_i^2x_j + d_i^2x_j \right) \\
        &\geq 0,
    \end{align*}
    since $x_j \geq 0$. Hence $\xi \succeq 0$ as desired. 
\end{proof} 

\subsection{Proof of Lemma \ref{T}}
By \citep[Lemma 2]{Jambulapati-2019-Direct}, we can evaluate the proposed algorithm with output $(\bar{\vx}, \bar{\vy}) \in \mathcal{X} \times \mathcal{Y}$ by the duality gap 
\begin{equation} \label{duality_gap}
    \max_{\vy \in \mathcal{Y}} F(\bar{\vx},\vy) - \min_{\vx \in \mathcal{X}} F(\vx, \bar{\vy}) \leq \varepsilon. 
\end{equation}
As we want to obtain the convergence guarantees in terms of duality gap \eqref{duality_gap}, we consider the following lemma which has been stated and proven in \citep[Corollary 1]{Jambulapati-2019-Direct}:
\begin{lemma} \label{regret_bound}
    Suppose that the proximal steps in General Dual Extrapolation Algorithm \ref{alg:General_Dual_Extrapolation} have an additive error $\varepsilon'$ during implementation and the regularizer $r$ is $\kappa$-area-convex with respect to the gradient operator $g$. Suppose for some $\vu \in \mathcal{Z}, \Theta \geq r(\vu) - r(\Bar{\vz})$. Then the output $\Bar{\vw}$ of Algorithm 
    \ref{alg:General_Dual_Extrapolation} satisfies
    \begin{equation*}
        \inner{g(\Bar{\vw})}{\Bar{\vw}-\vu} \leq \frac{2 \kappa \Theta}{T} + \varepsilon'.
    \end{equation*}
\end{lemma}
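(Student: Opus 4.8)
The plan is to obtain Lemma~\ref{regret_bound} as a direct instantiation of \citep[Corollary 1]{Jambulapati-2019-Direct}. That corollary applies whenever (i) $g$ is the (monotone) gradient operator of a convex--concave saddle objective, and (ii) the regularizer $r$ is $\kappa$-area-convex with respect to $g$. Hypothesis (i) holds because $F$ in \eqref{bilinear_objective} is bilinear, so $g(\vx,\vy)=(\vd+23\norm{\vd}_\infty\vA^\top\vy,\,-23\norm{\vd}_\infty(\vA\vx-\vb))$ is affine and monotone; hypothesis (ii) is exactly Lemma~\ref{9_area_convex} with $\kappa=9$. Taking $\bar{\vz}$ to be the minimizer of $r$ over $\mathcal{Z}$ and any $\vu\in\mathcal{Z}$ with $\Theta\ge r(\vu)-r(\bar{\vz})$, their result yields $\inner{g(\bar{\vw})}{\bar{\vw}-\vu}\le 2\kappa\Theta/T+\varepsilon'$, which is the claim.

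For self-containedness I would also reproduce the underlying argument. First, since $F$ is bilinear one has $\inner{g(\vz)-g(\vz')}{\vz-\vz'}=0$ for all $\vz,\vz'$, so $\inner{g(\vz)}{\vz-\vu}=\inner{g(\vu)}{\vz-\vu}$ is affine in $\vz$ and hence $\inner{g(\bar{\vw})}{\bar{\vw}-\vu}=\tfrac1T\sum_{t=0}^{T-1}\inner{g(\vw^t)}{\vw^t-\vu}$ exactly (monotonicity of $g$ alone already gives the inequality ``$\le$'', which would suffice). It therefore remains to bound the running sum. For each $t$ I would expand the variational inequalities characterizing the two proximal steps $\vz^t=\mathtt{prox}^r_{\bar{\vz}}(\vs^t)$ and $\vw^t=\mathtt{prox}^r_{\bar{\vz}}(\vs^t+\tfrac1\kappa g(\vz^t))$ --- each solved only to additive accuracy $\varepsilon'$ --- combine them with the dual-state update $\vs^{t+1}=\vs^t+\tfrac1{2\kappa}g(\vw^t)$ and the three-point identity $\inner{\nabla r(\va)-\nabla r(\vb)}{\vc-\va}=V^r_{\vb}(\vc)-V^r_{\va}(\vc)-V^r_{\vb}(\va)$. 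This yields, per iteration, a bound consisting of Bregman-divergence terms that will telescope, plus a cross term involving $g(\vw^t)-g(\vz^t)$, plus the error $\varepsilon'$. The decisive move is to apply the area-convexity inequality of Definition~\ref{defn:area_convex} to the triple formed by $\vz^t$, $\vw^t$, and the relevant prox center: this exactly absorbs the cross term into $\kappa$ times a difference of regularizer values, matching the $1/\kappa$ and $1/(2\kappa)$ step sizes in Algorithm~\ref{alg:General_Dual_Extrapolation}. Summing over $t=0,\dots,T-1$, the divergence terms telescope to $V^r_{\bar{\vz}}(\vu)\le r(\vu)-r(\bar{\vz})\le\Theta$ (using that $\bar{\vz}$ minimizes $r$, so $\inner{\nabla r(\bar{\vz})}{\vu-\bar{\vz}}\ge0$), the cross terms cancel, and dividing by $T$ leaves $2\kappa\Theta/T$ plus the accumulated errors, which collapse to the single additive $\varepsilon'$.

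The main obstacle is precisely this cross-term cancellation: one must pick the test point in the area-convexity inequality so that, once the two proximal variational inequalities and the $\tfrac1{2\kappa}$-scaled state update are added together, every appearance of $g(\vz^t)$ and $g(\vw^t)$ other than the target $\inner{g(\vw^t)}{\vw^t-\vu}$ is dominated by terms of the form $\kappa\big(\sum_i r(\cdot)-3r(\text{mean})\big)$ that subsequently telescope. A secondary subtlety is propagating the per-step inexactness through the variational inequalities so that $\varepsilon'$ enters the final bound only additively and with the correct constant, rather than being amplified by $\kappa$ or by the iteration count $T$.
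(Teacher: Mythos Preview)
Your approach matches the paper's: the paper does not prove this lemma at all but simply states that it ``has been stated and proven in \citep[Corollary 1]{Jambulapati-2019-Direct},'' which is exactly what your first paragraph invokes. One small remark: the lemma is stated as a generic black-box result (with $\kappa$-area-convexity assumed as a hypothesis), so verifying the POT-specific instantiation via Lemma~\ref{9_area_convex} in your first paragraph is unnecessary here---those checks belong where the lemma is \emph{applied}, not in its proof---but your additional self-contained sketch in the second and third paragraphs goes beyond what the paper provides and is a reasonable outline of the argument in \citep{Jambulapati-2019-Direct}.
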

From this lemma, with a chosen additive error $\varepsilon' = \dfrac{\varepsilon}{2}$, if the aim is $\inner{g(\Bar{\vw})}{\Bar{\vw}-\vu} \leq \varepsilon$, then we want to have $\dfrac{2 \kappa \Theta}{T} \leq \dfrac{\varepsilon}{2}$. So $T$ is required to be $\left\lceil \dfrac{4 \kappa \Theta}{\varepsilon} \right\rceil$. Next, we look for a suitable $\Theta$.
\begin{lemma} \label{T}
    Lemma \ref{regret_bound} will apply to $\forall \: \vu \in \mathcal{Z}$ if $\Theta = 60 \norm{\vd}_\infty \log (n) + 6 \norm{\vd}_\infty$.
\end{lemma}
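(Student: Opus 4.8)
The plan is to reduce the claim to bounding the range (oscillation) of the regularizer $r$ over $\mathcal{Z} = \mathcal{X}\times\mathcal{Y} = \Delta_{n^2+2n}\times[-1,1]^{2n+1}$. Since $\bar{\vz}$ is by construction the minimizer of $r$ over $\mathcal{Z}$, for every $\vu\in\mathcal{Z}$ we have $r(\vu) - r(\bar{\vz}) \le \sup_{\mathcal{Z}} r - \inf_{\mathcal{Z}} r$; so it suffices to show this quantity is at most $\Theta = 60\norm{\vd}_\infty\log n + 6\norm{\vd}_\infty$, and then Lemma~\ref{regret_bound} applies with this single $\Theta$ simultaneously for all $\vu$. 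Writing $m = n^2+2n$, I would split $r$ from \eqref{regularizer} into the negative-entropy term $20\norm{\vd}_\infty\inner{\vx}{\log\vx}$ and the bilinear term $2\norm{\vd}_\infty\,\vx^\top\vA^\top(\vy^2)$, and estimate each over its respective domain.

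For the entropy term, I would invoke the standard fact that on the simplex $\Delta_m$ the negative entropy $\inner{\vx}{\log\vx}$ lies in $[-\log m,\,0]$ (minimum at the uniform vector, maximum $0$ at a vertex with the convention $0\log0=0$), contributing an oscillation of at most $20\norm{\vd}_\infty\log m$. For the bilinear term I would first observe it is nonnegative: $\vy^2\in[0,1]^{2n+1}$ entrywise, $\vA$ has nonnegative entries (the OT edge-incidence block $\vA'$ with appended identity blocks and a row of ones), and $\vx\ge\zeros$. For the upper bound I would write $\vx^\top\vA^\top(\vy^2) = (\vA\vx)^\top\vy^2 \le \norm{\vA\vx}_1\,\norm{\vy^2}_\infty \le \norm{\vA}_{1\rightarrow 1}\,\norm{\vx}_1 = 3$, using $\norm{\vx}_1 = 1$ on the simplex, $\norm{\vy^2}_\infty\le1$, and the fact — already flagged in the DE setup — that $\norm{\vA}_{1\rightarrow 1}$, the largest column $\ell_1$-norm of $\vA$, equals $3$. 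Hence this term contributes an oscillation of at most $6\norm{\vd}_\infty$.

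Adding the two bounds gives $\sup_{\mathcal{Z}}r - \inf_{\mathcal{Z}}r \le 20\norm{\vd}_\infty\log(n^2+2n) + 6\norm{\vd}_\infty$. The final step is to absorb $\log(n^2+2n)$ into $3\log n$ via $n^2+2n\le n^3$ (valid for $n\ge2$), which produces exactly $\Theta = 60\norm{\vd}_\infty\log n + 6\norm{\vd}_\infty$, establishing $\Theta \ge r(\vu)-r(\bar{\vz})$ for all $\vu\in\mathcal{Z}$. As a sanity check one notes that $\bar{\vz}$ is the uniform point paired with $\zeros$, so in fact $r(\bar{\vz}) = -20\norm{\vd}_\infty\log m$ exactly, and the argument can be run through this explicit value rather than through the generic range bound.

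I do not anticipate a genuine obstacle here; the proof is a short chain of elementary estimates. The two places demanding care are: (i) the sign of the $\vy^2$ term — this is exactly where nonnegativity of $\vA$ and of $\vx$ is used, so I would spell out the block structure of $\vA$; and (ii) faithfully tracking the multiplicative constants, namely the overall $2\norm{\vd}_\infty$ prefactor in $r$ and the operator-norm value $\norm{\vA}_{1\rightarrow1}=3$ (which is $3$ for the POT operator rather than $2$ as in the OT analysis of \citep{Jambulapati-2019-Direct}) — getting these exactly right is what yields the constant $60$.
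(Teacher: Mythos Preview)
Your proposal is correct and follows essentially the same route as the paper: bound $r(\vu)-r(\bar{\vz})$ by the oscillation of $r$ over $\mathcal{Z}$, split into the entropy and bilinear parts, use that the negative entropy on $\Delta_{n^2+2n}$ has range $\log(n^2+2n)$ and that $\vx^\top\vA^\top(\vy^2)\in[0,3]$, and finish with $n^2+2n\le n^3$ for $n\ge2$. The only cosmetic difference is the direction in which H\"older is applied to the bilinear term (you bound $\norm{\vA\vx}_1\norm{\vy^2}_\infty$, the paper bounds $\norm{\vx}_1\norm{\vA^\top(\vy^2)}_\infty$), but both reach the same value $3$ via the fact that every column of $\vA$ has $\ell_1$-norm at most $3$.
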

\begin{proof}
    Note that it is obvious that we want to choose
    \begin{equation*}
        \Theta \geq \sup_{\vz \in \mathcal{Z}} r(\vz) - \inf_{\vz \in \mathcal{Z}} r(\vz).
    \end{equation*}
    Denote the negative entropy as $h(\vx)$, it is well-known that
    \begin{align*}
        \max_{\vx \in \Delta_{d}} h(\vx) - \min_{\vx \in \Delta_{d}} h(\vx) = \log d.
    \end{align*}
    Next, we consider
    \begin{align*}
        \vx^\top \vA^\top (\vy^2) &= \inner{\vx}{\vA^\top (\vy^2)} \\
        &\leq \norm{\vx}_1 \norm{\vA^\top(\vy^2)}_\infty \\
        &= \norm{\vA^\top(\vy^2)}_\infty\\
        &\leq \norm{\vA^\top \ones_{2n+1}}_\infty \\
        &= \norm{\vA^\top}_\infty \\
        &= 3.
    \end{align*}
    On the other hand, it is obvious that $\vx^\top \vA^\top (\vy^2) \geq 0$. Combining the previous few inequalities, we have the following bound for the range of regularizer $r(\vx, \vy)$
    \begin{equation*}
        \sup_{\vz \in \mathcal{Z}} r(\vz) - \inf_{\vz \in \mathcal{Z}} r(\vz) \leq 2 \norm{\vd}_\infty \left( 10\log (n^2+2n) + 3 \right) = 20 \norm{\vd}_\infty \log (n^2+2n) + 6 \norm{\vd}_\infty.
    \end{equation*}
    It is noteworthy that with $n \geq 2, n^{3} = n \cdot n^2 \geq n \cdot (n+2) = n^2 + 2n$. Hence, we choose 
    \begin{equation*}
        \Theta = 60 \norm{\vd}_\infty \log (n) + 6 \norm{\vd}_\infty \geq 20 \norm{\vd}_\infty \log (n^2+2n) + 6 \norm{\vd}_\infty \geq \sup_{\vz \in \mathcal{Z}} r(\vz) - \inf_{\vz \in \mathcal{Z}} r(\vz).
    \end{equation*}
\end{proof}
\subsection{Proof of Theorem \ref{Complexity_for_AM}}
\begin{proof} [\unskip\nopunct]
    By \citep[Lemma 7, 8]{Jambulapati-2019-Direct}, the suboptimality gap can be reduced by a factor of $1/24$. There are two proximal step, and the second one deals with $\vs^t + \frac{1}{\kappa} g(\vz^t)$ which is bigger than $\vs^t$ in the first step. Hence, we endeavor to bound the number of iterations for the second bigger proximal step of $\vs^t + \frac{1}{\kappa} g(\vz^t)$. We have the proximal objective
    \begin{align*}
        prox^r_{\Bar{\vz}}(\vs^t + \frac{1}{\kappa} g(\vz^t)) &= \underset{\vz \in \mathcal{Z}} \argmin \inner{\vs^t + \frac{1}{\kappa} g(\vz^t)}{\vz} + V_{\Bar{\vz}}^r (\vz) \\
        &= \underset{\vz \in \mathcal{Z}} \argmin \inner{\vs^t + \frac{1}{\kappa} g(\vz^t)}{\vz} + r(\vz) - r(\Bar{\vz}) - \inner{\nabla r(\Bar{\vz})}{\vz - \Bar{\vz}} \\
        &= \underset{\vz \in \mathcal{Z}} \argmin \inner{\vs^t + \frac{1}{\kappa} g(\vz^t)-\nabla r(\Bar{\vz})}{\vz} + r(\vz).
    \end{align*}
    Let $\vz = (\vx,\vy)$, $\bar{\vz} = (\bar{\vx},\bar{\vy})$, $g(\vz) = (g_\vx(\vz),g_\vy(\vz))$, and $\vs^t = (\vs_\vx^t,\vs_\vy^t)$, we obtain
    \begin{align*}
        prox^r_{\Bar{\vz}}\left(\vs^t + \frac{1}{\kappa} g(\vz^t) \right) = &\underset{\vx \in \mathcal{X}, \vy \in \mathcal{Y}} \argmin \inner{\vs^t_\vx -\nabla_\vx r(\bar{\vx}, \bar{\vy}) + \frac{1}{\kappa} g_\vx(\vz^t)}{\vx} + \\
        &\inner{\vs^t_\vy -\nabla_\vy r(\bar{\vx}, \bar{\vy}) + \frac{1}{\kappa} g_\vy(\vz^t)}{\vy} + r(\vx,\vy).
    \end{align*}
    Note that $\nabla_\vy r(\bar{\vx}, \bar{\vy}) = \zeros$. Let $\vx^\ast$ and $\vy^\ast$ be the minimizer of the proximal objective, while $\vx_0$ and $\vy_0$ are the initializations, the suboptimality gap is 
    \begin{equation*}
        \delta = \inner{\vs^t_\vx -\nabla_\vx r(\bar{\vx}, \bar{\vy})+ \frac{1}{\kappa} g_\vx(\vz^t)}{\vx_0-\vx^\ast} + \inner{\vs^t_\vy + \frac{1}{\kappa} g_\vy(\vz^t)}{\vy_0-\vy^\ast} + r(\vx_0,\vy_0) - r(\vx^\ast,\vy^\ast).
    \end{equation*}
    Now the goal is to bound this suboptimality gap. By Lemma \ref{T}, as proximal steps are implemented with $\varepsilon/2$ accuracy, we get that $T = \lceil 36 \Theta / \varepsilon \rceil$ iterations would suffice. We can bound $\vs_\vx^t$ and $\vs_\vy^t$ as follows
    \begin{align*}
        \norm{\vs_\vx^t}_\infty &\leq T \frac{1}{2\kappa}  \norm{g_\vx(\vz)}_\infty \leq \frac{2 \Theta}{\varepsilon} \norm{\vd + 23 \norm{\vd}_\infty \vA^\top \vy}_\infty \leq \frac{2 \Theta}{\varepsilon} \left(\norm{\vd}_\infty + 23 \norm{\vd}_\infty \norm{\vA^\top \vy}_\infty \right) \\
        &\leq \frac{140 \Theta \norm{\vd}_\infty}{\varepsilon}, \\
        \norm{\vs_\vy^t}_1 &\leq T \frac{1}{2\kappa} \norm{g_\vy(\vz)}_1 \leq \frac{2 \Theta}{\varepsilon} \norm{-23 \norm{\vd}_\infty (\vA \vx -\vb)}_1 \leq \frac{46 \Theta \norm{\vd}_\infty}{\varepsilon} (\norm{\vA \vx}_1 + \norm{\vb}_1) \\
        &\leq \frac{276 \Theta \norm{\vd}_\infty}{\varepsilon}.
    \end{align*}
    Similarly, we have 
    \begin{align*}
        &\frac{1}{\kappa}  \norm{g_\vx(\vz)}_\infty \leq \frac{1 }{9} \norm{\vd + 23 \norm{\vd}_\infty \vA^\top \vy}_\infty \leq \frac{1}{9} \left(\norm{\vd}_\infty + 23 \norm{\vd}_\infty \norm{\vA^\top \vy}_\infty \right) \leq \frac{70 \norm{\vd}_\infty}{9}, \\
        &\frac{1}{\kappa} \norm{g_\vy(\vz)}_1 \leq \frac{1}{9} \norm{-23 \norm{\vd}_\infty (\vA \vx -\vb)}_1 \leq \frac{23 \norm{\vd}_\infty}{9} (\norm{\vA \vx}_1 + \norm{\vb}_1) \leq \frac{46 \norm{\vd}_\infty}{3}.
    \end{align*}
    Also note that 
    \begin{align*}
        \norm{\nabla_\vx r(\bar{\vx}, \bar{\vy})}_\infty = \norm{20 \norm{
        \vd}_\infty (1 - \log(n^2+2n)) \ones_{n^2 +2n}}_\infty \leq 20 \norm{
        \vd}_\infty (1 + 3\log(n))
    \end{align*}
    Hence, we can now proceed to bound the suboptimality gap
    \begin{align*}
        \delta &\leq \norm{\vs^t_\vx -\nabla_\vx r(\bar{\vx}, \bar{\vy}) + \frac{1}{\kappa} g_\vx(\vz^t)}_\infty \norm{\vx_0-\vx^\ast}_1 + \norm{\vs^t_\vy + \frac{1}{\kappa} g_\vy(\vz^t)}_1 \norm{\vy_0-\vy^\ast}_\infty + \Theta \\
        &\leq \left(\frac{140 \Theta \norm{\vd}_\infty}{\varepsilon} + 20 \norm{\vd}_\infty (1 + 3\log(n)) + \frac{70 \norm{\vd}_\infty}{9} + \frac{70 \Theta \norm{\vd}_\infty}{\varepsilon} + \frac{46 \norm{\vd}_\infty}{3} \right) \times 2 + \Theta \\
        &= \left( \frac{420 \norm{\vd}_\infty}{\varepsilon} + 1 \right) \Theta + 40 \norm{\vd}_\infty + 120\norm{\vd}_\infty \log(n) + \frac{416 \norm{\vd}_\infty}{9} \\
        &= \left( \frac{420 \norm{\vd}_\infty}{\varepsilon} + 3 \right) \Theta + \frac{668 \norm{\vd}_\infty}{9}. 
    \end{align*}
    Therefore, the number of iterations to obtain the desired output with $\varepsilon/2$-accuracy is 
    \begin{equation*}
        M = 24 \log_{24/23} \left( \frac{2 \delta}{\varepsilon} \right) \leq 24 \log \left( \left( \frac{840 \norm{\vd}_\infty}{\varepsilon^2} + \frac{6}{\varepsilon} \right) \Theta + \frac{1336 \norm{\vd}_\infty}{9} \right) = \mathcal{O} (\log \eta),
        \end{equation*}
    where $\eta = \log n \times \norm{\vd}_\infty \times \varepsilon^{-1}$. Each iteration can be done in $\mathcal{O} (n^2)$ time, so we obtain the desired complexity.
\end{proof}

\subsection{Proof of Theorem \ref{theorem:DE_complexity}}
\begin{proof}
        \label{proof:Complexity_for_AM}
        From Theorem \ref{Complexity_for_AM}, each proximal step in Algorithm \ref{alg:Dual_Extrapolation_POT} requires $\mathcal{O}(n^2 \log \eta)$ time where $\eta = \log n \norm{\vd}_\infty / \varepsilon$. Hence, with 2 proximal steps, Algorithm \ref{alg:Dual_Extrapolation_POT} can be done in the total of $\mathcal{O}(n^2 \log n \norm{\vd}_\infty \log \eta / \varepsilon) = \mathcal{\widetilde O}(n^2 \norm{\vd}_\infty / \varepsilon)$ time. This is $\mathcal{\widetilde O}(n^2 \norm{\vC}_{max} / \varepsilon)$ in term of the original cost matrix $\vC$.
\end{proof}

\section{Further Experiment Setup Details}
In this section, we will provide the more detailed experimental setup for the several applications in the main text. 
\subsection{Computational Resources}

For all numerical experiments in this paper, we use Python 3.9 with all algorithms implemented using \texttt{numpy} version 1.23.2. We run all experiments on a Linux machine with an Intel Xeon Silver 4114 CPU with 40 threads and 128GB of RAM. No GPU is used.

\subsection{APDAGD vs DE}
Each RGB image of size $32 \times 32 \times 3$ 
is transformed into grayscale, downscaled to $10 \times 10$, and flattened into a histogram of $100$ bins, all using the \texttt{scikit-image} library. 
We add $10^{-6}$ to every bin to avoid zeros. For every chosen pair of marginals, we divide each histogram by the maximum mass of the two, giving one marginal with a total mass of $1$ and other possibly less than $1$. For the cost matrix $\vC$, we use the squared Euclidean distance between pixel locations and normalize so $\norm{\vC}_\text{max} = 1$. The total transported mass is $0.8$ times the minimum total mass between the marginals. 

\subsection{Color Transfer}
Each RGB image can be considered a point cloud of pixels, and applying the color palette of one image into another is equivalent to applying the OT transport map to convert one color distribution to another. We choose the source image (Figure \ref{fig:color_transfer_comparison}a, \url{https://flic.kr/p/Lm6gFA}) and the target image (Figure \ref{fig:color_transfer_comparison}b, \url{https://flic.kr/p/c89LBU}) images, taken from Flickr. These images are chosen because they have quite distinct color palettes and different sizes. We follow the setup by \citep{Blondel-2018-Smooth} and implement the experiments as follows. Each image of size $h \times w \times 3$ is considered a collection of $hw$ pixels in 3 dimensions. Each image is quantized using $k$-means with $n = 100$ centroids. We apply $k$-means clustering to find $n$ centroids and assign to each pixel the centroid it is closest to. The image now becomes a color histogram with $n$ bins representing the centroids. For a pair of source and target images, similar to the previous subsection, we divide each histogram by the maximum total mass, yielding $\max\{\norm{\vr}_1, \norm{\vc}_1\} = 1$. Let $\va_1, \ldots, \va_n$ and $\vb_1, \ldots, \vb_n$ be the collections of centroids for the source and target images. The cost matrix is defined as $C_{i, j} = \norm{\va_i - \vb_j}_2^2$. With a partial transport map $\vX \in \RR_{+}^{n \times n}$, every centroid $\va_i$ in the source image is transformed to 
$\hat{\va}_i = (\sum_{j=1}^{n} X_{i, j} \vb_j) / (\sum_{j=1}^{n} X_{i, j})$. All pixels in the source image previously assigned to $\va_i$ are now assigned to $\hat{\va}_i$. The total transported mass is set to $s = \alpha \min \{\norm{\vr}_1, \norm{\vc}_1\}$, where $\alpha \in [0, 1]$. To approximate the POT solution, we set $\varepsilon = 10^{-2}$ and run both Sinkhorn \cite{nhatho-mmpot} and APDAGD for 1,000 iterations. We set $\alpha = 0.1$, corresponding to transporting exactly 10\% of the allowed mass.
We plot the optimality gap $\inner{\vC}{\vX} - f^*$ for each transport map after every iteration in Figure \ref{fig:color_transfer_comparison} (c). As explained earlier, Sinkhorn with the accompanying rounding algorithm by \citep{altschuler2017near} does not produce a primal feasible solution, and its primal gap does not satisfy an error of $\varepsilon = 10^{-2}$ (indicated by the red line).

\subsection{Point Cloud Registration}
Previously, \cite{qin2022rigid} proposed a procedure to find such a transformation using partial optimal transport. Let the two marginal distributions be $\vr = \frac{1}{m} \ones_m$ and $\vc = \frac{1}{n} \ones_n$ and the cost matrix be $C_{i, j} = \norm{\vx_i - \vy_j}_2^2$. Given an optimal transport matrix $\vT \in \RR^{m \times n}$ between these marginals, the rotation matrix $\vR$ and translation vector $\vt$ are obtained by minimizing the energy: 
\begin{align*}
    \min_{\vR, \vt} ~ \sum_{i=1}^{m} \sum_{j=1}^{n} T_{i, j} \norm{\vx_i - (\vR \vy_j + \vt)}_2^2.
\end{align*}
This problem admits the closed-form solution
\begin{align*}
    \vR = \mathbf{V} \vS \mathbf{U}^\top, \quad \vt = \vu_x - \vR \vu_y,
\end{align*}
where $\vu_x = \frac{1}{m} \sum_{i=1}^{m} \vx_i$, $\vu_y = \frac{1}{n} \sum_{j=1}^{n} \vy_j$. The matrices $\mathbf{U}$, $\vS$ and $\mathbf{V}$ are obtained as follows. Let $\hat{\vX} \in \RR^{m \times 3}$ whose $i$th row is $(\vx_i - \vu_x)^\top$. Similar for $\hat{\vY} \in \RR^{n \times 3}$. Then, obtain the singular value decomposition of the matrix $\hat{\vX} \vT^\top \hat{\vX}^\top$ as $\mathbf{U} \mathbf{\Lambda} \mathbf{V}^\top$. Finally, $\vS = \diag(1, 1, \det(\mathbf{V} \mathbf{U}^\top))$.

To find the the matrix $\vT$, we follow the iterative procedure in \cite[Algorithm 1]{qin2022rigid} in which the point cloud $Q$ is updated gradually until convergence. In our implementation, we set the initial value for $\gamma$ (strength of entropic regularization) to 0.004 and the annealing rate of $\gamma$ to $\lambda = 0.99$. With each $\gamma$, we find the OT matrix using Sinkhorn, and the POT matrix using two methods: Sinkhorn and APDAGD. The iterative process ends when the change of $\vR$ in Frobenius norm falls below $10^{-6}$.
\section{Further Numerical Experiments}

\subsection{Run Time for Varying $\varepsilon$}
\begin{figure}
    \centering
    \includegraphics[width=0.5\linewidth]{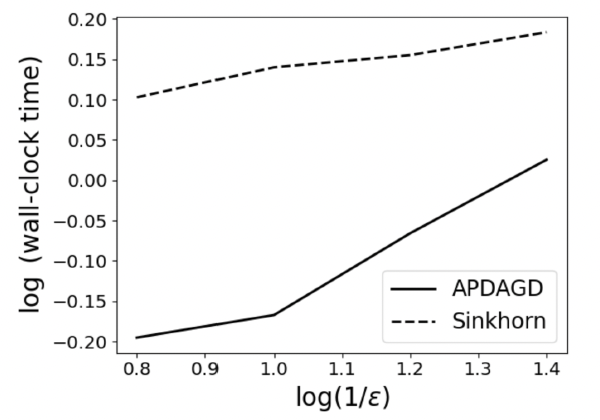}
    \caption{Comparison of wall-clock running time between APDAGD and Sinkhorn for varying $\varepsilon$}
    \label{fig:runtime_for_varying_epsilon}
\end{figure}
With the similar setting to Figure \ref{fig:de_vs_apdagd} of using images in the CIFAR-10 dataset, we provide additional experiments showcasing that APDAGD is better than Sinkhorn in wall-clock time in terms of both aggregate run time cost in Figure \ref{fig:runtime_for_varying_epsilon}. Moreover, for aggregate wall-clock time, we note that previous works showed that gradient methods and especially APDAGD are practically faster than Sinkhorn \cite[Figure 1]{Dvurechensky-2018-Computational}. This claim is further supported by Figure \ref{fig:runtime_for_varying_epsilon} since it reproduce the results from \cite[Figure 1]{Dvurechensky-2018-Computational} which compare the runtime of Sinkhorn and APDAGD in the context of OT. 
\subsection{Revised Sinkhorn}
We again utilize the same setting Figure \ref{fig:de_vs_apdagd} and Run Time for Varying $\varepsilon$ with CIFAR-10 dataset. On the left, we empirically verify our theoretical bounds in Theorem \ref{contraint_violation}, by showing that increasing $A$ potentially lead to a better feasibility of Sinkhorn. For the figure on the right, we show that the increase of $A$ to a sufficient size can lead to more number of iterations to convergence of Sinkhorn. This is also suggested by the worsened theoretical complexity of revised Sinkhorn (Theorem \ref{them:revised_sinkhorn_complexity}).
\begin{figure}
    \centering
    \includegraphics[width=0.8\linewidth]{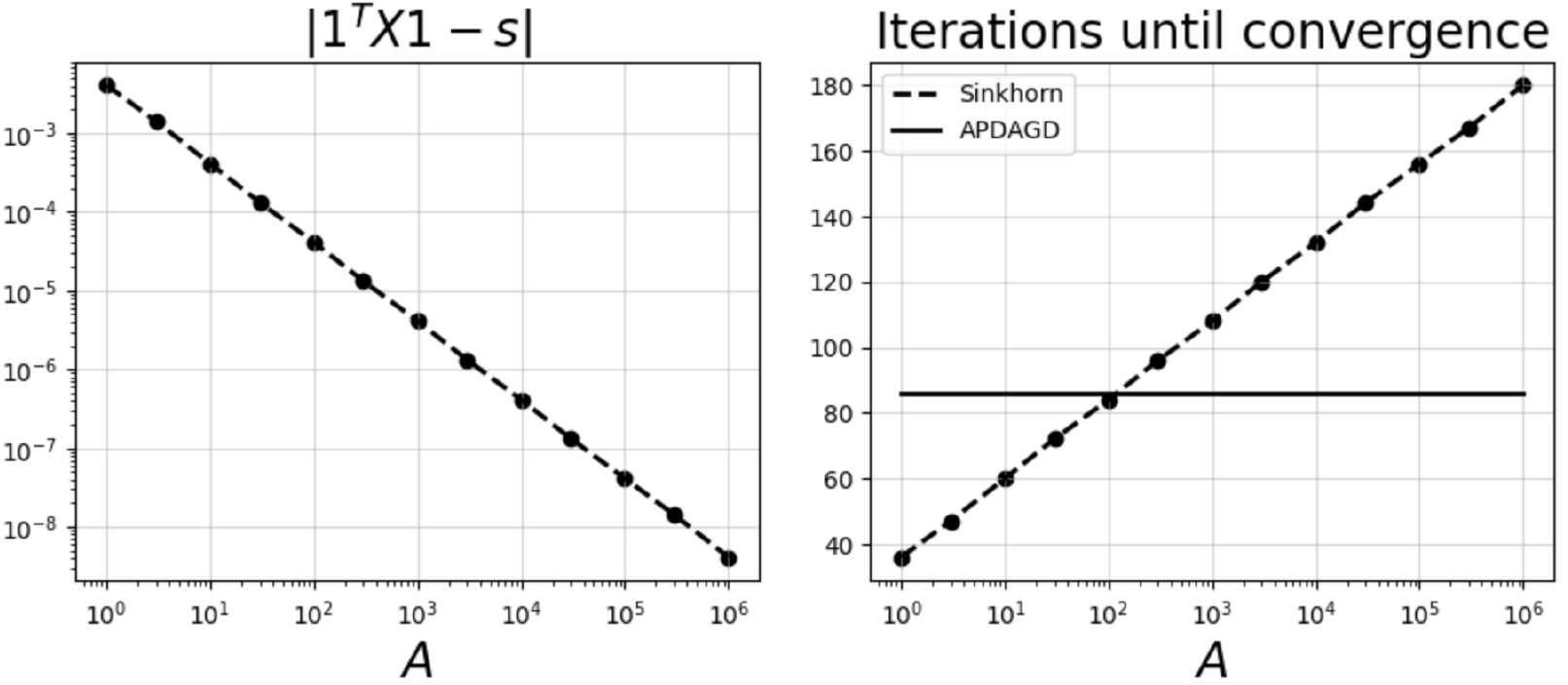}
    \caption{Left: Sinkhorn constraint violation with respect to $A$. Right: Number of iterations for both APDAGD and Sinkhorn until a pre-defined suboptimality is achieved.}
    \label{fig:revised_sinkhorn}
\end{figure}

\subsection{Domain Adaptation}
In our setting, the source and target datasets contain different numbers of data points ($N_s = 300$ and $N_t = 400$, respectively). For OT, we use Sinkhorn to approximate the OT matrix of size 300 $\times$ 400, then transform the source examples according to Courty et al. (2017). For POT, we use K-means clustering to transform the target domains to a histogram of $300$ bins. The two unbalanced marginals are then normalized so that $\left\| r \right\|_1 = 0.75, \left\| c \right\|_1 = 1.0$, and we set $s = 0.999 \times \min\{ \left\| r \right\|_1, \left\| c \right\|_1 \}$ then use APDAGD to find the POT matrix between these two unbalanced marginals. The source examples are transformed similarly. Finally, for both methods (OT and POT), we train a support vector machine with the radial basis kernel (parameter $\sigma^2 = 1$) using the transformed source domain. In the Figure \ref{fig:POT_vs_OT} we report the accuracy on the target domain using 20,000 test examples. All hyperparameters are set to be equal.

Here we implement and compare Domain Adaptation with OT to Domain Adaptation with POT, which is calculated with two different algorithms Sinkhorn (infeasible rounding) and APDAGD (with our novel \textsc{Round-POT}). In \citep{courty2017joint}, the authors presented an OT-based method to transform a source domain to a reference target domain such that the their densities are approximately equal. We present a binary classification setting involving the "moons" dataset. The source dataset is displayed as colored scatter points. The target dataset comes from the same distribution, but the points go to a rotation of 60 degrees (black scatter points), representing a domain covariate shift. Additional experimental details are further described in Further Experiments Setup Details section in Appendix. 
To summarize the results, POT offers much more practicality as it does not require normalizing both marginals to have a sum of 1 in contrast to OT. More importantly, POT avoids bad matchings to outliers due to the flexibility in choosing the amount of mass transported, resulting in a higher accuracy than OT. Furthermore, linking back to Remark \ref{remark:violation}, the infeasibility Sinkhorn degrades the practical performance of POT for Domain Adaption and consequently leads to a worse accuracy compared APDAGD. This is because the optimal mapping between the source and target domains requires the exact fraction of mass to be transported which APDAGD fulfills practically in contrast to Sinkhorn.  

\begin{figure}
    \centering
    \includegraphics[width=1\linewidth]{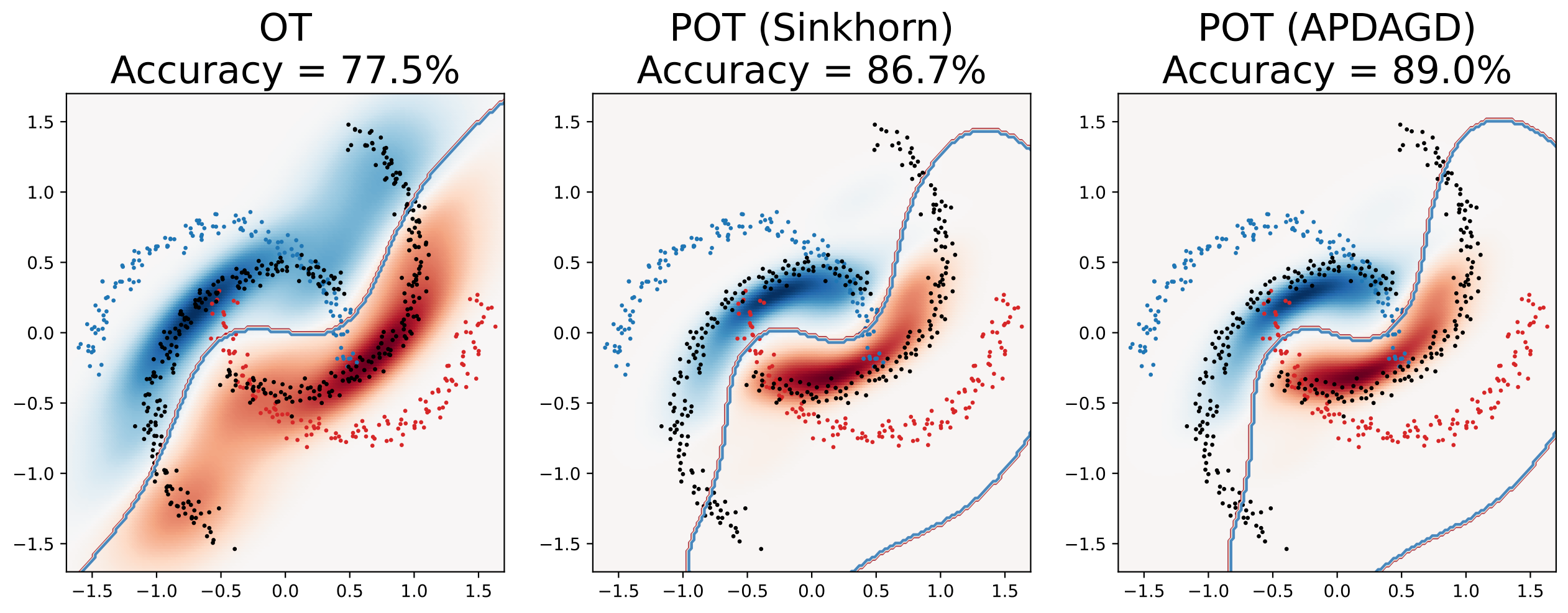}
    \caption{Domain adaptation with OT and with POT. POT offers a flexibility in choosing the mass transported and helps avoid the need to normalize both marginals to have a sum of $1$, resulting in better complexity than OT. APDAGD demonstrates better accuracy in the novel domain than Sinkhorn, which suffers from infeasibility.}
    \label{fig:POT_vs_OT}
\end{figure}

\subsection{Synthetic Data}
\begin{figure}[ht]
    \centering
    \includegraphics[width=1\linewidth]{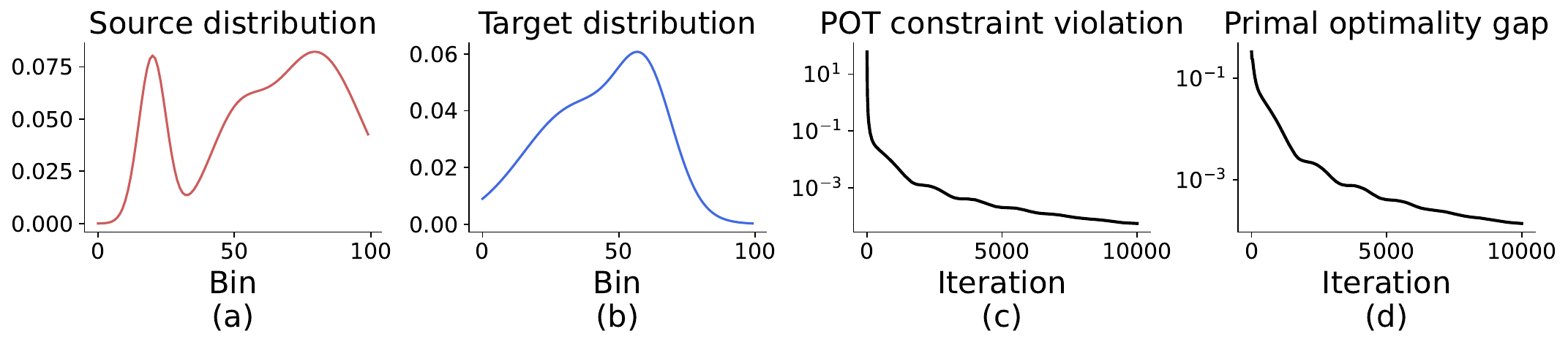}
    \caption{Convergence of APDAGD on a synthetic setting.}
    \label{fig:apdagd_synthetic}
\end{figure}

In this section we design a POT setting and evaluate the convergence of APDAGD. The two marginals are unnormalized Gaussian mixtures. We then discretize them into histograms of $n = 100$ bins each and perform normalization so that $\norm{\vr}_1 = 5$, $\norm{\vc}_1 = 3$. The total transported mass is set to $s = 0.9 \min\{ \norm{\vr}_1, \norm{\vc}_1 \} = 2.7$, and the cost matrix represents squared Euclidean distances between every pair of bins in the source and target histograms. In other words, $C_{i, j} = (i - j)^2$ for all $i, j = 1, \ldots, n$. Finally, we divide $\vC$ by its maximum entry to have $\norm{\vC}_\text{max} = 1$.

We use \texttt{cvxpy} with the GUROBI backend to solve the linear problem \eqref{prob:pot_with_pq}, and denote by $f^*$ the optimal objective value. This will be used to calculate the primal optimality gap.

The approximate POT solution is solved using APDAGD, where we set the additive suboptimality error $\varepsilon$ to $10^{-3}$. We run APDAGD for 10,000 iterations and keep track of two errors. The first error is constraint violation, equal to $\norm{\vX \ones + \vp - \vr}_1 + \norm{\vX^\top \ones + \vq - \vc}_1 + \lvert \ones^\top \vX \ones - s \rvert$ where $\vX$, $\vp$ and $\vq$ are part of the solution $\vx_k$ found after each iteration. The second error is the optimality gap for the primal objective. To calculate this, we need to round the solution so that it becomes primal feasible. We use our rounding algorithm described in Rounding Algorithm Section to round $\vX$, $\vp$, and $\vq$ after each iteration, giving us $\bar{\vX}$, $\bar{\vp}$ and $\bar{\vq}$. The primal gap then is $\inner{\vC}{\bar{\vX}} - f^*$.

\subsection{Scalability of APDAGD}
Given the complexity claims in the paper, it is important to show how running time scales with $n$, the number of supports in each marginal. In the below Figure \ref{fig:scalability}, we show the wall-clock time of running APDAGD to solve the POT problem between two Gaussian mixtures (shown in the end of the appendix). We fix the error tolerance to $\varepsilon = 10^{-1}$ and vary $n$ in the set $\{ 10, 30, 100, 300, 1000, 3000, 10000 \}$. The figure plots wall-clock time in seconds against $n$. The expected slope of the best-fit line between $\log(\text{time})$ and $\log(n)$ should be at most 2.5, as the complexity is $\widetilde{O}(n^{2.5} / \varepsilon)$. We find that the slope is $2.19$, which is consistent with the upper bound. Furthermore, the correlation is statistically significant.

\begin{figure}
    \centering
    \includegraphics[width=0.4\linewidth]{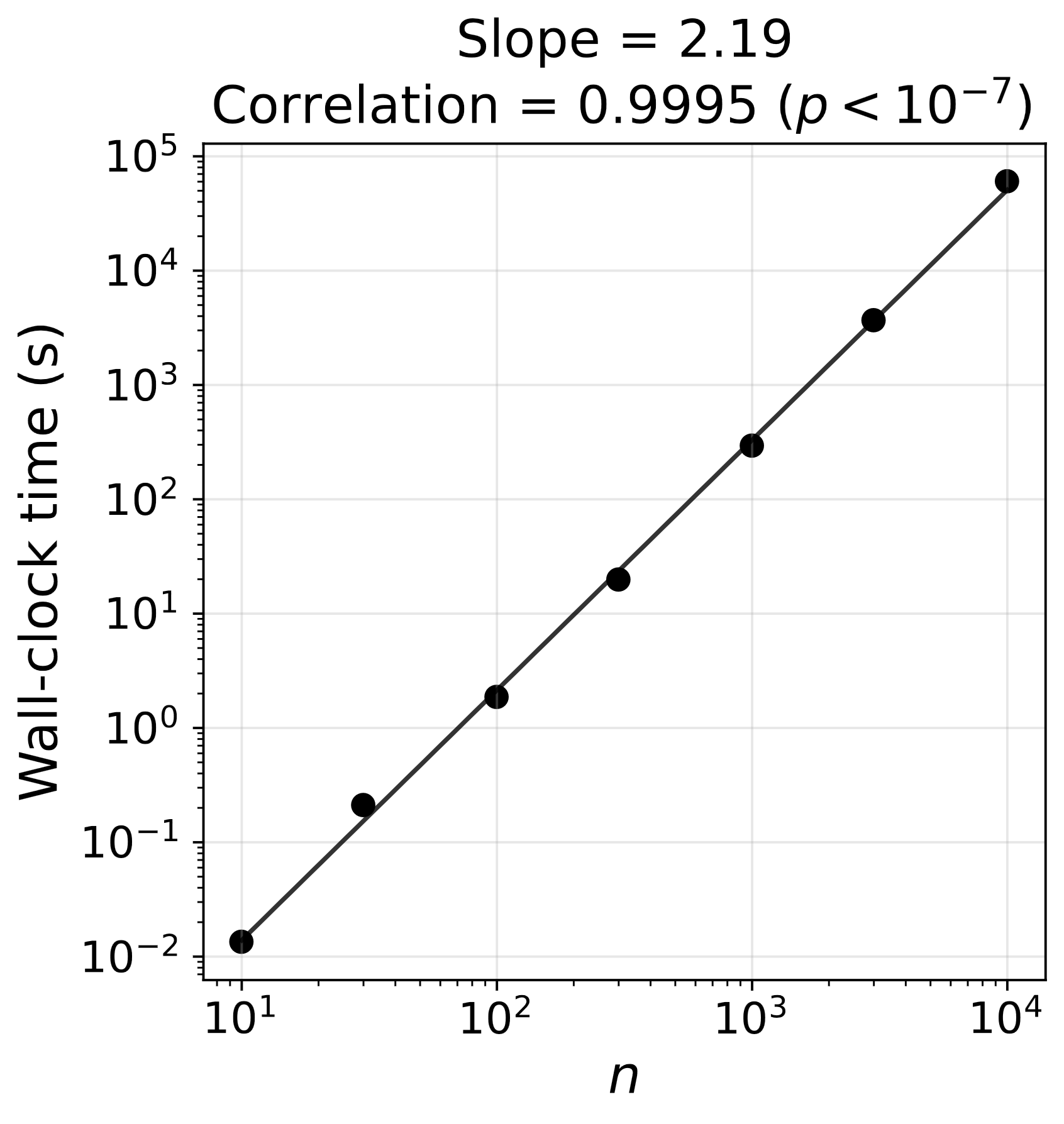}
    \caption{Wall-clock time of solving POT with APDAGD against $n$}
    \label{fig:scalability}
\end{figure}

\newpage

\end{document}